\documentclass[11pt]{article}

\PassOptionsToPackage{hyperfootnotes=false}{hyperref}
\usepackage[margin=1in]{geometry}
\usepackage{amsmath,amssymb,amsfonts,mathtools,amsthm}
\usepackage{bm}
\usepackage{comment}
\usepackage{enumitem}
\usepackage{microtype}
\usepackage{array}
\usepackage{booktabs}
\usepackage{tabularx}
\usepackage{makecell}
\usepackage{threeparttable}
\usepackage{longtable}
\usepackage[table]{xcolor}
\usepackage{placeins}
\usepackage{float}
\usepackage{graphicx}
\usepackage{natbib}
\usepackage{authblk}
\usepackage[normalem]{ulem}
\usepackage[colorlinks=true,citecolor=blue,linkcolor=blue,urlcolor=blue]{hyperref}

\allowdisplaybreaks[2]
\setlist{topsep=3pt,itemsep=2pt,parsep=1pt}
\newcolumntype{L}[1]{>{\raggedright\arraybackslash}p{#1}}
\newcolumntype{C}[1]{>{\centering\arraybackslash}p{#1}}
\newcolumntype{Y}{>{\raggedright\arraybackslash}X}
\newcommand{\WD}{\mathrm{WD}}

\DeclareMathOperator{\Rank}{rank}
\DeclareMathOperator{\SRank}{srank}

\DeclareMathOperator{\diag}{diag}
\DeclareMathOperator{\Diag}{Diag}
\DeclareMathOperator{\softmax}{softmax}

\numberwithin{equation}{section}
\renewcommand{\paragraph}[1]{\par\smallskip\noindent\textit{#1.}\ }

\theoremstyle{plain}
\newtheorem{theorem}{Theorem}[section]
\newtheorem{proposition}[theorem]{Proposition}
\newtheorem{lemma}[theorem]{Lemma}
\newtheorem{corollary}[theorem]{Corollary}
\theoremstyle{definition}
\newtheorem{definition}[theorem]{Definition}
\newtheorem{assumption}[theorem]{Assumption}
\theoremstyle{remark}

\newenvironment{proofidea}{\par\noindent\textbf{Proof idea.}\ }{\hfill$\blacksquare$\par\smallskip}
\newcommand{\proofat}[1]{\par\noindent\textit{Proof.} Deferred to Appendix~\ref{#1}.\par\smallskip}

\begin{document}

\title{Deciphering Two Training Clocks in Grokking via Deep Linear Network Theory with Conditional ReLU Reduction}

\author[1,2]{Hu Tan}
\author[3]{Kuo Gai}
\author[1,2,4,*]{Shihua Zhang}

\affil[1]{State Key Laboratory of Mathematical Sciences, Academy of Mathematics and Systems Science, Chinese Academy of Sciences, Beijing 100190, China}
\affil[2]{School of Mathematical Sciences, University of Chinese Academy of Sciences, Beijing 100049, China}
\affil[3]{Shanghai Institute for Mathematics and Interdisciplinary Sciences (SIMIS), Shanghai, China}
\affil[4]{Key Laboratory of Systems Health Science of Zhejiang Province, School of Life Science, Hangzhou Institute for Advanced Study, University of Chinese Academy of Sciences, Chinese Academy of Sciences, Hangzhou 310024, China}
\affil[ ]{
%Emails: \texttt{tanhu2020@amss.ac.cn}; \texttt{gaikuo@simis.cn}; 
$^*$Corresponding author: \texttt{zsh@amss.ac.cn}}

\date{}

\maketitle
\vspace{-1.6em}

\begin{abstract}
Grokking suggests that fitting the training data and learning a simple underlying rule may occur on different time scales. We formalize this phenomenon by separating the fast decay of the classification loss from the slower simplification of the learned representation, and we call the resulting pair of stopping times \emph{two training clocks}. For deep linear networks, we show that a post-margin gap-growth or one-step tail-contraction condition reduces the cross-entropy loss to level $\varepsilon$ on a logarithmic time scale. In contrast, when layerwise weight decay is present, the induced regularization on the end-to-end map can be expressed as a Schatten-type penalty; under a sharp late-time Kurdyka--\L{}ojasiewicz tail, this structural energy closes on a polynomial time scale. The two clocks, therefore, separate fitting from representation simplification. We then explain how the same mechanism can appear in ReLU MLPs. In regions where the activation patterns on the training set remain fixed, the network reduces to a linear model in the active coordinates. In a two-layer ReLU embedding model, chain-rule estimates further show that the classifier head can receive larger effective gradients than the embedding block under controlled downstream norms. This supports a two-stage mechanism in which the classifier fits first, while the representation continues to simplify later. We use modular addition as the main experimental setting. The deep linear theory provides the rigorous core of the analysis. But the ReLU results are formulated as conditional reductions that account for empirical behavior without claiming a global proof for nonlinear training dynamics.
\end{abstract}
\vspace{0.1em}

\noindent\textbf{Keywords:} grokking; time-scale separation; deep linear networks; ReLU networks; implicit regularization; low-rank bias

\vspace{0.9em}
%\noindent{\large\bfseries Significance Statement}\par\smallskip

\noindent\textbf{Significance Statement:} Deep neural networks often appear to have finished learning when their training error is already very small. Yet in grokking, a model may only later discover a simpler rule that works on new data. This paper explains this gap by separating two parts of training: fitting the observed labels and reorganizing the internal representation. We demonstrate in a model that can be mathematically analyzed that these two processes can run on very different time scales. Experiments on modular arithmetic support the same picture in nonlinear networks. The work suggests that extra training can matter because it continues to reshape what the model has learned, not just because it reduces error.

\section{Introduction}
\label{introduction}
In many grokking experiments, a model first reaches nearly perfect training accuracy and only much later reaches high test accuracy. The empirical loss may already be small during this long intermediate period, so the delayed improvement calls for a more precise question: after the training loss has essentially vanished, what is continued training still changing?

Test accuracy records the final predictive outcome, but it compresses two different processes: fitting a classifier to the observed labels and reshaping the representation into a rule-aligned form. The central claim is that these processes run on different training clocks. Classifier fitting can reach effective completion early, while the representation continues to evolve along a slower structural trajectory.

We formalize this separation through the object we call \emph{two training clocks}. The first clock is the \emph{classifier clock}: it records the time needed for the effective classifier to drive the cross-entropy tail below a target level. The second clock is the \emph{representation clock}: it records the time needed for the learned map to simplify, measured by a spectral penalty, a stable-rank proxy, or an equivalent low-dimensionality statistic. Grokking is then interpreted through the mismatch between these two clocks.

Across modular arithmetic, synthetic rule-learning tasks, and small natural-data settings, delayed generalization is most visible when the model can fit the training set early while regularization, initialization, or optimizer bias continues to reshape its internal geometry \citep{power2022grokking,JMLR:v25:22-1228,huang2024unified}; related evidence appears in \citet{liu2022towards} and \citet{liu2022omnigrok}. This recurring pattern points to the temporal separation between fitting and simplification as the main object to isolate (see representative settings in Table~\ref{tab:grokking}).

We analyze deep linear networks trained with cross-entropy loss and layerwise weight decay. This model keeps the three quantities needed for the argument in the same mathematical frame: margins for classification, singular values for representation geometry, and the Schatten-type penalty induced by layerwise weight decay. It discards the global switching behavior of ReLU gates, but it retains the spectral degrees of freedom through which low-rank structure can emerge.

The first result describes the fast clock. Once the effective classifier is in a post-margin regime where the incorrect-class logits keep separating at a definite rate, the cross-entropy tail decreases exponentially, and the time to reach loss level $\varepsilon$ is logarithmic in $1/\varepsilon$. We state this condition explicitly, avoiding a PL shortcut: a fixed positive margin by itself falls short of a uniform PL inequality for cross-entropy, because the softmax Hessian degenerates as the margin grows.

The second result describes the slow clock. In a deep linear network, layerwise weight decay corresponds to a Schatten-$p$ penalty on the end-to-end map, with $p=2/L$. On a late-time smooth spectral stratum, this penalty applies stronger shrinkage to small singular values and therefore biases the representation toward low effective rank. Under a sharp KL-tail condition near the limiting point, the associated energy gap decays on a polynomial clock. This polynomial clock is the quantitative counterpart of the slow representation drift observed during grokking.

\begin{figure}[t!]
    \centering
    \includegraphics[width=0.75\linewidth]{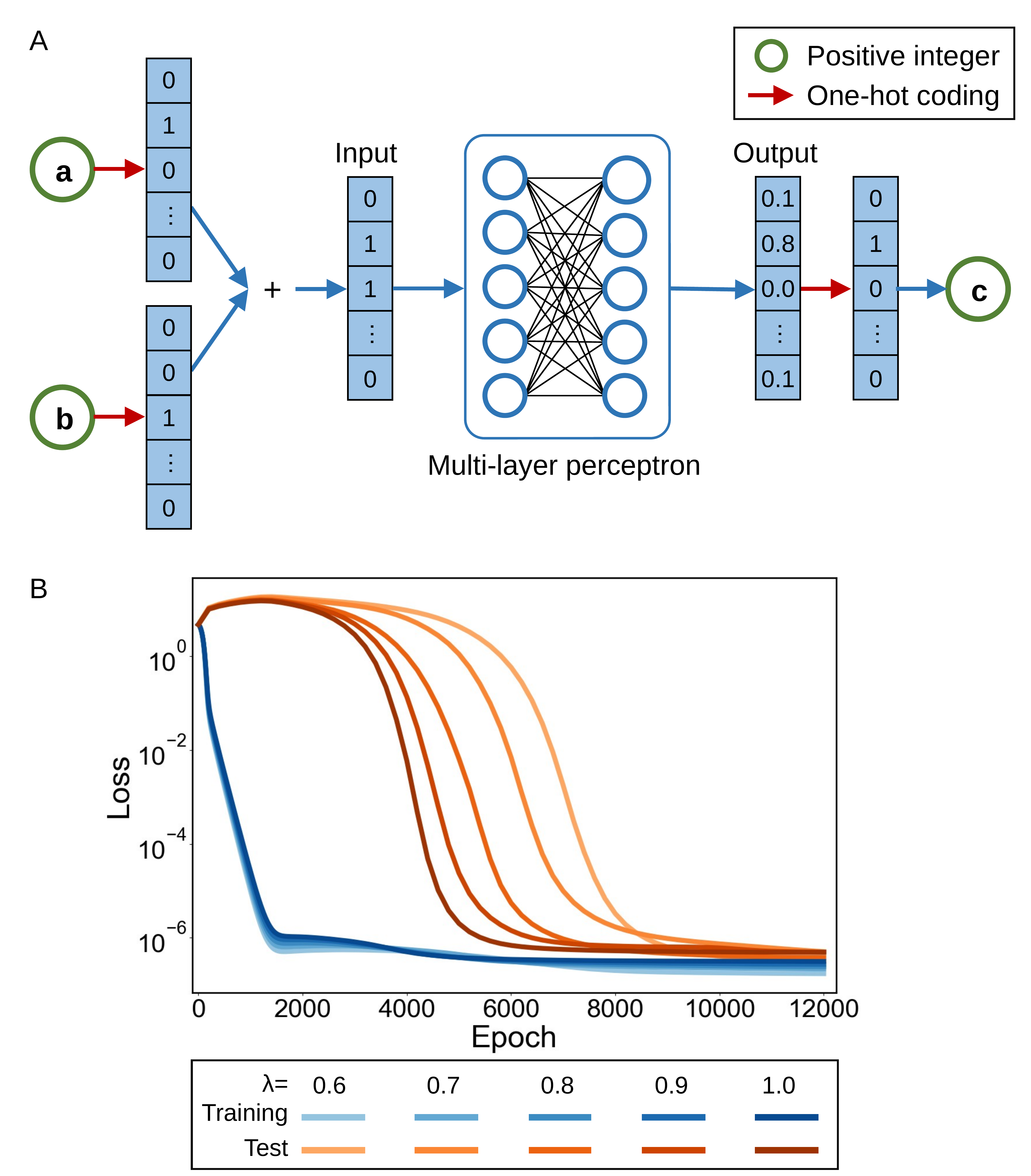}
    \caption{Representative training and test loss curves for a ReLU MLP on modular addition (mod $113$), together with the network architecture. (a)~Architecture sketch: each input pair $(a, b)$ is represented by trainable token embeddings, combined symmetrically, and processed by an MLP with ReLU activations and a softmax layer. (b)~Training and test loss curves for weight decay values $0.6$, $0.7$, $0.8$, $0.9$, and $1.0$ (blue: training; orange: test). The experiments are finite-width and nonlinear; throughout the paper, they serve as qualitative evidence for the time-scale-separation mechanism, while the deep-linear theorems provide the idealized core.}
    \label{fig:grokking-mlp}
\end{figure}

The ReLU part of the paper provides a conditional transfer principle for stabilized nonlinear training windows. Once activation patterns on the training set stabilize, a finite ReLU network becomes an active linear subsystem on those samples. In addition, for a two-layer embedding model, the gradient entering the embedding block carries extra downstream operator-norm factors compared with the classifier-head gradient. These two facts explain how the linear two-clock mechanism can become relevant to the nonlinear modular-addition experiments: the head can fit first, while the active representation continues to simplify later.

\subsection{Main Findings and Scope}
\label{subsec:contrib}
The main theorem-level claim is a time comparison (Table~\ref{tab:claim-scope}). The classifier clock is logarithmic under an explicit post-margin gap-growth or tail-contraction condition. The representation clock is polynomial under a late-time sharp KL-tail condition for the regularized spectral energy. These two statements are proved in the deep-linear core model, where the relevant objects are directly measurable and the assumptions can be stated without hidden nonlinear effects.

The ReLU statements have a different status. The frozen-gate proposition is exact on any time interval in which the activation masks are fixed on the training set. The gradient-hierarchy proposition is also exact as a chain-rule identity, with the head-dominance interpretation requiring bounded downstream norms and nonvanishing hidden features. What remains conditional is the claim that a particular ReLU training run enters such a stabilized active regime. %Table~\ref{tab:claim-scope} summarizes the resulting claim levels.

\subsection{Related Work}
\label{subsec:related-work}
\paragraph{Grokking as delayed generalization}
The phenomenon was first highlighted in algorithmic tasks such as modular arithmetic \citep{power2022grokking}. Subsequent work showed that the delay can be shortened, amplified, or removed by changes in regularization, data size, initialization, and optimization \citep{JMLR:v25:22-1228,varma2023explaining,clauw2024information,huang2024unified}; see also \citet{liu2022towards} and \citet{liu2022omnigrok}. This line of work focuses on the empirical delay between training and test performance; the present paper isolates a finer dynamical object inside that delay, namely the separation between the fitting clock and the representation clock.

\paragraph{Feature learning and embeddings}
Several accounts of grokking emphasize that generalization appears only after feature movement beyond a kernel-like or lazy-training regime \citep{chizat2019lazy,yang2020feature,woodworth2020kernel,kumar2024grokking,mohamadi2024you}. Mechanistic studies of modular arithmetic further identify Fourier, cyclic, circuit-efficiency, or geometric structure that emerges late in training \citep{nanda2023progress,varma2023explaining,musat2024understanding,wang2024progressive,aubry2024transformer}. Recent embedding-centric work shows that useful embeddings can transfer across runs and that the embedding layer itself can be a bottleneck for delayed generalization \citep{xu2025groktransfer,alquabeh2025embedding}. Our viewpoint is complementary: it turns the head-first and representation-later picture into an explicit comparison of convergence clocks.

\paragraph{Implicit regularization and low-rank bias}
A related theoretical literature studies the structural bias of gradient-based training in homogeneous, ReLU, and deep linear models \citep{soudry2018implicit,lyu2019gradient,ji2020gradient,galanti2022implicit,kou2023implicit,min2024early}. Classical and modern work on matrix approximation, matrix factorization, and spectral complexity provides the low-rank language used here \citep{eckart1936approximation,mirsky1960symmetric,candes2009exact,recht2010guaranteed,gunasekar2018implicit,arora2019implicit}. Other work connects weight decay or related regularizers to low-rank structure in linear and nonlinear networks \citep{huh2021low,timor2023implicit,abbasi2025lowrank,bartlett2017spectrally}. We use these tools at a narrower resolution: the goal is to compare how long the classifier and representation components take to reach their respective targets, with low rank used as one tractable structural marker.

\paragraph{Geometry of representations}
A parallel line of work studies representation geometry through neural collapse, class-separation laws, topology, and other structural summaries \citep{papyan2020prevalence,zhu2021geometric,mixon2020neuralcollapse,naitzat2020topology,rieck2019neural,ballester2023topology}. These perspectives are close in spirit to the representation clock, but their main object is usually the geometry attained late in training. The present paper isolates the time needed to approach such a geometry and compares it with the time needed to fit the labels.

\subsection{Assumptions, Limitations, and Roadmap}
\label{subsec:scope}
The paper has two layers of claims. The rate theorems for the two clocks are proved for the deep-linear surrogate under stated late-time assumptions. The nonlinear conclusions are conditional transfer statements for ReLU networks after gate stabilization. We do not prove from first principles that arbitrary ReLU networks must stabilize their gates, nor do we claim that low rank is the only mechanism behind grokking.

Kernel and random-feature viewpoints remain useful for the early phase of training \citep{rahimi2008rf,jacot2018ntk,lee2019wide,chizat2019lazy}, but by construction they suppress late feature motion. Deep linear networks are the cleanest setting in which the loss tail, singular-value dynamics, and weight-decay-induced spectral penalty can be studied together \citep{BaldiHornik1989,saxe2013exact,Kawaguchi2016,LaurentVonBrecht2018,gunasekar2018implicit,arora2019implicit,shang2020unified}. The ReLU bridge then states precisely when this linear analysis can be read as a local model for a nonlinear network.

Section~\ref{sec:preliminaries} defines the two clocks and the spectral quantities used later. Section~\ref{sec:time-scale-separation} proves the fast classifier clock. Section~\ref{sec:wd-low-rank-generalization} develops the slow representation clock and the conditional ReLU bridge. Section~\ref{sec:low-rank-leads-to-generalization} connects low effective dimension to generalization bounds and states the final temporal-mismatch theorem. The appendix begins with a proof index, and each theorem, lemma, proposition, and corollary in the main text is followed by either a proof idea with an appendix pointer or a direct proof pointer.

\section{Preliminaries}
\label{sec:preliminaries}
This section fixes the objects that will be used to compare the two clocks. We first define the clocks themselves, then recall stable rank, the modular-addition geometry used in the figures, the Schatten penalty induced by layerwise weight decay in deep linear networks, and the robustness framework used later to convert low effective dimension into a generalization statement.

\subsection{Two Training Clocks}
\label{subsec:two-clocks}
We need a formal object that separates ``the classifier has fitted'' from ``the representation has simplified.'' Let $A_t$ denote the effective map at training time $t$, and let $\mathcal{L}_{\mathrm{CE}}(A_t)$ be the empirical cross-entropy loss. The classifier clock records the first time at which the loss tail is below a prescribed level:
\begin{equation}\label{eq:classifier-clock}
T_{\mathrm{cls}}(\varepsilon)
:=
\inf\{t:\mathcal{L}_{\mathrm{CE}}(A_t)\le \varepsilon\}.
\end{equation}
This clock measures the fitting stage. A second clock is needed to track when the internal representation becomes simple.

To measure the second motion, let $\mathcal{S}(A_t)$ be a nonnegative structural gap, such as a Schatten-penalty gap $\Psi_p(A_t)-\Psi_p(A_\star)$, a trailing singular-value tail, or a stable-rank proxy. The representation clock is
\begin{equation}\label{eq:representation-clock}
T_{\mathrm{rep}}(\eta)
:=
\inf\{t:\mathcal{S}(A_t)\le \eta\}.
\end{equation}
The central claim of the paper is that grokking can occur when $T_{\mathrm{cls}}(\varepsilon)$ is logarithmic in the target accuracy while $T_{\mathrm{rep}}(\eta)$ is polynomial in the target structural precision.

\subsection{Stable Rank and Effective Dimension}
Let $A\in\mathbb{R}^{m\times n}$ have singular values $s_1(A)\ge s_2(A)\ge\cdots$. The stable rank is the scale-invariant quantity
\begin{equation}\label{eq:stable-rank}
\SRank(A)
:=
\frac{\|A\|_{\mathrm F}^2}{\|A\|_2^2}
=
\frac{\displaystyle\sum_{i\ge 1} s_i(A)^2}{s_1(A)^2}.
\end{equation}
It always satisfies $\SRank(A)\le \Rank(A)$, but unlike rank it is insensitive to a long tail of very small singular values \citep{recht2010guaranteed}. In this paper, low stable rank should be read as an \emph{output-effective} dimensionality statement: the map can be well approximated by a low-dimensional principal subspace. The statement concerns the output-effective map, so individual embedding vectors may still have components outside that subspace.

\subsection{Modular Addition and Cyclic Geometry}
The modular-addition task maps a pair of residues to their sum,
\begin{equation}\label{eq:mod-addition}
f(a,b)=(a+b)\bmod n,
\qquad a,b\in\{0,1,\ldots,n-1\}.
\end{equation}
The target structure is the cyclic group $(\mathbb{Z}_n,+)$. Its characters can be represented by points on the unit circle,
\begin{equation}\label{eq:cyclic-embedding}
k\longmapsto e^{2\pi i k/n},
\qquad k=0,1,\ldots,n-1.
\end{equation}
This two-dimensional cyclic geometry is the target structure the network must discover during training.

\subsection{Schatten Regularization from Layerwise Weight Decay}
The slow clock is driven by a spectral penalty. For a matrix $A$ with singular values $s_i(A)$, the Schatten-$p$ quasi-norm is
\begin{equation}\label{eq:schatten-norm}
\|A\|_{S_p}
=
\left(\sum_{i\ge 1} s_i(A)^p\right)^{\frac{1}{p}}.
\end{equation}
For $p<2$, this penalty favors concentrated singular spectra; for $p<1$ it is nonconvex and must be handled on smooth spectral strata or through limiting-subgradient language.

For a depth-$L$ linear factorization $A=W_L\cdots W_1$, layerwise Frobenius weight decay induces the following end-to-end penalty \citep{shang2020unified}:
\begin{equation}\label{eq:schatten-equivalence}
\|A\|_{S_{2/L}}^{2/L}
=
\frac{1}{L}\min_{W_L\cdots W_1=A}\sum_{\ell=1}^L\|W_\ell\|_{\mathrm F}^2.
\end{equation}
Thus, in the deep-linear surrogate, ordinary layerwise weight decay becomes a Schatten-type penalty on the product map. This is the mathematical source of the representation clock studied in Section~\ref{sec:wd-low-rank-generalization}.

\subsection{Robustness and Generalization}
\label{sec:robust-and-generalization}
The generalization argument later uses the robustness interface of \citet{xu2012robustness}. If a learned representation is effectively low-dimensional, it can be covered by fewer cells; if the loss is stable inside those cells, the empirical loss and population loss are close.

\begin{definition}[Robustness]
\label{def:robustness}
Let $\mathcal{A}$ be a learning algorithm. We say that $\mathcal{A}$ is $(K,\epsilon(\cdot))$-robust if, for every sample set $s\in\mathcal{Z}^n$, there exists a partition $\{C_i\}_{i=1}^{K}$ of $\mathcal{Z}$ such that for any $z,z'\in C_i$,
\[
    \bigl|l(\mathcal{A}_s,z)-l(\mathcal{A}_s,z')\bigr|\le \epsilon(s).
\]
\end{definition}
The definition separates a geometric quantity, the number of cells $K$, from an analytic quantity, the within-cell loss variation $\epsilon(s)$.

\begin{theorem}[Robust Generalization Bound]
\label{thm:robust-and-generalization}
Assume that the loss is bounded by $M$ and that $\mathcal{A}$ is $(K,\epsilon(\cdot))$-robust. Let $s$ be a training set of $n$ i.i.d. samples. Then, for any $\delta>0$, with probability at least $1-\delta$,
\[
    \bigl|\mathcal{L}(\mathcal{A}_s)-l_{\mathrm{emp}}(\mathcal{A}_s)\bigr|
    \le
    \epsilon(s)+M\sqrt{\frac{2K\ln 2+2\ln(1/\delta)}{n}}.
\]
\end{theorem}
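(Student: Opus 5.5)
This is the Xu--Mannor robustness bound, and I would prove it by conditioning on how many samples land in each cell of the partition, then applying a concentration inequality to the resulting multinomial vector. Fix the training set $s=(s_1,\dots,s_n)$ and let $\{C_i\}_{i=1}^K$ be the partition supplied by Definition~\ref{def:robustness}. Write $N_i$ for the number of training points in $C_i$, and let $\mu$ be the data distribution.

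\textbf{Step 1: decompose the population loss.} I decompose
\[
\mathcal{L}(\mathcal{A}_s)=\sum_{i=1}^K \mu(C_i)\,\mathbb{E}\bigl[l(\mathcal{A}_s,z)\mid z\in C_i\bigr]
\]
and insert the empirical weights $N_i/n$. This gives
\[
\bigl|\mathcal{L}-l_{\mathrm{emp}}\bigr|
\le
\Bigl|\sum_i \bigl(\mu(C_i)-\tfrac{N_i}{n}\bigr)\mathbb{E}[l\mid C_i]\Bigr|
+
\Bigl|\sum_i \tfrac{N_i}{n}\,\mathbb{E}[l\mid C_i]-\tfrac1n\sum_{j} l(\mathcal{A}_s,s_j)\Bigr|.
\]

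\textbf{Step 2: bound each term.} For the second term, I group the samples by cell. For each $s_j\in C_i$, robustness gives $|\mathbb{E}[l\mid C_i]-l(\mathcal{A}_s,s_j)|\le \sup_{z\in C_i}|l(\mathcal{A}_s,z)-l(\mathcal{A}_s,s_j)|\le\epsilon(s)$, so this term is at most $\epsilon(s)$. For the first term, the bound $0\le l\le M$ gives at most $M\sum_i|N_i/n-\mu(C_i)|$.

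\textbf{Step 3: concentration for the multinomial vector.} $(N_1,\dots,N_K)$ is multinomial with parameters $(n,\mu(C_1),\dots,\mu(C_K))$. The Bretagnolle--Huber--Carol inequality gives
\[
\Pr\Bigl(\sum_i |N_i/n-\mu(C_i)|\ge\lambda\Bigr)\le 2^K e^{-n\lambda^2/2}.
\]
Setting the right-hand side equal to $\delta$ yields $\lambda=\sqrt{(2K\ln2+2\ln(1/\delta))/n}$, which is exactly the stated bound.

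\textbf{Main obstacle.} The partition may depend on $s$, and then the multinomial argument is not directly valid. I would handle this as Xu and Mannor implicitly do: take the partition to be fixed independently of the sample, for example a fixed covering of $\mathcal{Z}$, and apply Definition~\ref{def:robustness} with that partition. Alternatively, I would note that the result is cited from \citet{xu2012robustness} and adopt their convention. Given that convention, the only real tool is the $L_1$ deviation inequality. That inequality follows from a union bound over the $2^K$ subsets $S\subseteq\{1,\dots,K\}$, applying McDiarmid's inequality to $\sum_{i\in S}(N_i/n-\mu(C_i))$, and using the identity $\sum_i|a_i|=2\max_S\sum_{i\in S}a_i$ when $\sum_i a_i=0$.
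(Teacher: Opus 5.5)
Your proposal is correct and follows the paper's proof essentially step for step. Both use the same deterministic split into a within-cell term, bounded by $\epsilon(s)$ via robustness, and a reweighting term, bounded by $M\sum_i|\mu(C_i)-N_i/n|$, followed by the Bretagnolle--Huber--Carol inequality. Your flagged obstacle is genuine, and your fixed-partition (Xu--Mannor) convention is needed for the statement to hold, not optional: the paper's proof applies the multinomial bound to a partition ``associated with the training set $s$'', as its Definition~\ref{def:robustness} permits, but under that quantifier order every algorithm with $0$--$1$ loss is $(2,0)$-robust (split $\mathcal{Z}$ into the level sets of $l(\mathcal{A}_s,\cdot)$), so a memorizing classifier with zero empirical loss and population loss bounded away from zero would violate the bound for large $n$.
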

\proofat{app:proof:robust-generalization}
The bound will be applied after projecting the representation onto the principal subspace selected by the output map. The stable-rank analysis controls the covering number of that subspace; robustness controls the loss variation inside each cell.

Our empirical running example is modular addition with $p^2$ examples and $p$ classes. The figures use a finite-width ReLU MLP with embeddings, cross-entropy loss, and weight decay. The theorem-level statements are proved either for the deep-linear surrogate or for conditional ReLU reductions after activation patterns have stabilized.

\section{Fast Classifier Clock: Cross-Entropy Tail After Margin Growth}
\label{sec:time-scale-separation}
The first clock asks how quickly the effective classifier can drive the empirical cross-entropy below a target level.  We therefore state the fast-clock result under an explicit post-margin gap-growth or tail-contraction condition.

\subsection{Setup and Logit Gaps}
Consider a $K$-class training set $\{(x_a,y_a)\}_{a=1}^M$, where $x_a\in\mathbb{R}^d$, $y_a\in\{1,\ldots,K\}$, and $\|x_a\|_2\le C$. The effective linear classifier is $A\in\mathbb{R}^{K\times d}$ with rows $w_1^\top,\ldots,w_K^\top$. For sample $a$, the incorrect-class gap against class $m\ne y_a$ is
\begin{equation}\label{eq:gap-definition}
\Gamma_{a,m}(A)
:=
(w_{y_a}-w_m)^\top x_a .
\end{equation}
A large positive gap means that the correct logit exceeds the corresponding incorrect logit.

The sample cross-entropy can be written entirely in terms of these gaps:
\begin{equation}\label{eq:ce-gap-form}
\ell_a(A)
=
\log\left(1+\sum_{m\ne y_a}e^{-\Gamma_{a,m}(A)}\right),
\qquad
\mathcal{L}_{\mathrm{CE}}(A)=\frac{1}{M}\sum_{a=1}^M \ell_a(A).
\end{equation}
This identity is the reason margins are the natural variable for the classifier clock.

\subsection{The Tail Bound}
Once all incorrect-class gaps are large, the loss is controlled by the exponential tail of the softmax.

\begin{lemma}[Cross-entropy tail under a uniform gap]\label{lem:exp-decay}
Suppose that for some $\gamma>0$,
\begin{equation}\label{eq:margin}
\Gamma_{a,m}(A)\ge \gamma
\qquad\text{for every }a\text{ and every }m\ne y_a.
\end{equation}
Then
\[
\ell_a(A)
\le
\log\bigl(1+(K-1)e^{-\gamma}\bigr)
\le
(K-1)e^{-\gamma},
\]
and therefore $\mathcal{L}_{\mathrm{CE}}(A)\le (K-1)e^{-\gamma}$.
\end{lemma}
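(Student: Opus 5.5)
The plan is to argue directly from the gap representation of the sample loss in \eqref{eq:ce-gap-form}, so that the uniform-gap hypothesis \eqref{eq:margin} enters only through monotonicity of the exponential. Fix a sample $a$. Since $\Gamma_{a,m}(A)\ge\gamma$ for every $m\ne y_a$ and $t\mapsto e^{-t}$ is decreasing, each summand satisfies $e^{-\Gamma_{a,m}(A)}\le e^{-\gamma}$. There are exactly $K-1$ indices $m\ne y_a$, so
\[
\sum_{m\ne y_a}e^{-\Gamma_{a,m}(A)}\le (K-1)e^{-\gamma}.
\]
Because $\log(1+\cdot)$ is increasing on $[0,\infty)$, this gives the first inequality
\[
\ell_a(A)\le\log\bigl(1+(K-1)e^{-\gamma}\bigr).
\]

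The second inequality is the elementary bound $\log(1+u)\le u$ for $u\ge 0$, applied with $u=(K-1)e^{-\gamma}$. This bound follows from concavity of $\log$, or equivalently from the fact that $u-\log(1+u)$ vanishes at $0$ and has nonnegative derivative $u/(1+u)$.

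Finally, the estimate for $\ell_a(A)$ does not depend on $a$. Averaging over $a=1,\dots,M$ in the definition of $\mathcal{L}_{\mathrm{CE}}$ therefore gives $\mathcal{L}_{\mathrm{CE}}(A)\le (K-1)e^{-\gamma}$.

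No step is a genuine obstacle. The only point needing care is confirming that the identity \eqref{eq:ce-gap-form} holds. It follows by dividing the numerator and denominator of the softmax probability of $y_a$ by $e^{w_{y_a}^\top x_a}$, which turns $-\log$ of that probability into exactly the stated log-sum-exp of negative gaps. Since the paper states this identity just before the lemma, I take it as given. Note that $\gamma>0$ is not even needed for the inequalities; it matters only for the bound to be small.
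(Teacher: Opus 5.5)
Your proposal is correct and follows the paper's proof essentially step for step. You bound each incorrect-class exponential by $e^{-\gamma}$ to get the $(K-1)e^{-\gamma}$ sum bound, then use monotonicity of $\log(1+\cdot)$ and $\log(1+u)\le u$, and finally average over samples. Your side remarks, namely the derivation of the gap identity and the observation that $\gamma>0$ is not needed for the inequalities themselves, are accurate.
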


\begin{proofidea}
Equation~\eqref{eq:ce-gap-form} expresses the loss as the logarithm of one plus a sum of incorrect-class exponentials. Under~\eqref{eq:margin}, every exponential term is at most $e^{-\gamma}$, and the inequality $\log(1+u)\le u$ gives the claim. The detailed proof is in Appendix~\ref{appendix:proof:lem:exp-decay}.
\end{proofidea}

For later reference, we also record the smoothness property of the loss. The constant below is used only for one-step Taylor estimates, leaving the PL issue separate.

\begin{lemma}[Global smoothness of the softmax loss]\label{lem:lipschitz}
Assume $\|x_a\|_2\le C$ for all samples. Then $\mathcal{L}_{\mathrm{CE}}$ is $L$-smooth in $A$ with
\[
L\le \frac{C^2}{2}.
\]
Equivalently, for all $A,B\in\mathbb{R}^{K\times d}$,
\[
\mathcal{L}_{\mathrm{CE}}(B)
\le
\mathcal{L}_{\mathrm{CE}}(A)
+
\langle\nabla\mathcal{L}_{\mathrm{CE}}(A),B-A\rangle
+
\frac{L}{2}\|B-A\|_{\mathrm F}^2.
\]
\end{lemma}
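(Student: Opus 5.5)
The plan is to compute the Hessian of a single sample loss, bound its operator norm, and then average over samples. Write $z_a = A x_a \in \mathbb{R}^K$ for the logits and $\ell_a(A) = \phi_{y_a}(z_a)$, where $\phi_y(z) = \log\sum_k e^{z_k} - z_y$. The map $A\mapsto Ax_a$ is linear, so the Hessian of $\ell_a$ acting on a direction $\Delta\in\mathbb{R}^{K\times d}$ is
\[
D^2\ell_a(A)[\Delta,\Delta] = (\Delta x_a)^\top H(z_a)\,(\Delta x_a),
\qquad H(z) = \Diag(\pi) - \pi\pi^\top,
\]
where $\pi=\softmax(z)$. The linear term $-z_y$ contributes nothing to the second derivative.

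The key step is a uniform bound on $\lambda_{\max}(H(z))$. For any unit vector $v$, the quadratic form $v^\top H v = \sum_k \pi_k v_k^2 - (\sum_k \pi_k v_k)^2$ is the variance of $v_k$ under the distribution $\pi$. A random variable with values in $[\min v, \max v]$ has variance at most $(\max v - \min v)^2/4$ by Popoviciu's inequality. With $\|v\|_2=1$, the range satisfies $(\max v-\min v)^2 \le 2(v_{\max}^2+v_{\min}^2)\le 2$. Hence $v^\top H v\le 1/2$, so $\lambda_{\max}(H)\le 1/2$.

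Combining this with $\|\Delta x_a\|_2\le \|\Delta\|_2\|x_a\|_2\le \|\Delta\|_{\mathrm F} C$ gives
\[
D^2\ell_a(A)[\Delta,\Delta]\le \tfrac12 C^2\|\Delta\|_{\mathrm F}^2 .
\]
Since $H\succeq 0$, the Hessian is also nonnegative, so its operator norm in the Frobenius geometry is at most $C^2/2$. Averaging over $a$ preserves this bound, giving $L\le C^2/2$ for $\mathcal{L}_{\mathrm{CE}}$.

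The descent inequality then follows from Taylor's theorem with integral remainder along the segment $A+t(B-A)$, $t\in[0,1]$:
\[
\mathcal{L}_{\mathrm{CE}}(B)-\mathcal{L}_{\mathrm{CE}}(A)-\langle\nabla\mathcal{L}_{\mathrm{CE}}(A),B-A\rangle=\int_0^1(1-t)\,D^2\mathcal{L}_{\mathrm{CE}}\bigl(A+t(B-A)\bigr)[B-A,B-A]\,dt\le \tfrac{L}{2}\|B-A\|_{\mathrm F}^2 .
\]
This uses $\int_0^1(1-t)\,dt = 1/2$.

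The main obstacle is obtaining the sharp constant $1/2$ for the softmax covariance. The cruder Gershgorin bound gives $\lambda_{\max}(H)\le 2\max_k\pi_k(1-\pi_k)\le 1/2$, which would also suffice, and I would use it as a fallback if the variance argument needed adjustment. Everything else is routine chain-rule bookkeeping, including the identification $\|\Delta x\|_2\le\|\Delta\|_{\mathrm F}\|x\|_2$.
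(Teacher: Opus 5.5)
Your proposal is correct and takes essentially the same route as the paper. You write the logit Hessian $\Diag(\pi)-\pi\pi^\top$ as a variance and use Popoviciu's quarter-range bound together with $(\max v-\min v)^2\le 2\|v\|_2^2$ to get $1/2$; you then pull back through $A\mapsto Ax_a$ via $\|\Delta x_a\|_2\le\|\Delta\|_{\mathrm F}\|x_a\|_2$ and average over samples, with your explicit use of positive semidefiniteness and the integral Taylor remainder filling in steps the paper leaves implicit.
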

\proofat{appendix:proof:lem:lipschitz}

The next observation is included to make the assumptions of the fast-clock theorem explicit. A margin condition controls the value of the loss; geometric descent under gradient descent requires an additional dynamical condition.

\begin{lemma}[A fixed margin is not a uniform PL condition]\label{lem:gradient-lb}
There is no constant $\beta>0$ such that the implication
\[
\Gamma_{a,m}(A)\ge \gamma\ \text{ for all }a,m\ne y_a
\quad\Longrightarrow\quad
\|\nabla\mathcal{L}_{\mathrm{CE}}(A)\|_{\mathrm F}^2\ge \beta\mathcal{L}_{\mathrm{CE}}(A)
\]
holds uniformly over all larger margins. In particular, margin persistence alone is insufficient to justify geometric decay of the cross-entropy loss.
\end{lemma}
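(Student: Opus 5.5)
The plan is to refute the implication with an explicit family of classifiers whose margins all exceed $\gamma$ while the ratio $\|\nabla\mathcal{L}_{\mathrm{CE}}(A)\|_{\mathrm F}^2/\mathcal{L}_{\mathrm{CE}}(A)$ tends to zero. I will take any dataset that is linearly separable with positive gaps, and the simplest such case suffices: a single sample $x\ne 0$ with $\|x\|_2\le C$, $K=2$, and label $y=1$. Fix $A_0$ with $\Gamma_{a,m}(A_0)\ge\gamma$ for all $a$ and $m\ne y_a$, and consider the ray $A_t=tA_0$ for $t\ge 1$. The gaps are linear in $A$, so $\Gamma_{a,m}(A_t)=t\,\Gamma_{a,m}(A_0)\ge t\gamma\ge\gamma$. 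Every $A_t$ therefore satisfies the hypothesis of the implication, and the margins along the ray are arbitrarily large, which is exactly the regime the lemma refers to.

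The next step is to compute the gradient and compare it with the loss on this ray. The gradient of the per-sample loss is $\nabla\ell_a(A)=(p_a(A)-e_{y_a})x_a^\top$, where $p_a$ is the softmax vector. Hence
\[
\|\nabla\mathcal{L}_{\mathrm{CE}}(A)\|_{\mathrm F}\le \frac{C}{M}\sum_a\|p_a-e_{y_a}\|_2 .
\]
For each sample, $\|p_a-e_{y_a}\|_2\le\sqrt{2}\,(1-p_{a,y_a})$ and $1-p_{a,y_a}\le\sum_{m\ne y_a}e^{-\Gamma_{a,m}}=:u_a$. For the loss I will use the lower bound $\ell_a=\log(1+u_a)\ge u_a/2$, valid once $u_a\le 1$, which holds for large $t$. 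Combining these with Cauchy--Schwarz gives
\[
\frac{\|\nabla\mathcal{L}_{\mathrm{CE}}(A_t)\|_{\mathrm F}^2}{\mathcal{L}_{\mathrm{CE}}(A_t)}
\le \frac{2C^2\bigl(\tfrac1M\sum_a u_a\bigr)^2}{\tfrac{1}{2M}\sum_a u_a}
=4C^2\,\frac1M\sum_a u_a\le 4C^2 (K-1)e^{-t\gamma}.
\]
This tends to $0$ as $t\to\infty$. In the single-sample binary case the computation is explicit: $\|\nabla\ell\|_{\mathrm F}^2=2\sigma(-\Gamma)^2\|x\|^2$ and $\ell=\log(1+e^{-\Gamma})$, so the ratio is of order $e^{-\Gamma}$.

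To conclude, given any candidate $\beta>0$, I choose $t$ with $4C^2(K-1)e^{-t\gamma}<\beta$. Then $A_t$ satisfies the margin hypothesis but violates the PL inequality, so no uniform $\beta$ exists. The "in particular" clause is interpretive, and I will note it briefly: geometric decay via the usual descent-lemma argument (Lemma~\ref{lem:lipschitz} combined with a PL bound) would need exactly such a uniform $\beta$. Along the ray, $\|\nabla\mathcal{L}\|^2$ is of order $\mathcal{L}^2$ rather than $\mathcal{L}$, so one only gets sublinear decay of order $1/t$ from the margin alone.

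The only delicate point is to keep the upper bound on the gradient and the lower bound on the loss in matching exponential orders. The bound $\|p-e_y\|\le\sqrt2(1-p_y)$ shows the gradient is quadratic in the tail mass $u_a$, while the loss is linear in $u_a$. Once this is in place, the rest is routine.
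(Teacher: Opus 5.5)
Your proof is correct and uses the same mechanism as the paper's. The paper reduces to the binary logistic loss $\ell(z)=\log(1+e^{-z})$ and observes that $|\ell'(z)|^2/\ell(z)\asymp e^{-z}\to 0$ as the margin grows, then simply asserts that this scalar example embeds into any multiclass problem. Your scaling ray $A_t=tA_0$, together with $\|p_a-e_{y_a}\|_2\le\sqrt2\,u_a$ and $\ell_a\ge u_a/2$, carries out that embedding explicitly: it handles the chain rule through $A$ and arbitrary $K$ and $M$, and gives the quantitative bound $4C^2(K-1)e^{-t\gamma}$ for every separable dataset.
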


\begin{proofidea}
In the binary one-sample case, write the margin as $z$ and the loss as $\ell(z)=\log(1+e^{-z})$. Then $\ell(z)\asymp e^{-z}$ but $|\ell'(z)|^2\asymp e^{-2z}$ as $z\to\infty$, so $|\ell'(z)|^2/\ell(z)\to0$. The full proof is in Appendix~\ref{appendix:proof:lem:gradient-lb}.
\end{proofidea}

\subsection{A Sufficient Condition for the Fast Clock}
The correct fast-clock statement is therefore conditional on a post-margin mechanism that keeps increasing the relevant logit gaps. The following assumption is deliberately phrased at the level of the effective classifier; Section~\ref{sec:relu-reduction} explains how a head-dominated ReLU phase can supply such a mechanism locally.

\begin{assumption}[Post-margin gap growth]\label{assump:margin-persist}
There exist constants $t_0\ge0$, $\gamma_0\ge0$, and $\nu>0$ such that along the classifier trajectory,
\begin{equation}\label{eq:gap-growth}
\Gamma_{a,m}(A_t)
\ge
\gamma_0+\nu(t-t_0)
\qquad\text{for every }t\ge t_0,
\end{equation}
and for every sample $a$ and incorrect class $m\ne y_a$.
\end{assumption}
The assumption says that after the classifier has entered the margin regime, the separating gaps continue to grow at a definite rate. It is a sufficient condition for the fast clock.

\begin{theorem}[Fast classifier clock after gap growth]\label{thm:main-exp-convergence}
Under Assumption~\ref{assump:margin-persist}, the empirical cross-entropy satisfies
\[
\mathcal{L}_{\mathrm{CE}}(A_t)
\le
(K-1)\exp\{-\gamma_0-\nu(t-t_0)\}
\qquad(t\ge t_0).
\]
Consequently, the classifier clock defined in~\eqref{eq:classifier-clock} obeys
\[
T_{\mathrm{cls}}(\varepsilon)
\le
 t_0+\frac{1}{\nu}\log\!\left(\frac{(K-1)e^{-\gamma_0}}{\varepsilon}\right)
=
O\!\left(\log(1/\varepsilon)\right).
\]
\end{theorem}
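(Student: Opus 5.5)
The plan is to reduce the theorem to Lemma~\ref{lem:exp-decay}, applied pointwise in time, and then invert the resulting exponential bound. Fix any $t\ge t_0$ and set $\gamma(t):=\gamma_0+\nu(t-t_0)$. Assumption~\ref{assump:margin-persist} gives $\Gamma_{a,m}(A_t)\ge\gamma(t)$ for every sample $a$ and every $m\ne y_a$, which is exactly the uniform-gap hypothesis~\eqref{eq:margin} with $\gamma=\gamma(t)$.

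Lemma~\ref{lem:exp-decay} is stated for $\gamma>0$. If $\gamma_0=0$ and $t=t_0$, then $\gamma(t)=0$, but the argument of that lemma (each exponential term in~\eqref{eq:ce-gap-form} is at most $e^{-\gamma}$, then $\log(1+u)\le u$) does not use positivity. So I would note that it holds for every $\gamma\ge0$, or simply treat that single endpoint directly. Averaging the per-sample bound over $a$ then gives $\mathcal{L}_{\mathrm{CE}}(A_t)\le (K-1)e^{-\gamma(t)}$, which is the first display of the theorem. No dynamics are used beyond the assumed gap growth. This is deliberate, since Lemma~\ref{lem:gradient-lb} shows that a PL route is not available.

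For the clock bound, let $t_\varepsilon:=t_0+\nu^{-1}\log\bigl((K-1)e^{-\gamma_0}/\varepsilon\bigr)$. There are two cases.
\begin{itemize}
\item If $t_\varepsilon\ge t_0$, then $t_\varepsilon$ is a valid time, and substituting it into the first bound gives $(K-1)e^{-\gamma_0}\cdot\varepsilon/\bigl((K-1)e^{-\gamma_0}\bigr)=\varepsilon$. Hence $\mathcal{L}_{\mathrm{CE}}(A_{t_\varepsilon})\le\varepsilon$, and by the definition~\eqref{eq:classifier-clock} of the infimum, $T_{\mathrm{cls}}(\varepsilon)\le t_\varepsilon$.
\item If $t_\varepsilon<t_0$, then $(K-1)e^{-\gamma_0}<\varepsilon$. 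The bound at $t_0$ already gives loss at most $\varepsilon$, so $T_{\mathrm{cls}}(\varepsilon)\le t_0$. The displayed inequality should then be read with $\max\{t_0,t_\varepsilon\}$, or one assumes $\varepsilon\le (K-1)e^{-\gamma_0}$, which is the regime of interest.
\end{itemize}
The $O(\log(1/\varepsilon))$ statement follows by treating $t_0,\gamma_0,\nu,K$ as constants. The only term depending on $\varepsilon$ is $\nu^{-1}\log(1/\varepsilon)$.

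There is no real obstacle here. The only points needing care are the $\gamma>0$ hypothesis of the lemma at the endpoint and the possibly negative logarithm in the clock bound. I would handle both with the short remarks above rather than changing the statement.
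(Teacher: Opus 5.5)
Your proposal is correct and follows the paper's proof: apply Lemma~\ref{lem:exp-decay} at each $t\ge t_0$ with $\gamma=\gamma_0+\nu(t-t_0)$, then solve $(K-1)e^{-\gamma_0-\nu(t-t_0)}\le\varepsilon$ for $t$. Your two edge-case remarks are valid refinements that the paper's proof passes over silently: the lemma's argument holds for $\gamma\ge0$, and the clock bound should be read as $\max\{t_0,t_\varepsilon\}$ unless $\varepsilon\le(K-1)e^{-\gamma_0}$.
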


\paragraph{Interpretation}
This theorem is the fast side of the two-clock picture. Once the classifier gaps grow linearly in training time, the softmax tail contracts exponentially, so the number of additional steps needed to make the training loss smaller by a factor of $\varepsilon$ grows only logarithmically. The theorem isolates the condition under which the classifier clock is genuinely fast.

\begin{proofidea}
Assumption~\ref{assump:margin-persist} gives a lower bound on every incorrect-class gap. Substituting that bound into Lemma~\ref{lem:exp-decay} gives
\[
\mathcal{L}_{\mathrm{CE}}(A_t)
\le (K-1)e^{-\gamma_0-\nu(t-t_0)}.
\]
Solving the inequality $(K-1)e^{-\gamma_0-\nu(t-t_0)}\le\varepsilon$ gives the displayed clock bound. See Appendix~\ref{appendix:proof:thm:main-exp-convergence} for the complete proof.
\end{proofidea}

The same conclusion can be stated directly as a one-step contraction condition, which is sometimes easier to verify for a discrete optimizer.

\begin{proposition}[Tail contraction implies the same fast clock]\label{thm:single-step-decay}
Suppose there are $t_0$ and $\mu\in(0,1)$ such that for all $t\ge t_0$,
\[
\mathcal{L}_{\mathrm{CE}}(A_{t+1})\le (1-\mu)\mathcal{L}_{\mathrm{CE}}(A_t).
\]
Then
\[
\mathcal{L}_{\mathrm{CE}}(A_t)
\le
(1-\mu)^{t-t_0}\mathcal{L}_{\mathrm{CE}}(A_{t_0}),
\qquad
T_{\mathrm{cls}}(\varepsilon)
=
O\!\left(\log(1/\varepsilon)\right).
\]
\end{proposition}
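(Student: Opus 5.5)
The argument is a discrete Grönwall iteration followed by solving a scalar inequality for the clock. Time here is the discrete iteration index $t\in\{t_0,t_0+1,\dots\}$, as the hypothesis compares $A_{t+1}$ with $A_t$.

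First, I will prove by induction on $k=t-t_0\ge0$ that
\[
\mathcal{L}_{\mathrm{CE}}(A_{t_0+k})\le(1-\mu)^k\mathcal{L}_{\mathrm{CE}}(A_{t_0}).
\]
The case $k=0$ is trivial. For the inductive step, apply the contraction hypothesis at time $t_0+k\ge t_0$, and then use the inductive bound together with $1-\mu>0$, so multiplying by $(1-\mu)$ preserves the inequality. This gives the first displayed claim.

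Second, I will convert this into a clock bound. Write $\mathcal{L}_0:=\mathcal{L}_{\mathrm{CE}}(A_{t_0})$. If $\mathcal{L}_0\le\varepsilon$, then $T_{\mathrm{cls}}(\varepsilon)\le t_0$ directly from definition~\eqref{eq:classifier-clock}. Otherwise $\mathcal{L}_0>\varepsilon$, and $(1-\mu)^k\mathcal{L}_0\le\varepsilon$ holds as soon as
\[
k\ge \frac{\log(\mathcal{L}_0/\varepsilon)}{-\log(1-\mu)}.
\]
Using $-\log(1-\mu)\ge\mu$, it suffices to take
\[
k=\left\lceil \tfrac{1}{\mu}\log(\mathcal{L}_0/\varepsilon)\right\rceil .
\]
Hence
\[
T_{\mathrm{cls}}(\varepsilon)\le t_0+1+\frac{1}{\mu}\log\!\left(\frac{\mathcal{L}_{\mathrm{CE}}(A_{t_0})}{\varepsilon}\right).
\]
The infimum in~\eqref{eq:classifier-clock} is attained at or before this time, since the set of admissible times is nonempty.

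Third, I will justify the $O(\log(1/\varepsilon))$ statement. The quantities $t_0$, $\mu$ and $\mathcal{L}_0$ do not depend on $\varepsilon$, and $\mathcal{L}_0$ is finite because cross-entropy is finite at any finite $A$. If desired, one can also bound $\mathcal{L}_0$ using Lemma~\ref{lem:exp-decay}, but this is not needed. The bound therefore has the form $c_1+c_2\log(1/\varepsilon)$, which is $O(\log(1/\varepsilon))$ as $\varepsilon\to0$.

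There is no real obstacle in this argument. The only points that need care are the sign condition $1-\mu>0$, the ceiling, and the inequality $-\log(1-\mu)\ge\mu$.
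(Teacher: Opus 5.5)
Your proposal is correct and follows essentially the same route as the paper: iterate the one-step contraction, then solve $(1-\mu)^{t-t_0}\mathcal{L}_{\mathrm{CE}}(A_{t_0})\le\varepsilon$ for $t$, treating $t_0$, $\mu$ and $\mathcal{L}_{\mathrm{CE}}(A_{t_0})$ as $\varepsilon$-independent constants. Your additions are the explicit induction, the trivial case $\mathcal{L}_0\le\varepsilon$, the ceiling, and the bound $-\log(1-\mu)\ge\mu$, which the paper also uses in its appendix remark in the form $1-\mu\le e^{-\mu}$; these only make the constants explicit.
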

\proofat{app:proof:prop:tail-contraction}

\paragraph{Connection to grokking}
The fast-clock theorem explains why the training loss can become visually flat early in a grokking run. It also clarifies what remains to be explained: a small cross-entropy tail only says that the classifier separates the training samples. It leaves open whether the representation has low effective dimension, cyclic geometry, or any other rule-aligned structure. Those properties belong to the representation clock developed next.

\section{Slow Representation Clock: Spectral Simplification Under Weight Decay}
\label{sec:wd-low-rank-generalization}
The second clock measures a different object from the classifier loss. After the classifier has fitted the training set, weight decay can still reshape the end-to-end map by shrinking weak singular directions and concentrating the spectrum. This section develops the representation clock in the deep-linear surrogate and then states the conditional ReLU bridge that connects the surrogate to the finite-width experiments.

\subsection{Empirical Illustration of Low-Rank Structure}
Modular addition provides a useful visual case because the target rule has a simple cyclic representation.

\begin{figure}[t!]
\centering
\includegraphics[width=0.9\linewidth]{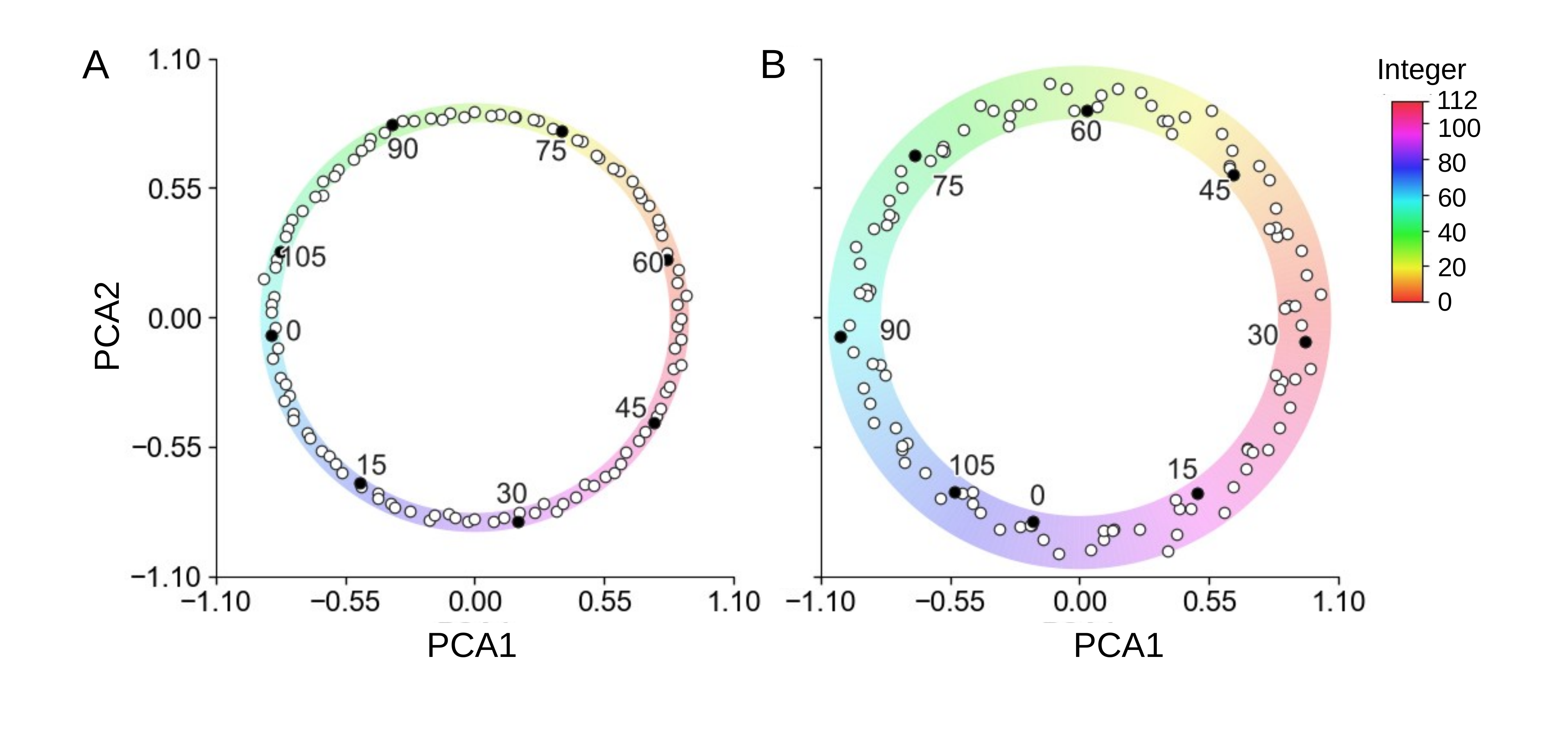}
\caption{Learned modular-addition geometry for a ReLU MLP trained on addition mod $113$. (A) Token embeddings after generator-based label reordering. (B) Output weights after the same label reordering. Colors indicate labels. The display is qualitative evidence for the cyclic organization tracked by the empirical representation clock; theorem-level guarantees are stated for the deep-linear surrogate and the conditional ReLU transfer statements.}
\label{fig:embedding-weight}
\end{figure}

\begin{figure}[htb]
\centering
\includegraphics[width=0.7\linewidth]{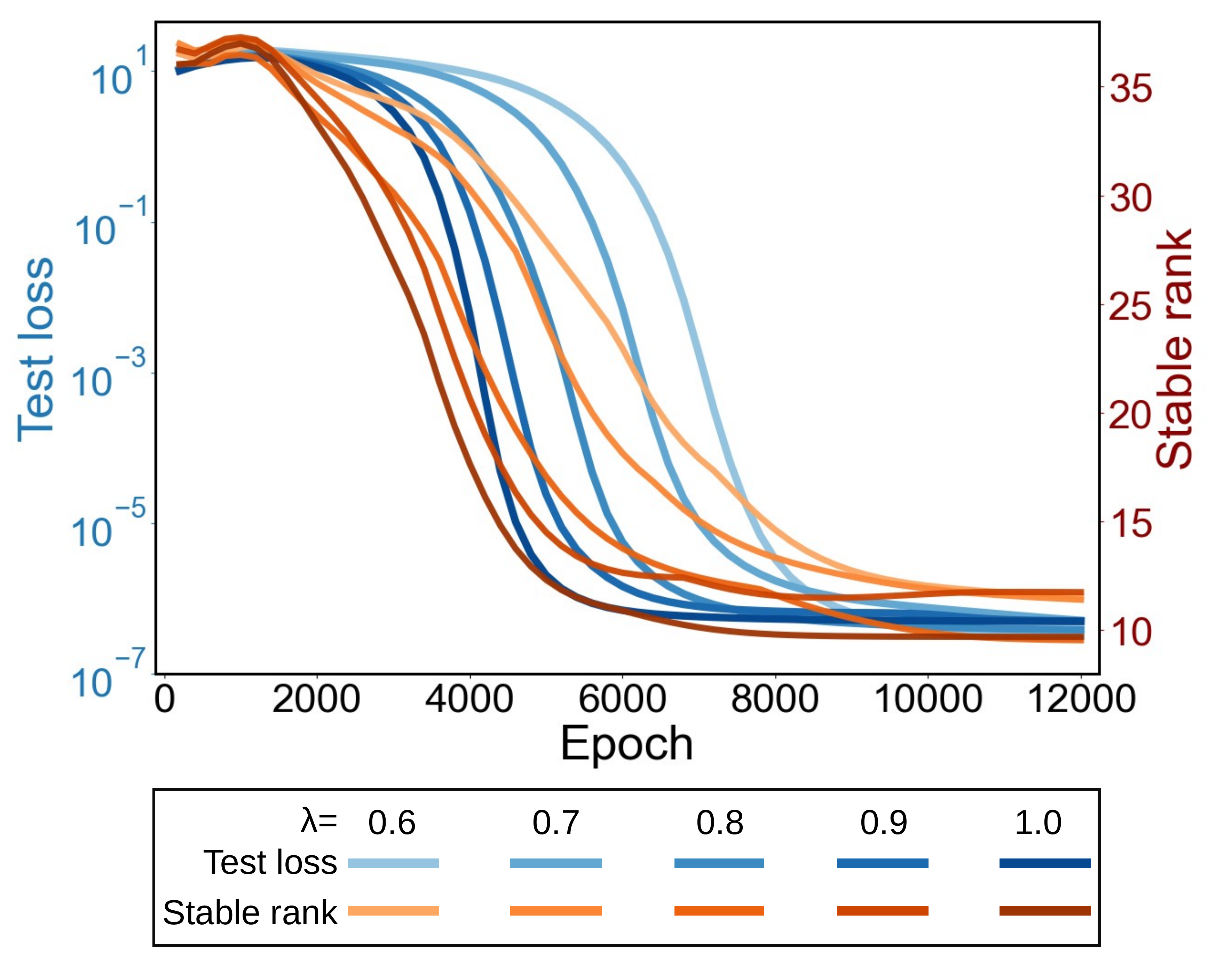}
\caption{Test loss and stable rank during training for a ReLU MLP on modular addition with weight decay $\lambda\in\{0.6,0.7,0.8,0.9,1.0\}$. Test loss is plotted on the left $y$-axis with log scale, and stable rank is plotted on the right $y$-axis. The late decrease in stable rank is the empirical representation-clock diagnostic used as qualitative support for the theory.}
\label{fig:test-loss-low-rank}
\end{figure}

\subsection{A Conditional Reduction Principle for ReLU Networks}
\label{sec:relu-reduction}
The deep-linear analysis is useful for ReLU networks only after one states how nonlinearity is being controlled. We use two exact facts. The first is geometric: fixed activation masks make the network affine on the training samples. The second is differential: in a two-layer embedding model, lower-layer gradients contain extra downstream operator-norm factors compared with the head gradient.

\paragraph{Frozen-gate reduction}
Consider an $L$-layer ReLU network
\[
f_\Theta(x)=W_{L+1}\phi\bigl(W_L\phi(\cdots \phi(W_1x+b_1)\cdots)+b_L\bigr)+b_{L+1},
\qquad \phi(u)=\max\{u,0\}.
\]
For a training sample $x_i$, write $D_{\ell,i}(t)=\Diag(\mathbf 1_{a_{\ell,i}(t)>0})$ for the diagonal activation mask at layer $\ell$.

\begin{proposition}[Frozen active subsystem]\label{prop:relu-frozen-gate}
Fix a finite training set $\mathcal D=\{x_i\}_{i=1}^n$. Suppose there is an interval $I=[T_0,T_1]$ such that for every sample $x_i$ and layer $\ell$,
\[
D_{\ell,i}(t)=D_{\ell,i}^{\star},
\qquad t\in I.
\]
Then for every $i$ and every $t\in I$,
\[
f_\Theta(x_i)=A_i(t)x_i+c_i(t),
\qquad
A_i(t)=W_{L+1}(t)D^{\star}_{L,i}W_L(t)\cdots D^{\star}_{1,i}W_1(t),
\]
for an affine offset $c_i(t)$ determined by the frozen masks and biases. If all layers below the classifier head are also fixed on $I$, then training the head is exactly multiclass logistic regression on the fixed active features.
\end{proposition}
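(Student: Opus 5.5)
The plan is to prove the affine identity by induction on depth, using only the pointwise identity $\phi(u)=\Diag(\mathbf 1_{u>0})\,u$. Write the pre-activations as $a_{1,i}=W_1x_i+b_1$ and $a_{\ell+1,i}=W_{\ell+1}h_{\ell,i}+b_{\ell+1}$, with $h_{\ell,i}=\phi(a_{\ell,i})$. Coordinatewise, $\max\{u,0\}=\mathbf 1_{u>0}\,u$, and this holds for $u=0$ as well, so the tie convention in the mask is harmless. Hence
\[
h_{\ell,i}(t)=D_{\ell,i}(t)\,a_{\ell,i}(t)=D^\star_{\ell,i}\,a_{\ell,i}(t)
\]
for all $t\in I$, by the frozen-mask hypothesis. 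Here time enters only as a parameter: the identity is a statement at each fixed $t$, and no dynamics are used.

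The induction hypothesis is that at layer $\ell$,
\[
h_{\ell,i}=P_{\ell,i}x_i+q_{\ell,i},
\qquad
P_{\ell,i}=D^\star_{\ell,i}W_\ell\cdots D^\star_{1,i}W_1 .
\]
The offsets satisfy $q_{1,i}=D^\star_{1,i}b_1$ and $q_{\ell+1,i}=D^\star_{\ell+1,i}(W_{\ell+1}q_{\ell,i}+b_{\ell+1})$. The step follows by substituting the hypothesis into $a_{\ell+1,i}$ and applying the mask identity.

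At the head, $f_\Theta(x_i)=W_{L+1}h_{L,i}+b_{L+1}$. This gives $A_i(t)=W_{L+1}P_{L,i}$, which is exactly the displayed product, and $c_i(t)=W_{L+1}q_{L,i}+b_{L+1}$. The offset $c_i(t)$ is visibly a function of the frozen masks, the weights, and the biases only.

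For the second claim, suppose all parameters below the head are constant on $I$. Then the features $h_{L,i}$ do not depend on $t$; call them $z_i$. The network output is
\[
f_\Theta(x_i)=W_{L+1}(t)z_i+b_{L+1}(t),
\]
which is linear in the head parameters $(W_{L+1},b_{L+1})$, or in $\widetilde W=[W_{L+1}\;b_{L+1}]$ acting on $(z_i,1)$. Cross-entropy applied to these logits is, by definition, the multiclass logistic regression objective on the fixed features. The only trainable parameters are the head, so the gradients with respect to $\widetilde W$ coincide with those of that objective. In the gap notation of \eqref{eq:ce-gap-form}, this is the setting of Section~\ref{sec:time-scale-separation} with $x_a$ replaced by $(z_i,1)$.

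I expect no real obstacle; the only care points are bookkeeping.

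First, the mask identity must hold at zero pre-activations. It does, since both sides vanish.

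Second, one might worry that the masks being frozen only on the training set limits the conclusion. The statement is only about the training samples $x_i$, so this is all that is needed.

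Third, "all layers below the head fixed" must be read as the parameters being constant in $t$. Fixed masks alone would not suffice, since $P_{L,i}$ and $q_{L,i}$ would still move with the lower weights.
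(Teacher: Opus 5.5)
Your proposal is correct and follows the same route as the paper. Like the paper, you replace each ReLU by its frozen diagonal mask and substitute layer by layer, then note that fixed lower layers turn the head's objective into logistic regression on fixed features. Your explicit offset recursion, and your check that $\phi(u)=\mathbf{1}_{u>0}\,u$ also holds at zero pre-activations, make precise bookkeeping that the paper leaves implicit.
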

\proofat{app:proof:prop:relu-frozen-gate}

\paragraph{Head-dominated gradient hierarchy}
For the modular-addition architecture with a trainable embedding table, write
\[
f_\Theta(x)=U\phi(Wz_x+b),
\qquad z_x=B_xE,
\]
where $E$ is the embedding matrix and $B_x$ selects the embeddings used by the input $x$.

\begin{proposition}[Head gradients and embedding gradients]\label{prop:head-dominant-gradient}
For one sample $(x,y)$ with cross-entropy loss $\ell_x$, let $p_x=\softmax(f_\Theta(x))$, $\delta_x=p_x-e_y$, $a_x=Wz_x+b$, $D_x=\Diag(\mathbf 1_{a_x>0})$, and $h_x=\phi(a_x)$. Then
\[
\nabla_U \ell_x = \delta_x h_x^{\top},
\qquad
\nabla_W \ell_x = D_xU^{\top}\delta_x z_x^{\top},
\qquad
\nabla_E \ell_x = B_x^{\top}W^{\top}D_xU^{\top}\delta_x.
\]
Consequently,
\[
\|\nabla_W\ell_x\|_{\mathrm F}
\le \|U\|_2\|z_x\|_2\|\delta_x\|_2,
\qquad
\|\nabla_E\ell_x\|_{\mathrm F}
\le \|B_x\|_2\|W\|_2\|U\|_2\|\delta_x\|_2,
\]
whereas
\[
\|\nabla_U\ell_x\|_{\mathrm F}=\|\delta_x\|_2\|h_x\|_2.
\]
Thus, when $\|U\|_2$ and $\|W\|_2$ are small or controlled and $\|h_x\|_2$ is not negligible, the embedding and lower-layer gradients carry additional multiplicative factors relative to the head gradient.
\end{proposition}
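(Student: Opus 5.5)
The plan is a direct chain-rule computation followed by submultiplicativity of norms. Write the per-sample computation graph as $z_x=B_xE\mapsto a_x=Wz_x+b\mapsto h_x=\phi(a_x)\mapsto f=Uh_x\mapsto \ell_x$. The first fact to use is the standard softmax--cross-entropy identity $\nabla_f\ell_x=p_x-e_y=\delta_x$, which follows by differentiating $\ell_x=-f_y+\log\sum_k e^{f_k}$.

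\textbf{Gradients.} Since $f=Uh_x$ is linear in $U$, we get $\nabla_U\ell_x=\delta_x h_x^\top$. Backpropagating through $U$ gives $\nabla_{h}\ell_x=U^\top\delta_x$. Through the ReLU, away from the kink $a_x=0$, the Jacobian of $\phi$ is $D_x$, so $\nabla_a\ell_x=D_xU^\top\delta_x$. At coordinates where $a_x=0$ we adopt the convention $\phi'(0)=0$, consistent with the indicator $\mathbf 1_{a_x>0}$ in the definition of $D_x$; I will state this as the convention that defines the gradient used in training. Since $a_x=Wz_x+b$, this yields $\nabla_W\ell_x=D_xU^\top\delta_x z_x^\top$ and $\nabla_{z}\ell_x=W^\top D_xU^\top\delta_x$. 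For $E$, the map $E\mapsto B_xE$ is linear, so the gradient is $B_x^\top$ applied to the upstream gradient. Here $z_x$ should be read with the shape convention matching the displayed formula: I treat $z_x$ as a column vector obtained from $E$ via a linear selection $B_x$. I will note that if $E$ is arranged as a matrix, the same formula holds after vectorizing $E$.

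\textbf{Norm bounds.} For the head, $\|uv^\top\|_{\mathrm F}=\|u\|_2\|v\|_2$ gives $\|\nabla_U\ell_x\|_{\mathrm F}=\|\delta_x\|_2\|h_x\|_2$ exactly. For $W$, the same rank-one identity gives $\|\nabla_W\ell_x\|_{\mathrm F}=\|D_xU^\top\delta_x\|_2\|z_x\|_2$. Then $\|D_x\|_2\le1$, because $D_x$ is a diagonal 0/1 matrix, and $\|U^\top\|_2=\|U\|_2$ give the stated bound. For $E$, the gradient is a vector, so its Frobenius norm is its Euclidean norm, and $\|B_x^\top W^\top D_xU^\top\delta_x\|_2\le\|B_x\|_2\|W\|_2\|D_x\|_2\|U\|_2\|\delta_x\|_2$ by submultiplicativity of the operator norm.

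\textbf{Main obstacle.} The only delicate point is not analytic. It is making the ReLU derivative convention at $a_x=0$ and the vectorization and shape conventions for $E$ and $B_x$ explicit, so that the displayed identities are literally true. The final ``consequently'' interpretation follows from comparing the two expressions. Both lower-layer bounds share the factor $\|\delta_x\|_2$ with the head gradient but replace $\|h_x\|_2$ by $\|U\|_2\|z_x\|_2$ or $\|B_x\|_2\|W\|_2\|U\|_2$. This is exactly the claimed extra multiplicative downstream factor.
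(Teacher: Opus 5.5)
Your proposal is correct and follows the paper's proof essentially step for step. Both compute the chain rule through $U$, the ReLU mask $D_x$, $W$, and the linear selection $B_x$, and then use $\|D_x\|_2\le 1$, submultiplicativity, and the rank-one identity $\|\delta_x h_x^\top\|_{\mathrm F}=\|\delta_x\|_2\|h_x\|_2$. Your explicit conventions ($\phi'(0)=0$, and reading $B_x$ as a linear map acting on $\mathrm{vec}(E)$) add something the paper passes over: with $E$ a matrix and $z_x=B_xE$, the displayed formula for $\nabla_E\ell_x$ is only dimensionally consistent under a convention of this kind.
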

\proofat{app:proof:prop:head-dominant-gradient}

\paragraph{Interpretation}
The propositions identify the stabilized regimes in which a nonlinear network can be compared to an active linear subsystem, and in which the classifier head can move before the embedding block. This is the ReLU counterpart of the two-clock narrative: the classifier clock can finish before the representation clock has reached its low-dimensional structure.

\subsection{Deep Linear Structural Model}
\label{sec:math-setup}
The representation clock is analyzed in a deep linear network with end-to-end map
\begin{equation}\label{eq:end-to-end-map}
A=W_L\cdots W_1\in\mathbb{R}^{k\times d}.
\end{equation}
Layerwise weight decay induces the Schatten penalty in~\eqref{eq:schatten-equivalence}. We therefore study the regularized end-to-end energy
\begin{equation}\label{eq:regularized-energy}
\mathcal{E}(A)=\mathcal{L}_{\mathrm{CE}}(A)+\Psi_p(A).
\end{equation}
The spectral part of this energy is the Schatten-type penalty
\[
\Psi_p(A)=\lambda\|A\|_{S_p}^{p},
\qquad 0<p<1.
\]
The penalty is nonconvex and singular at zero singular values. For this reason, all differential statements are made either on smooth spectral strata or for limiting-subgradient trajectories with an explicit late-time stratum assumption \citep{filippov1988differential,rockafellar1998variational}.

\subsection{Spectral Shrinkage on a Smooth Stratum}
\label{sec:singular-value-dynamics}
The basic singular-value mechanism is local and exact. Suppose $A(t)$ remains on a smooth spectral stratum where the singular value $s_i(t)$ is positive and simple. Write $A(t)=U(t)\Sigma(t)V(t)^\top$ and let $u_i(t),v_i(t)$ be the corresponding singular vectors. Along the smooth gradient flow of~\eqref{eq:regularized-energy}, the singular value derivative is
\begin{equation}\label{eq:singular-derivative}
\dot s_i(t)
=
-\big\langle \nabla\mathcal{L}_{\mathrm{CE}}(A(t)),u_i(t)v_i(t)^\top\big\rangle
-\lambda p\,s_i(t)^{p-1}.
\end{equation}
The first term is the data force in the $i$-th singular direction; the second term is the spectral shrinkage created by weight decay.

\begin{proposition}[Projected singular-value drift]\label{thm:gradient-projection}
On any time interval where $A(t)$ stays on a smooth spectral stratum and $s_i(t)>0$ is simple, define
\[
g_i(t):=-\big\langle \nabla\mathcal{L}_{\mathrm{CE}}(A(t)),u_i(t)v_i(t)^\top\big\rangle.
\]
Then, for almost every $t$ in that interval,
\[
\dot s_i(t)=g_i(t)-\lambda p\,s_i(t)^{p-1}.
\]
In particular, for $0<p<1$, smaller positive singular values experience stronger shrinkage from the regularizer.
\end{proposition}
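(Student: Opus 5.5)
The plan is to combine the classical first-order perturbation formula for simple singular values with the chain rule along the gradient flow $\dot A=-\nabla\mathcal{E}(A)=-\nabla\mathcal{L}_{\mathrm{CE}}(A)-\nabla\Psi_p(A)$. This flow is well defined on the smooth stratum, where the penalty is differentiable.

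\textbf{Step 1: derivative of a simple singular value.} On the interval, $A(t)$ is absolutely continuous; for the smooth flow it is in fact $C^1$. The singular value $s_i>0$ is simple, so the map $A\mapsto s_i(A)$ is real-analytic in a neighbourhood of $A(t)$. Its gradient is $u_iv_i^\top$. To verify this, differentiate $s_i=u_i^\top A v_i$. The terms involving $\dot u_i$ and $\dot v_i$ vanish, because $u_i^\top Av_i'=s_i v_i^\top v_i'=0$ and $(u_i')^\top A v_i=s_i u_i^\top u_i'=0$, using $\|u_i\|=\|v_i\|=1$. Hence $\dot s_i=\langle \dot A,u_iv_i^\top\rangle$ for almost every $t$. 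This holds at every $t$ where $A$ is differentiable, and for an absolutely continuous trajectory that is almost every $t$.

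\textbf{Step 2: gradient of the penalty on the stratum.} On a stratum where all positive singular values are simple, and their number is locally constant, the spectral function $\Psi_p(A)=\lambda\sum_j s_j^p$ is smooth. Its gradient is
\[
\nabla\Psi_p(A)=\lambda p\sum_j s_j^{p-1}u_jv_j^\top,
\]
either by Lewis' formula for the gradient of a spectral function or by applying Step 1 to each $s_j$. Orthonormality then gives
\[
\langle\nabla\Psi_p(A),u_iv_i^\top\rangle=\lambda p\,s_i^{p-1}.
\]

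\textbf{Step 3: combine.} Substituting $\dot A$ into Step 1 gives
\[
\dot s_i=-\langle\nabla\mathcal{L}_{\mathrm{CE}},u_iv_i^\top\rangle-\lambda p\,s_i^{p-1}=g_i-\lambda p\,s_i^{p-1}.
\]
The monotonicity claim follows because $s\mapsto s^{p-1}$ is strictly decreasing on $(0,\infty)$ when $0<p<1$. Smaller positive singular values therefore receive a larger shrinkage rate $\lambda p\,s^{p-1}$.

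\textbf{Main obstacle.} The delicate point is justifying Step 2 and the phrase ``smooth spectral stratum.'' If zero singular values are present, $\Psi_p$ is non-differentiable at $A$. In that case I would define the stratum as the manifold of matrices with fixed rank $r$ and distinct positive singular values, and interpret the flow as a Riemannian or projected gradient flow on that manifold. The projection onto the tangent space contains every $u_jv_j^\top$ with $j\le r$, so the inner products computed above are unchanged.

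Where the paper instead intends limiting-subgradient trajectories, I would invoke the late-time stratum assumption. It guarantees that the trajectory lies in the differentiable region for almost every $t$, and there the limiting subdifferential reduces to the gradient. This is the step I would state most carefully. The remaining calculations are routine.
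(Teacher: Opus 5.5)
Your proposal is correct and follows essentially the same route as the paper. Both use the perturbation identity $\dot s_i=\langle u_iv_i^\top,\dot A\rangle$ for a simple positive singular value and the stratum-wise spectral gradient $\nabla\Psi_p=\lambda p\,U\diag(s_1^{p-1},\ldots,s_r^{p-1})V^\top$, then project the flow $\dot A=-\nabla\mathcal{L}_{\mathrm{CE}}-\nabla\Psi_p$ onto $u_iv_i^\top$; your explicit treatment of rank-deficient strata (every $u_jv_j^\top$ lies in the tangent space of the fixed-rank manifold, so projecting onto it leaves the inner products unchanged) makes precise a point the paper leaves implicit, and you need not require all positive singular values to be simple, since the gradient formula holds on any fixed-rank stratum as in Theorem~\ref{thm:spectral-C1}.
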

\proofat{app:proof:prop:gradient-projection}

\paragraph{Interpretation}
The formula is best read as a selection mechanism. The term $s_i^{p-1}$ diverges as $s_i\downarrow0$, so weak singular directions are penalized aggressively. At singular collisions or at zero singular values, one must use a stratum change or a limiting-subgradient description; the paper uses~\eqref{eq:singular-derivative} as a local drift identity, with the global rate handled by the KL-tail assumptions below.

\begin{theorem}[Stable-rank consequence of spectral selection]\label{thm:stable-rank}
Suppose the trajectory converges to a nonzero matrix $A_\star$ with $r_+\ge 1$ nonzero singular values. Then
\[
1\le \SRank(A_\star)\le r_+.
\]
Consequently, any mechanism that drives many singular values to zero or to a negligible tail yields a low-effective-dimensional end-to-end map.
\end{theorem}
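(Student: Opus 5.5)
The statement is essentially a property of the limit matrix $A_\star$ and does not use the dynamics at all. The only role of the trajectory hypothesis is to fix a nonzero $A_\star$ with exactly $r_+\ge1$ positive singular values. The plan is to write the singular value decomposition of $A_\star$ and read off both inequalities directly from definition~\eqref{eq:stable-rank}.

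Let $s_1(A_\star)\ge\cdots\ge s_{r_+}(A_\star)>0=s_{r_++1}(A_\star)=\cdots$. Since $A_\star\neq0$, we have $s_1(A_\star)=\|A_\star\|_2>0$, so $\SRank(A_\star)$ is well defined.

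For the lower bound, the sum $\sum_i s_i(A_\star)^2$ contains the term $s_1(A_\star)^2$ and all other terms are nonnegative, so the ratio is at least $1$.

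For the upper bound, only $r_+$ terms of the sum are nonzero, and each satisfies $s_i(A_\star)^2\le s_1(A_\star)^2$. Hence
\[
\SRank(A_\star)=\sum_{i=1}^{r_+}\frac{s_i(A_\star)^2}{s_1(A_\star)^2}\le r_+ .
\]
This is the same argument as the general inequality $\SRank\le\Rank$ recalled earlier, with $\Rank(A_\star)=r_+$.

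For the ``consequently'' clause, I would add a short continuity remark. Singular values are $1$-Lipschitz in operator norm by Weyl's inequality. So if $A(t)\to A_\star$ with $A_\star\neq 0$, then $s_i(A(t))\to s_i(A_\star)$, and therefore $\SRank(A(t))\to\SRank(A_\star)$, because the denominator stays bounded away from zero. A limit in which many singular values vanish thus forces late-time stable rank near a number at most $r_+$.

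If the mechanism only leaves a small tail rather than exact zeros, say $\sum_{i>r}s_i^2\le\tau$, the same computation gives
\[
\SRank\le r+\frac{\tau}{s_1^2}.
\]
I expect no real obstacle here. The only point requiring care is the nondegeneracy $A_\star\ne0$, which makes the denominator positive. I would state explicitly that the spectral-selection drift of Proposition~\ref{thm:gradient-projection} is not needed for the inequality itself; it only motivates why $r_+$ should be small.
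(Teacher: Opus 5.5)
Your proposal is correct and follows the paper's proof: both inequalities come directly from the definition of stable rank, since the numerator contains $s_1(A_\star)^2$ and is a sum of at most $r_+$ nonzero terms each bounded by $s_1(A_\star)^2$. Your Weyl-continuity remark and the tail bound $\SRank\le r+\tau/s_1^2$ are correct additions that the paper does not spell out; they give the informal ``consequently'' clause a precise form.
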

\proofat{app:proof:thm:stable-rank}

\paragraph{Interpretation}
The inequality is a deterministic property of the limiting spectrum. The dynamical content is supplied by the shrinkage formula above and by the KL-tail analysis below, which describe why spectral simplification can continue after the classifier loss has already saturated.

\subsection{Late-Time KL Tail and the Polynomial Clock}
\label{sec:matrix-convergence}
The representation clock is a convergence statement for the energy gap
\[
\Delta(t):=\mathcal{E}(A(t))-\mathcal{E}(A_\star).
\]
A KL inequality gives the rate at which this gap can close near a limiting point \citep{kurdyka1998gradients,attouch2013convergence,bolte2014palm,rockafellar1998variational}. To avoid overclaiming an exponent from informal flatness considerations, we state the late-time geometry as an assumption.

\begin{assumption}[Late-time KL geometry]\label{ass:HOF}
There are a limit point $A_\star$, a neighborhood $\mathcal V$ of $A_\star$, constants $0<c_-\le c_+<\infty$, and an exponent $\theta\in(1/2,1)$ such that, for all late-time points in $\mathcal V$ with $\Delta(A)>0$,
\begin{equation}\label{eq:two-sided-kl}
c_-\Delta(A)^\theta
\le
\|\partial^0\mathcal{E}(A)\|_{\mathrm F}
\le
c_+\Delta(A)^\theta.
\end{equation}
Here $\partial^0\mathcal{E}(A)$ denotes the minimum-norm limiting subgradient on the active stratum. We also assume that after some time $T_{\mathrm{KL}}$, the trajectory remains in this neighborhood and in a fixed smooth spectral stratum except for finitely many stratum changes.
\end{assumption}
The lower KL bound is enough for an upper polynomial convergence rate; the matching upper bound is the nondegeneracy condition that makes the polynomial clock sharp.

\begin{lemma}[Energy dissipation]\label{lem:dissipation}
Let $A(t)$ be an absolutely continuous limiting-subgradient trajectory for~\eqref{eq:regularized-energy}. On every smooth spectral interval,
\[
\frac{d}{dt}\mathcal{E}(A(t))=-\|\nabla\mathcal{E}(A(t))\|_{\mathrm F}^2\le0.
\]
Under the finite-stratum-change condition in Assumption~\ref{ass:HOF}, the same monotone dissipation statement holds after concatenating the smooth intervals, with $\nabla\mathcal E$ replaced by $\partial^0\mathcal E$ at nonsmooth instants.
\end{lemma}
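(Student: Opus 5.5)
The plan is to run the standard chain-rule argument for gradient flows, first on each smooth spectral interval and then to glue the intervals together. On an interval $J$ where $A(t)$ stays in a fixed smooth spectral stratum, both terms of the energy are $C^1$ near the trajectory. The loss $\mathcal{L}_{\mathrm{CE}}$ is smooth everywhere, by Lemma~\ref{lem:lipschitz}. The penalty $\Psi_p(A)=\lambda\sum_i s_i(A)^p$ is smooth on the stratum, because the positive singular values are simple there and stay away from zero, so each $s_i$ is a smooth function of $A$ with derivative $u_iv_i^\top$; zero singular values do not move within a fixed stratum. This is exactly the regime in which \eqref{eq:singular-derivative} and Proposition~\ref{thm:gradient-projection} were derived. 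On $J$ the limiting subgradient is therefore the single element $\nabla\mathcal E(A)$, taken relative to the stratum, and the trajectory satisfies $\dot A=-\nabla\mathcal E(A)$ for a.e.\ $t$.

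Since $A$ is absolutely continuous and $\mathcal E$ is $C^1$ along the stratum, $t\mapsto\mathcal E(A(t))$ is absolutely continuous on compact subintervals. The chain rule then gives
\[
\tfrac{d}{dt}\mathcal E(A(t))=\langle\nabla\mathcal E(A(t)),\dot A(t)\rangle=-\|\nabla\mathcal E(A(t))\|_{\mathrm F}^2\le 0
\]
for a.e.\ $t\in J$, and integrating yields monotone decrease on $J$.

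For the gluing step, Assumption~\ref{ass:HOF} says that after $T_{\mathrm{KL}}$ there are only finitely many stratum changes. Label them $\tau_1<\dots<\tau_N$. On each open interval $(\tau_j,\tau_{j+1})$ the previous step applies. At each switching instant I need $\mathcal E(A(\tau_j^+))\le\mathcal E(A(\tau_j^-))$, and it suffices to show that $\mathcal E\circ A$ is continuous there. The map $A(t)$ is continuous. $\mathcal L_{\mathrm{CE}}$ is continuous. $\Psi_p$ is continuous on all matrices even though it is nonsmooth, because singular values depend continuously on the matrix and $s\mapsto s^p$ is continuous on $[0,\infty)$. Hence $\mathcal E\circ A$ is continuous and nonincreasing on each piece, so it is nonincreasing on the whole late-time interval. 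Its a.e.\ derivative is $-\|\partial^0\mathcal E\|_{\mathrm F}^2$, where at the finitely many nonsmooth instants the minimum-norm limiting subgradient replaces the gradient; this convention affects only a null set.

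I expect the main obstacle to be justifying the chain rule and absolute continuity of $\mathcal E\circ A$ near stratum boundaries. There $\Psi_p$ is not locally Lipschitz, since $s^{p-1}$ blows up as $s\to0$. Absolute continuity of the composition is therefore not automatic. My first approach is to avoid the issue entirely by working only on the open smooth intervals and using continuity of $\mathcal E\circ A$ across the finitely many switching times, which is enough for monotonicity. Failing that, I would appeal to the Kurdyka--\L{}ojasiewicz / subanalytic chain rule for limiting subgradients of definable functions, as in \citet{attouch2013convergence,bolte2014palm}. That chain rule covers $\Psi_p$ since it is semialgebraic in $A$.
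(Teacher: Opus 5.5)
Your proposal is correct and follows the paper's proof. The paper applies the chain rule $\frac{d}{dt}\mathcal E(A(t))=\langle\nabla\mathcal E(A(t)),\dot A(t)\rangle=-\|\nabla\mathcal E(A(t))\|_{\mathrm F}^2$ on each smooth spectral interval, then uses continuity of $\mathcal E\circ A$ across the finitely many stratum changes to concatenate the intervals into monotone dissipation; you also supply details the paper leaves implicit, namely continuity of $\Psi_p$ via continuity of singular values, absolute continuity on compact subintervals, and gluing to sidestep the non-Lipschitz stratum boundaries. The one imprecision is that on a stratum with zero singular values the limiting subdifferential is not a single element, since it also contains normal directions; however, $\dot A$ is tangent to the stratum, so it still equals minus the stratum gradient, which is $\partial^0\mathcal E$, and the argument is unaffected.
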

\proofat{app:proof:lem:dissipation}

\begin{definition}[KL exponent for the representation clock]
A late-time trajectory has KL exponent $\theta$ if its energy gap satisfies a bound of the form~\eqref{eq:two-sided-kl} near its limiting point. The case $\theta>1/2$ is the slow regime: integrating the resulting differential inequality produces a polynomial tail.
\end{definition}

\begin{theorem}[KL source of a slow representation clock]\label{thm:KL-global}
Under Assumption~\ref{ass:HOF}, the representation energy satisfies a sharp KL-tail relation with exponent $\theta\in(1/2,1)$ on the late-time neighborhood of $A_\star$. In particular, along the limiting-subgradient trajectory,
\[
-c_+^2\Delta(t)^{2\theta}
\le
\dot\Delta(t)
\le
-c_-^2\Delta(t)^{2\theta}
\]
for almost every sufficiently large $t$.
\end{theorem}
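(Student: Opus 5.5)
The plan is to combine the energy-dissipation identity of Lemma~\ref{lem:dissipation} with the two-sided KL bound~\eqref{eq:two-sided-kl} of Assumption~\ref{ass:HOF}. Since $\mathcal{E}(A_\star)$ is a constant, $\dot\Delta(t)=\frac{d}{dt}\mathcal{E}(A(t))$ wherever the derivative exists. Because $A(t)$ is absolutely continuous and $\mathcal{E}$ is locally Lipschitz on each smooth spectral stratum, $t\mapsto\mathcal{E}(A(t))$ is absolutely continuous on each late-time smooth interval. It is therefore differentiable almost everywhere there, and
\[
\dot\Delta(t)=-\|\nabla\mathcal{E}(A(t))\|_{\mathrm F}^2 .
\]
On a smooth stratum the minimum-norm limiting subgradient $\partial^0\mathcal{E}(A)$ coincides with the gradient $\nabla\mathcal{E}(A)$, so the norm above is exactly the quantity controlled in~\eqref{eq:two-sided-kl}.

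The steps are as follows.
\begin{enumerate}
\item Fix $t\ge T_{\mathrm{KL}}$. By Assumption~\ref{ass:HOF}, $A(t)\in\mathcal V$ and lies in a fixed smooth stratum, except at finitely many stratum-change instants. These instants form a null set and can be discarded, which is the sense of ``almost every sufficiently large $t$.''
\item Separate two cases according to the sign of $\Delta(t)$.
\begin{itemize}
\item If $\Delta(t)>0$, square~\eqref{eq:two-sided-kl} to get
\[
c_-^2\Delta^{2\theta}\le\|\partial^0\mathcal{E}\|_{\mathrm F}^2\le c_+^2\Delta^{2\theta}.
\]
Substituting into the dissipation identity gives the claimed two-sided bound.
\item If $\Delta(t)=0$ at some late time, monotonicity (Lemma~\ref{lem:dissipation}) together with $\Delta\ge0$ forces $\Delta\equiv0$ afterwards. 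Nonnegativity of $\Delta$ holds in the late-time neighborhood: $\mathcal{E}$ is nonincreasing along the trajectory and converges to $\mathcal{E}(A_\star)$, so $\mathcal{E}(A(t))\ge\mathcal{E}(A_\star)$. Hence $\dot\Delta=0$ and both inequalities hold trivially as $0\le0\le0$.
\end{itemize}
\item Conclude that the representation energy obeys a sharp KL-tail relation with the stated exponent $\theta\in(1/2,1)$.
\end{enumerate}

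The main obstacle is justifying the identification $\dot\Delta=-\|\partial^0\mathcal{E}\|^2$ for a limiting-subgradient trajectory rather than a classical gradient flow. I would handle it by a chain rule along absolutely continuous curves on the smooth stratum. There $\mathcal{E}$ is $C^1$: $\Psi_p$ is smooth as long as the positive singular values stay positive, and the zero singular values are fixed by the stratum. It follows that $\dot A=-\nabla\mathcal{E}(A)$ almost everywhere, and hence $\dot{\mathcal E}=\langle\nabla\mathcal E,\dot A\rangle=-\|\nabla\mathcal E\|^2$. The finitely many stratum changes only break time into finitely many intervals and do not affect almost-everywhere statements. This is precisely the content already packaged in Lemma~\ref{lem:dissipation}, so the theorem reduces to that lemma plus squaring the KL inequality.
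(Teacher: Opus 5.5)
Your proposal is correct and follows the paper's own proof: you combine the dissipation identity $\dot\Delta(t)=-\|\partial^0\mathcal{E}(A(t))\|_{\mathrm F}^2$ from Lemma~\ref{lem:dissipation} with the squared two-sided bound~\eqref{eq:two-sided-kl}, then flip the signs. You also treat the case $\Delta(t)=0$ separately, using monotonicity and $\Delta\ge 0$; the paper's proof skips this case even though the KL bound is only assumed where $\Delta>0$, so this is a small but genuine tightening.
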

\proofat{app:proof:thm:KL-global}

\begin{theorem}[Polynomial representation clock]\label{thm:sharp-rate}
Assume the hypotheses of Theorem~\ref{thm:KL-global}. Then, for all $t\ge T_{\mathrm{KL}}$,
\[
\Delta(t)=\Theta\!\left((t-T_{\mathrm{KL}})^{-\frac{1}{2\theta-1}}\right).
\]
If the structural gap $\mathcal{S}(A(t))$ is comparable to $\Delta(t)$ on this late-time neighborhood, then
\[
T_{\mathrm{rep}}(\eta)-T_{\mathrm{KL}}
=
\Theta\!\left(\eta^{-(2\theta-1)}\right).
\]
With $\theta=1-1/\rho$, this exponent becomes $2\theta-1=(\rho-2)/\rho$.
\end{theorem}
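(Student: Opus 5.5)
The plan is to integrate the two-sided differential inequality of Theorem~\ref{thm:KL-global} by comparison with the scalar ODE $\dot y=-c\,y^{2\theta}$. Since $2\theta>1$, the natural substitution is $\Phi(\Delta):=\Delta^{1-2\theta}$. On every smooth spectral interval after $T_{\mathrm{KL}}$, Lemma~\ref{lem:dissipation} makes $\Delta$ absolutely continuous and nonincreasing. Where $\Delta>0$ we have $\frac{d}{dt}\Delta^{1-2\theta}=(1-2\theta)\Delta^{-2\theta}\dot\Delta$, so Theorem~\ref{thm:KL-global} gives
\[
(2\theta-1)c_-^2\le \frac{d}{dt}\Delta(t)^{1-2\theta}\le (2\theta-1)c_+^2
\]
for almost every $t\ge T_{\mathrm{KL}}$. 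Two technical points need checking here. First, $\Delta$ must remain positive; if it vanished at a finite time, the trajectory would sit at $A_\star$ and the statement would be read as holding up to that time. Second, the finitely many stratum changes allowed by Assumption~\ref{ass:HOF} are handled by concatenation: $\Delta$ is continuous and monotone across them, so the integrated inequality survives.

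Integrating from $T_{\mathrm{KL}}$ to $t$ then gives
\[
\Delta(T_{\mathrm{KL}})^{1-2\theta}+(2\theta-1)c_-^2\,\tau\le\Delta(t)^{1-2\theta}\le\Delta(T_{\mathrm{KL}})^{1-2\theta}+(2\theta-1)c_+^2\,\tau,
\]
where $\tau=t-T_{\mathrm{KL}}$. Inverting with the negative power $-1/(2\theta-1)$ reverses the inequalities and yields $\Delta(t)\asymp(\Delta_0^{1-2\theta}+c\,\tau)^{-1/(2\theta-1)}$. For $\tau$ bounded away from zero, this is $\Theta(\tau^{-1/(2\theta-1)})$. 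The upper bound $\Delta\le((2\theta-1)c_-^2\tau)^{-1/(2\theta-1)}$ holds for all $\tau>0$. The lower bound requires $\tau\gtrsim\Delta_0^{1-2\theta}$, so the $\Theta$ is asymptotic in $\tau$, or else one absorbs the constant by shifting $T_{\mathrm{KL}}$.

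For the clock, assume $a\Delta\le\mathcal S\le b\Delta$ on the neighborhood. Then $\mathcal S(A(t))\le\eta$ holds as soon as $b\Delta(t)\le\eta$, which by the upper bound is implied by $\tau\ge C\eta^{-(2\theta-1)}$. Conversely, $\mathcal S\le\eta$ forces $\Delta\le\eta/a$. By the lower bound this is impossible while $\tau<c\,\eta^{-(2\theta-1)}-\Delta_0^{1-2\theta}/((2\theta-1)c_+^2)$, and for small $\eta$ that quantity is of the same order. Hence $T_{\mathrm{rep}}(\eta)-T_{\mathrm{KL}}=\Theta(\eta^{-(2\theta-1)})$. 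The final identity is pure algebra: $2(1-1/\rho)-1=(\rho-2)/\rho$.

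The main obstacle is bookkeeping rather than analysis. One must make the chain rule for $\Delta^{1-2\theta}$ rigorous along an absolutely continuous trajectory with finitely many stratum changes, and state precisely that the lower bound is asymptotic, i.e.\ valid for $\tau$ large or $\eta$ small. I would handle the first by working interval by interval and using continuity of $\mathcal E(A(t))$ at the junctions.
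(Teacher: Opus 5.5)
Your proposal is correct and is essentially the paper's proof, including its appendix "polynomial comparison estimate": pass to $\Delta^{1-2\theta}$, integrate the two-sided bound of Theorem~\ref{thm:KL-global}, invert, and transfer to $T_{\mathrm{rep}}$ via $\mathcal S\asymp\Delta$; your caveat that the lower bound only holds once $t-T_{\mathrm{KL}}\gtrsim\Delta(T_{\mathrm{KL}})^{1-2\theta}$ is exactly what the appendix's ``for large $t$'' means, so the statement's ``for all $t\ge T_{\mathrm{KL}}$'' is loose there. Two small remarks: the hedge about $\Delta$ vanishing is unnecessary, since $\frac{d}{dt}\Delta^{1-2\theta}\le(2\theta-1)c_+^2$ keeps $\Delta>0$ at every finite time when $\Delta(T_{\mathrm{KL}})>0$; and your lower bound on $T_{\mathrm{rep}}$, like the paper's, tacitly treats $T_{\mathrm{rep}}$ as a hitting time after $T_{\mathrm{KL}}$, because $\mathcal S\asymp\Delta$ is only assumed on the late-time neighborhood.
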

\proofat{app:proof:thm:sharp-rate}

\paragraph{Interpretation}
This theorem is the slow side of the two-clock picture. Once the trajectory is in a flat KL regime with $\theta>1/2$, reducing the structural gap to precision $\eta$ takes polynomial time in $1/\eta$. The theorem is conditional because the exponent and sharpness of the KL tail are late-time geometric assumptions; this is the correct level of rigor for the structural clock.

\paragraph{Connection to grokking dynamics}
The classifier clock in Section~\ref{sec:time-scale-separation} is logarithmic once the post-margin gap-growth condition holds. The representation clock here is polynomial once the late-time KL tail controls spectral simplification. The mismatch between these two convergence laws is the mathematical point of the paper: training loss can saturate while the representation is still evolving toward a simpler geometry.

\section{From Low Effective Dimension to Delayed Generalization}
\label{sec:low-rank-leads-to-generalization}
The previous section shows why the representation clock can be slow. We now connect that structural clock to generalization. The argument is conservative: low stable rank reduces the effective dimension seen by the output map, which is the component used in the generalization bound. A smaller effective dimension improves covering-number terms in robustness-style generalization bounds.

\subsection{Geometric Complexity and Covering Numbers}
A low-dimensional Euclidean set can be covered with exponentially fewer balls than a high-dimensional one. We use the following standard comparison as the geometric input \citep{anthony1999neural,bartlett2002rademacher,ledoux1991probability,vershynin2018high}.

\begin{proposition}[Volume covering lemma]\label{prop:volume}
Let $B_2^d(R)$ denote the Euclidean ball of radius $R$ in $\mathbb{R}^d$. For $0<\alpha\le R$, its radius-$\alpha$ covering number satisfies
\[
\left(\frac{R}{\alpha}\right)^d
\lesssim
\mathcal{N}(B_2^d(R),\alpha)
\lesssim
\left(\frac{2R}{\alpha}\right)^d.
\]
Equivalently,
\[
\log \mathcal{N}(B_2^d(R),\alpha)
=
O\!\left(d\log(R/\alpha)\right).
\]
\end{proposition}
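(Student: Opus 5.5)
The proof is a volume comparison: a maximal $\alpha$-separated set in $B_2^d(R)$ gives the upper bound, and a volume count against any $\alpha$-cover gives the lower bound. Let $\mathrm{vol}$ denote Lebesgue measure on $\mathbb{R}^d$ and $v_d$ the volume of the unit ball, so $\mathrm{vol}(B_2^d(r))=v_d r^d$.

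\emph{Upper bound.} Take a maximal $\alpha$-separated subset $\{x_1,\dots,x_N\}\subset B_2^d(R)$. By maximality, every point of the ball lies within distance $\alpha$ of some $x_j$, so these points form an $\alpha$-cover and $\mathcal{N}(B_2^d(R),\alpha)\le N$. The balls $B(x_j,\alpha/2)$ are pairwise disjoint and all lie in $B_2^d(R+\alpha/2)$. Comparing volumes gives
\[
N v_d(\alpha/2)^d\le v_d(R+\alpha/2)^d ,
\]
hence $N\le (1+2R/\alpha)^d$. Since $\alpha\le R$, we have $1+2R/\alpha\le 3R/\alpha$. This yields $N\le (3R/\alpha)^d$, which is the stated bound $(2R/\alpha)^d$ up to the constant absorbed in $\lesssim$.

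\emph{Lower bound.} If $\alpha$-balls centered at $y_1,\dots,y_N$ cover $B_2^d(R)$, then their total volume is at least that of the ball:
\[
N v_d\alpha^d\ge v_d R^d ,
\]
so $N\ge (R/\alpha)^d$. This holds whether or not the centers lie in the ball.

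\emph{Logarithmic form.} Taking logarithms of the upper bound gives $\log\mathcal{N}\le d\log(3R/\alpha)=d\log 3+d\log(R/\alpha)$. This is $O(d\log(R/\alpha))$ once $R/\alpha$ is bounded away from $1$. When $\alpha$ is comparable to $R$, one reads the $O$ with an additive $O(d)$, or simply notes that $\log\mathcal{N}\le d\log 3=O(d)$ in that regime.

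The main obstacle is this constant bookkeeping. The clean factor $2R/\alpha$ in the statement requires either absorbing the factor $3$ into $\lesssim$ or choosing the covering convention (centers restricted to the ball or not) with care. Since the statement is only claimed up to $\lesssim$ and $O(\cdot)$, I would handle it by absorbing constants, not by optimizing them.
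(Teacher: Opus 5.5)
Your argument is the same as the paper's: a volume count against an arbitrary $\alpha$-cover gives the lower bound, and a maximal $\alpha$-separated set, whose disjoint $\alpha/2$-balls sit inside $B_2^d(R+\alpha/2)$, gives the upper bound.

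One step should be stated more carefully. What the volume argument actually proves is
\[
\mathcal{N}(B_2^d(R),\alpha)\le\Bigl(1+\frac{2R}{\alpha}\Bigr)^d=\Bigl(1+\frac{\alpha}{2R}\Bigr)^d\Bigl(\frac{2R}{\alpha}\Bigr)^d .
\]
The extra factor can be as large as $(3/2)^d$. So ``absorbing the factor $3$ into $\lesssim$'' is legitimate only if the implied constant may depend on $d$, or if $\alpha\le R/d$, where $(1+\alpha/(2R))^d\le e^{1/2}$. The paper's proof asserts a ``universal constant'' at exactly this step without justification, so you are not missing an idea it supplies. Still, the bound is later applied with a varying dimension $d=d_\eta$ and with no constant at all (Proposition~\ref{prop:embedding-covering}). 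You should therefore record the bound you actually prove, $(1+2R/\alpha)^d$, rather than claim $(2R/\alpha)^d$ up to a constant.

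Your caveat on the logarithmic form is correct and more careful than the paper. For every $\alpha<R$ a single ball cannot cover $B_2^d(R)$, so $\log\mathcal{N}\ge\log 2$, while $d\log(R/\alpha)\to0$ as $\alpha\uparrow R$. The $O(d\log(R/\alpha))$ statement therefore holds only with an additive $O(d)$ term or for $R/\alpha$ bounded away from $1$.
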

\proofat{app:proof:prop:volume}
This proposition is the only place where dimension enters the generalization bridge. The stable-rank result below replaces the ambient dimension by an output-effective dimension.

\subsection{Stable Rank Controls an Output-Effective Subspace}
Consider an embedding map $z=W_1x\in\mathbb{R}^{d_1}$ followed by a final linear map $f(z)=A'z$, where $A'\in\mathbb{R}^{k\times d_1}$. The embedding set is
\[
\mathcal{M}_{\mathrm{embed}}
:=
\{W_1x:x\in\mathcal X\}\subset\mathbb{R}^{d_1}.
\]
The embeddings may live in a high-dimensional ambient space, but the output map may depend mainly on a smaller principal subspace.

\begin{theorem}[Stable rank controls output-effective dimension]\label{thm:stable-rank-embedding}
Let $A'\in\mathbb{R}^{k\times d_1}$. For $\eta\in(0,1)$, set
\[
d_\eta=\left\lceil\frac{\SRank(A')}{\eta^2}\right\rceil.
\]
Let $A'_{d_\eta}$ be the best rank-$d_\eta$ approximation of $A'$, and let $V_{d_\eta}\subset\mathbb{R}^{d_1}$ be the span of the top $d_\eta$ right singular vectors. Then every embedding $z\in\mathcal{M}_{\mathrm{embed}}$ satisfies
\[
\|(A'-A'_{d_\eta})z\|_2
\le
\eta\|A'\|_2\|z\|_2.
\]
Thus the action of $A'$ on embeddings is approximated, up to relative output error $\eta$, by a $d_\eta$-dimensional principal subspace controlled by stable rank.
\end{theorem}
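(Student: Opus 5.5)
The plan is to reduce the statement to a bound on a single singular value and then control that singular value by the Frobenius mass. Write the SVD $A'=\sum_{i\ge1}s_i u_i v_i^\top$ with $s_1\ge s_2\ge\cdots\ge0$ and set $r:=d_\eta$. By Eckart--Young--Mirsky \citep{eckart1936approximation,mirsky1960symmetric}, the best rank-$r$ approximation is the truncation $A'_r=\sum_{i\le r}s_iu_iv_i^\top$. Its residual $A'-A'_r=\sum_{i>r}s_iu_iv_i^\top$ has operator norm exactly $s_{r+1}$, with the convention $s_{r+1}=0$ if $r\ge\Rank(A')$. Hence, for every $z\in\mathbb{R}^{d_1}$, and in particular every $z\in\mathcal M_{\mathrm{embed}}$,
\[
\|(A'-A'_r)z\|_2\le s_{r+1}\|z\|_2 .
\]
It therefore suffices to prove $s_{r+1}\le \eta s_1=\eta\|A'\|_2$.

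The key step is a Markov-type counting argument on the spectrum. Since the singular values are nonincreasing,
\[
(r+1)s_{r+1}^2\le\sum_{i=1}^{r+1}s_i^2\le\|A'\|_{\mathrm F}^2=\SRank(A')\,s_1^2 ,
\]
where the last equality is the definition~\eqref{eq:stable-rank}. This gives $s_{r+1}^2\le \SRank(A')\,s_1^2/(r+1)$. The choice $r=\lceil \SRank(A')/\eta^2\rceil\ge \SRank(A')/\eta^2$ then yields $s_{r+1}^2<\eta^2 s_1^2$, which proves the claim. I will also note two edge cases. If $A'=0$ the statement is trivial, so we may assume $A'\neq0$ and stable rank is well defined. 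If $r$ exceeds $\min(k,d_1)$, then $A'_r=A'$ and the residual vanishes.

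For the interpretive sentence, observe that $A'_r=A'P_{V_r}$, where $P_{V_r}$ is the orthogonal projector onto the span $V_{d_\eta}$ of the top right singular vectors. Thus $A'_r z$ depends only on the projection of $z$ onto the $d_\eta$-dimensional subspace $V_{d_\eta}$. This is exactly the sense in which the output action is captured by a principal subspace whose dimension is controlled by stable rank. I do not expect a genuine obstacle. The only points requiring care are the identification $\|A'-A'_r\|_2=s_{r+1}$ (Eckart--Young in operator norm) and the consistent handling of the zero-singular-value conventions.
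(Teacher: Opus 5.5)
Your proposal is correct and follows essentially the same route as the paper's proof: Eckart--Young gives $\|A'-A'_{d_\eta}\|_2=s_{d_\eta+1}$, monotonicity gives $(d_\eta+1)s_{d_\eta+1}^2\le\SRank(A')\,s_1^2$, and the ceiling choice of $d_\eta$ yields $s_{d_\eta+1}\le\eta s_1$. Your added observation $A'_{d_\eta}=A'P_{V_{d_\eta}}$ and your handling of the rank and zero-matrix edge cases make the interpretive sentence slightly more explicit than the paper does.
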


\begin{proofidea}
Eckart--Young gives $\|A'-A'_{d_\eta}\|_2=s_{d_\eta+1}(A')$. Monotonicity of singular values gives $(d_\eta+1)s_{d_\eta+1}^2\le\sum_{i\ge 1} s_i^2=\SRank(A')s_1^2$. The choice of $d_\eta$ yields $s_{d_\eta+1}\le\eta s_1$, and applying this operator-norm bound to $z$ proves the claim. The full proof is in Appendix~\ref{app:thm:stable-rank-embedding}.
\end{proofidea}

\begin{proposition}[Covering number of the principal subspace]\label{prop:embedding-covering}
Suppose $\|P_{V_{d_\eta}}z\|_2\le R$ for all $z\in\mathcal{M}_{\mathrm{embed}}$, where $P_{V_{d_\eta}}$ is the orthogonal projection onto $V_{d_\eta}$. Then
\[
\mathcal{N}(P_{V_{d_\eta}}\mathcal{M}_{\mathrm{embed}},\alpha)
\le
\left(\frac{2R}{\alpha}\right)^{d_\eta}.
\]
\end{proposition}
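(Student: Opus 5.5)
The plan is to reduce to covering a Euclidean ball in a space of dimension $d_\eta$, and then to get the constant $2$ from a covering-density estimate rather than from the plain volumetric argument.

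\textbf{Step 1 (reduction).} Identify $V_{d_\eta}$ isometrically with $\mathbb{R}^{d_\eta}$, writing $d:=d_\eta$. By hypothesis, $P_{V_{d_\eta}}\mathcal{M}_{\mathrm{embed}}\subset B_2^{d}(R)$. Covering numbers are monotone under inclusion if centers may be taken anywhere in $V_{d_\eta}$, which is the convention of Proposition~\ref{prop:volume}. So it suffices to show
\[
\mathcal N\bigl(B_2^{d}(R),\alpha\bigr)\le (2R/\alpha)^{d}.
\]

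\textbf{Step 2 (the constraint on $\alpha$).} The statement is only meaningful for $\alpha\le 2R$. If $\alpha>2R$, the right-hand side is below $1$, and any nonempty set needs at least one ball. I would therefore state this restriction explicitly, as Proposition~\ref{prop:volume} does with its condition $\alpha\le R$. For $R\le\alpha\le 2R$, the single ball $B(0,\alpha)\supset B(0,R)$ already gives $\mathcal N=1\le(2R/\alpha)^d$. In dimension $d=1$, the interval $[-R,R]$ is covered by $\lceil R/\alpha\rceil$ intervals of length $2\alpha$, and $\lceil x\rceil\le 2x$ whenever $x\ge 1/2$. This settles the one-dimensional case for all $\alpha\le 2R$.

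\textbf{Step 3 (main case $\alpha<R$, $d\ge 2$).} This is the main obstacle. The maximal-packing argument gives only $(1+2R/\alpha)^d$. Its ratio to the target is $(1+\alpha/2R)^d$, which grows exponentially in $d$, so that argument cannot close the gap. Instead I would invoke a Rogers-type covering bound (in the Verger-Gaugry form), which is stated precisely for covering $B^d(R)$ by balls of radius $\alpha<R$:
\[
\mathcal N\bigl(B^d(R),\alpha\bigr)\le C\,d^{3/2}\log d\,(R/\alpha)^d
\]
with an absolute constant $C$. Comparing with the target,
\[
C\,d^{3/2}\log d\,(R/\alpha)^d\le 2^d(R/\alpha)^d
\quad\text{as soon as}\quad C\,d^{3/2}\log d\le 2^d,
\]
and this holds for all $d\ge d_0(C)$.

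For the finitely many dimensions $2\le d<d_0$, I would check the bound directly with an explicit lattice (cubic-grid) covering, treating $\alpha$ small and $\alpha$ comparable to $R$ separately. For $\alpha$ close to $R$, the covering by $2^d$ balls centered at the points $\pm\tfrac{R}{\sqrt d}$ of a scaled cube, together with the trivial one-ball bound, should already be within budget.

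If this finite check turns out to be delicate, I would weaken the claim in the way Proposition~\ref{prop:volume} does, namely to $\lesssim(2R/\alpha)^{d}$ or to $\log\mathcal N=O(d_\eta\log(R/\alpha))$. That weaker form is all the downstream robustness argument needs, since only the logarithm of the covering number enters through $K$ in Theorem~\ref{thm:robust-and-generalization}.
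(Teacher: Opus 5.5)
\textbf{There is a genuine gap: you do not prove the sharp constant in dimensions $2\le d<d_0$, and your fallback justification is incorrect.}

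Your Step~1 is exactly the paper's proof. The paper notes that $P_{V_{d_\eta}}\mathcal{M}_{\mathrm{embed}}$ lies in a $d_\eta$-dimensional ball of radius $R$ and cites Proposition~\ref{prop:volume}. Your diagnosis of why this does not deliver the displayed inequality is correct, and it applies to the paper's argument as well:
\begin{itemize}
\item Proposition~\ref{prop:volume} only asserts $\lesssim$.
\item Its packing proof actually gives $(1+2R/\alpha)^{d_\eta}=(1+\alpha/2R)^{d_\eta}(2R/\alpha)^{d_\eta}$. The excess factor is exponential in $d_\eta$, not a universal constant.
\item The statement is false for $\alpha>2R$.
\end{itemize}
What that route honestly proves is $\mathcal{N}\le(1+2R/\alpha)^{d_\eta}$ for every $\alpha>0$. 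The net is a maximal $\alpha$-separated subset of the projected set itself, which also removes your convention issue about where centers may lie.

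Your reason for calling this weaker form harmless is wrong. In Theorem~\ref{thm:robust-and-generalization} the cell count enters linearly, through $\sqrt{(2K\ln2+2\ln(1/\delta))/n}$, not through $\log K$. Weakening the proposition therefore multiplies $K$ by $(1+\alpha/2R)^{d_\eta}$. Theorem~\ref{thm:gen-bound-stablerank} would then have to be restated with that base; its qualitative message survives, but its displayed bound no longer follows.

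The gap itself is in your recovery of the sharp constant. Granting the Verger-Gaugry bound with an explicit $C$ valid for all $R/\alpha>1$, your large-$d$ comparison is fine. The substance is the range $2\le d<d_0$. There $d_0$ is never pinned down, and the constructions you name fail:
\begin{itemize}
\item The centers $\tfrac{R}{\sqrt d}(\pm1,\ldots,\pm1)$ have norm $R$. The origin is therefore at distance $R$ from all of them, so that cover needs $\alpha\ge R$, where one ball already suffices.
\item Even the optimal diagonal center $\tfrac{R}{d}(1,\ldots,1)$ for an orthant needs radius $R\sqrt{1-1/d}$, so orthant covers only handle $R/\alpha\le(1-1/d)^{-1/2}$.
\item Beyond that band the cube grid is too wasteful. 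Take $d=3$ and $R/\alpha=1.3$. The orthant cover needs radius $\sqrt{2/3}\,R\approx1.06\,\alpha$. The grid of cubes of side $2\alpha/\sqrt3$ uses $27$ cells with the origin at a cell center, or $32$ with the origin at a vertex. The budget is $8(1.3)^3\approx17.6$.
\end{itemize}
The inequality is not false at that point: a central ball plus twelve balls in icosahedral directions, at distance about $0.83\,\alpha$, covers with $13$ balls. But such regime-by-regime constructions, in every dimension below $d_0$ and across the whole range of $R/\alpha$, are exactly the part of the proof your proposal does not supply.
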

\proofat{app:proof:prop:embedding-covering}
The cover is for the projected representation; the full ambient embedding set may remain higher-dimensional. Theorem~\ref{thm:stable-rank-embedding} controls the output error introduced by this projection.

\subsection{A Robustness Bound with a Stable-Rank Term}
The following theorem is a direct use of the robustness bound in Theorem~\ref{thm:robust-and-generalization}, with an additional output-tail term coming from the projection error.

\begin{definition}[Projected embedding robustness]
Let $f(z)=A'z$ and let $V_{d_\eta}$ be the principal subspace from Theorem~\ref{thm:stable-rank-embedding}. We say that $f$ is $(\alpha,\epsilon)$-robust on the projected embedding if the projected set $P_{V_{d_\eta}}\mathcal{M}_{\mathrm{embed}}$ admits an $\alpha$-cover such that any two embeddings whose projections fall in the same cover cell have loss values differing by at most $\epsilon$, after accounting for the projected logits.
\end{definition}

\begin{theorem}[Generalization bound through the representation clock]\label{thm:gen-bound-stablerank}
Assume that:
\begin{enumerate}[label=(\roman*),leftmargin=2.8em]
    \item $\|P_{V_{d_\eta}}z\|_2\le R$ and $\|z\|_2\le R_z$ for all embeddings;
    \item $d_\eta=\lceil\SRank(A')/\eta^2\rceil$;
    \item the projected embedding is $(\alpha,\epsilon)$-robust;
    \item the loss is bounded by $M$ and is $G$-Lipschitz in the logits.
\end{enumerate}
Then, with probability at least $1-\delta$ over $n$ training samples,
\[
\bigl|\mathcal{L}(f)-\widehat{\mathcal L}_n(f)\bigr|
\le
\epsilon
+2G\eta\|A'\|_2R_z
+M\sqrt{\frac{2\left(\frac{2R}{\alpha}\right)^{d_\eta}\ln2+2\ln(1/\delta)}{n}}.
\]
\end{theorem}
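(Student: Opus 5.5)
The plan is to split the generalization gap into a projected part, handled by Theorem~\ref{thm:robust-and-generalization}, and a projection-error part, handled by Theorem~\ref{thm:stable-rank-embedding}. Introduce the projected predictor $\tilde f(z):=A'_{d_\eta}z$. Since $A'_{d_\eta}=A'_{d_\eta}P_{V_{d_\eta}}$, this predictor depends on $z$ only through $P_{V_{d_\eta}}z$. The triangle inequality then gives
\[
\bigl|\mathcal L(f)-\widehat{\mathcal L}_n(f)\bigr|
\le
\bigl|\mathcal L(f)-\mathcal L(\tilde f)\bigr|
+\bigl|\mathcal L(\tilde f)-\widehat{\mathcal L}_n(\tilde f)\bigr|
+\bigl|\widehat{\mathcal L}_n(\tilde f)-\widehat{\mathcal L}_n(f)\bigr|.
\]

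For the two outer terms, use the $G$-Lipschitz property of the loss in the logits together with Theorem~\ref{thm:stable-rank-embedding} and assumption (i). Pointwise this gives $|l(f,z)-l(\tilde f,z)|\le G\|(A'-A'_{d_\eta})z\|_2\le G\eta\|A'\|_2R_z$. Averaging over the population and over the sample bounds each outer term by $G\eta\|A'\|_2R_z$, and together they give $2G\eta\|A'\|_2R_z$.

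For the middle term, build a partition from the cover. By Proposition~\ref{prop:embedding-covering}, $P_{V_{d_\eta}}\mathcal M_{\mathrm{embed}}$ has an $\alpha$-cover with at most $K:=(2R/\alpha)^{d_\eta}$ cells. Assumption (iii) supplies such a cover on which the loss varies by at most $\epsilon$ within each cell. Pull these cells back through $z\mapsto P_{V_{d_\eta}}z$, and through $x\mapsto z$ with labels attached, and disjointify them to get a partition $\{C_i\}_{i=1}^K$ of the sample space. The phrase ``after accounting for the projected logits'' in the robustness definition is read as the guarantee that $\tilde f$ is $(K,\epsilon)$-robust on this partition. Applying Theorem~\ref{thm:robust-and-generalization} with loss bound $M$ then yields, with probability at least $1-\delta$, the bound $\epsilon+M\sqrt{(2K\ln2+2\ln(1/\delta))/n}$. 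Substituting $K$ and summing the three contributions gives the claimed inequality.

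The main obstacle is this robustness step. Theorem~\ref{thm:robust-and-generalization} requires a partition fixed independently of the sample, or at least one valid for every sample set, whereas $V_{d_\eta}$ depends on the learned $A'$. I would first treat the cover and $A'$ as determined by the algorithm in the way Definition~\ref{def:robustness} permits, since that definition allows the partition to depend on $s$. If that is too weak, I would fall back on conditioning on the subspace. The other point to check is that robustness transfers to whichever of $f$ or $\tilde f$ is used. If assumption (iii) concerns $f$ rather than $\tilde f$, I would instead apply Theorem~\ref{thm:robust-and-generalization} directly to $f$ and absorb the $2G\eta\|A'\|_2R_z$ slack into the within-cell variation; this gives the same final bound.
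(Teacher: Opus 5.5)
Your proof is correct and uses the same ingredients as the paper: the cover of Proposition~\ref{prop:embedding-covering}, projected robustness for the $\epsilon$ term, Theorem~\ref{thm:stable-rank-embedding} with $G$-Lipschitzness for the $2G\eta\|A'\|_2R_z$ term, and Theorem~\ref{thm:robust-and-generalization}. What differs is where the triangle inequality is applied.

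The paper uses the projected logits only inside a within-cell comparison: $|l(f,z)-l(f,z')|\le \epsilon+2G\eta\|A'\|_2R_z$ for $z,z'$ in the same cell. It then applies Theorem~\ref{thm:robust-and-generalization} to $f$ itself with $\epsilon(s)=\epsilon+2G\eta\|A'\|_2R_z$. This is precisely your fallback. Its trigger is slightly misstated, though. Absorbing the slack is what one does when (iii) concerns the projected logits, which is how the paper reads it. If (iii) concerned $f$ itself, no slack term would be needed at all.

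Your main route instead compares the risks of $f$ and $\tilde f$. This buys two things: the projection error appears as a deterministic, sample-free correction, and (iii) is used verbatim for a predictor that genuinely factors through $P_{V_{d_\eta}}$. The cost is also twofold. You must apply Theorem~\ref{thm:robust-and-generalization} to the auxiliary algorithm $s\mapsto\tilde f_s$. You also need the bound $M$ for the loss at the truncated logits $A'_{d_\eta}z$; this is automatic if (iv) bounds the loss as a function of the logits, but not if it only bounds the losses of $f$. The paper's route needs neither.

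Your worry about the sample-dependent subspace is legitimate, and the paper does not resolve it either. It simply invokes Definition~\ref{def:robustness}, which lets the partition depend on $s$. That permission does not rescue the concentration step: the Bretagnolle--Huber--Carol argument in the proof of Theorem~\ref{thm:robust-and-generalization} needs a partition fixed before sampling, as in Xu and Mannor's original definition. With sample-dependent partitions, every algorithm with $0$--$1$ loss would be $(2,0)$-robust, by splitting $\mathcal Z$ according to the value of the loss. Theorem~\ref{thm:robust-and-generalization} would then be false for memorizing algorithms.

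Conditioning on $V_{d_\eta}$ does not help, since it is computed from the same sample. A rigorous version would need a data-independent subspace or a union bound over candidate subspaces. Both your argument and the paper's are therefore valid exactly at the level of Theorem~\ref{thm:robust-and-generalization} as stated.
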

\proofat{app:proof:thm:gen-bound-stablerank}

\paragraph{Interpretation}
As the representation clock reduces stable rank, the exponent $d_\eta$ in the covering term decreases. The theorem uses low stable rank to reduce the geometric complexity term once the projected representation is robust.

\subsection{Temporal Dynamics: Why Grokking Occurs}
\label{sec:time-mismatch-overfit-generalization}
We can now state the final comparison. The classifier clock is attached to the cross-entropy tail. The representation clock is attached to a structural gap whose reduction improves the effective dimension and, through the previous theorem, the generalization bound.

\begin{theorem}[Temporal mismatch between the two clocks]\label{thm:temporal-mismatch}
Assume the fast-clock condition of Theorem~\ref{thm:main-exp-convergence}. Assume also the sharp KL-tail condition of Theorem~\ref{thm:sharp-rate} for a structural gap $\mathcal{S}(A(t))$ comparable to the energy gap. Then
\[
T_{\mathrm{cls}}(\varepsilon)
=
O\!\left(\log(1/\varepsilon)\right),
\qquad
T_{\mathrm{rep}}(\eta)-T_{\mathrm{KL}}
=
\Theta\!\left(\eta^{-(2\theta-1)}\right),
\]
where $\theta\in(1/2,1)$ is the late-time KL exponent. If the required structural precision is tied to the target loss scale by $\eta\asymp\varepsilon^q$ for some $q>0$, then
\[
T_{\mathrm{rep}}(\varepsilon^q)-T_{\mathrm{KL}}
=
\Theta\!\left(\varepsilon^{-q(2\theta-1)}\right),
\]
which is polynomial in $1/\varepsilon$.
\end{theorem}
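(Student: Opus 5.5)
The plan is to assemble the statement from the two clock theorems already established, and then do one substitution. Nothing new about the dynamics is needed; the work is in quoting the two earlier results correctly and keeping their constants and starting times separate.

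\emph{Fast clock.} Under Assumption~\ref{assump:margin-persist}, Theorem~\ref{thm:main-exp-convergence} gives
\[
T_{\mathrm{cls}}(\varepsilon)\le t_0+\nu^{-1}\log\bigl((K-1)e^{-\gamma_0}/\varepsilon\bigr).
\]
Since $t_0,\gamma_0,\nu,K$ are fixed, this is $O(\log(1/\varepsilon))$ as $\varepsilon\downarrow0$. I will note that this is only an upper bound, which is why the statement writes $O$ rather than $\Theta$.

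\emph{Slow clock.} Under Assumption~\ref{ass:HOF}, Theorem~\ref{thm:KL-global} supplies the two-sided differential inequality for $\Delta$. Theorem~\ref{thm:sharp-rate} integrates it to $\Delta(t)=\Theta\bigl((t-T_{\mathrm{KL}})^{-1/(2\theta-1)}\bigr)$. The comparability hypothesis is read as $c\Delta\le\mathcal S\le C\Delta$ on the late neighborhood. With it, the first hitting time of $\{\mathcal S\le\eta\}$ is sandwiched between the hitting times of $\{\Delta\le\eta/C\}$ and $\{\Delta\le\eta/c\}$. Each of these is $\Theta\bigl((\eta/\mathrm{const})^{-(2\theta-1)}\bigr)$ after $T_{\mathrm{KL}}$, which gives the second display.

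Two points need care here. First, the hitting time is well defined because $\Delta$ is monotone (Lemma~\ref{lem:dissipation}). Second, $\eta$ must be small enough that the hitting time exceeds $T_{\mathrm{KL}}$; only the small-$\eta$ asymptotics matter, so this is harmless.

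\emph{Substitution.} Put $\eta\asymp\varepsilon^q$, that is, $a\varepsilon^q\le\eta\le b\varepsilon^q$. Since $\eta\mapsto\eta^{-(2\theta-1)}$ is monotone, this gives
\[
T_{\mathrm{rep}}(\varepsilon^q)-T_{\mathrm{KL}}=\Theta\bigl(\varepsilon^{-q(2\theta-1)}\bigr),
\]
with constants depending on $a,b,\theta$. Because $q>0$ and $\theta>1/2$, the exponent $q(2\theta-1)$ is positive, so this is polynomial in $1/\varepsilon$.

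\emph{Main obstacle.} The expected difficulty is the sandwiching step between the energy gap and the structural gap. Theorem~\ref{thm:sharp-rate} as stated already asserts the $T_{\mathrm{rep}}$ rate under comparability, so I will simply invoke it. If more detail is needed, I will spell out the hitting-time comparison as above. I will also flag that both clocks are measured along the same trajectory: $A_t$ in the classifier clock denotes the same end-to-end map $A(t)$ that appears in the energy. Beyond that, the statement only requires recording the two rates side by side, not an inequality between them.
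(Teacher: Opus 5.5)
Your proposal is correct and follows the paper's proof. You cite Theorem~\ref{thm:main-exp-convergence} for the $O(\log(1/\varepsilon))$ classifier clock and Theorem~\ref{thm:sharp-rate} for the $\Theta(\eta^{-(2\theta-1)})$ representation clock, then substitute $\eta\asymp\varepsilon^q$. Your sandwich via $c\Delta\le\mathcal S\le C\Delta$ and your small-$\eta$ caveat go beyond what the paper writes; note that the caveat tacitly requires $\inf_{t\le T_{\mathrm{KL}}}\mathcal S(A(t))>0$, because $T_{\mathrm{rep}}(\eta)$ is an infimum over all times.
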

\proofat{app:proof:thm:temporal-mismatch}

\paragraph{Interpretation}
The theorem gives the paper's main explanation of grokking. The training loss can become small on the classifier clock, while the structural quantity needed for robust generalization continues to evolve on the representation clock. Delayed test accuracy is therefore compatible with rapid interpolation: the first clock has finished, but the second has not.

\begin{corollary}[Conditional ReLU version]\label{cor:relu-temporal-mismatch}
Assume a ReLU network enters a time window in which: (i) the classifier head has a post-margin gap-growth or tail-contraction regime on the active features; (ii) activation patterns on the training set are stable; and (iii) the late active subsystem satisfies the sharp KL-tail condition for its structural gap. Then the stabilized subsystem inherits the same logarithmic classifier clock and polynomial representation clock.
\end{corollary}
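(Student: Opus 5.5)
The plan is to reduce the corollary to the two deep-linear results already proved, by applying them to the stabilized active subsystem rather than to the full ReLU network. Fix the time window $I=[T_0,T_1]$ (possibly with $T_1=\infty$) on which hypotheses (i)--(iii) hold.

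\emph{Step 1: linearization.} By hypothesis (ii) and Proposition~\ref{prop:relu-frozen-gate}, for every training sample $x_i$ and every $t\in I$ the network acts as $f_\Theta(x_i)=A_i(t)x_i+c_i(t)$, with $A_i(t)$ a product of the layer matrices and the frozen masks $D^\star_{\ell,i}$. Since the training set is finite, I would absorb the offsets $c_i(t)$ by augmenting each input with a constant coordinate. The empirical cross-entropy on $\mathcal D$ is then a function of finitely many linear maps, and it has exactly the gap form~\eqref{eq:ce-gap-form}, with $\Gamma_{a,m}$ computed from the active logits.

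\emph{Step 2: classifier clock.} Hypothesis (i) supplies either Assumption~\ref{assump:margin-persist} for these active gaps or the one-step contraction of Proposition~\ref{thm:single-step-decay}. The proofs of Lemma~\ref{lem:exp-decay}, Theorem~\ref{thm:main-exp-convergence} and Proposition~\ref{thm:single-step-decay} use only the gap representation~\eqref{eq:ce-gap-form} of the loss. They never use that a single matrix $A$ generates the logits, so they transfer verbatim and give $T_{\mathrm{cls}}(\varepsilon)=O(\log(1/\varepsilon))$ within $I$.

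\emph{Step 3: representation clock.} Hypothesis (iii) is read as Assumption~\ref{ass:HOF} for the regularized energy of the active subsystem along its trajectory. Theorem~\ref{thm:KL-global} and Theorem~\ref{thm:sharp-rate} are statements about an energy gap $\Delta(t)$ that obeys Lemma~\ref{lem:dissipation} and the two-sided KL bound. Their proofs integrate the differential inequality $-c_+^2\Delta^{2\theta}\le\dot\Delta\le-c_-^2\Delta^{2\theta}$, which is a one-dimensional argument and independent of the linear structure. Together with comparability of $\mathcal S$ and $\Delta$, this yields $T_{\mathrm{rep}}(\eta)-T_{\mathrm{KL}}=\Theta(\eta^{-(2\theta-1)})$. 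Combining this with Step 2 exactly as in Theorem~\ref{thm:temporal-mismatch} gives the claimed pair of clocks.

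\emph{Main obstacle.} The delicate point is that the subsystem's trajectory must genuinely be a (limiting-sub)gradient flow of the energy to which the KL assumption refers. Two issues arise here. First, the ReLU weight-decay objective is defined on the parameters $\Theta$, not on end-to-end maps, so the dissipation identity of Lemma~\ref{lem:dissipation} must be verified in parameter space. Inside $I$ the masks are constant, so the loss is smooth in $\Theta$ there, and $\frac{d}{dt}\mathcal E=-\|\nabla_\Theta\mathcal E\|^2$ follows by the chain rule. I would therefore state (iii) as a KL inequality for this parameter-space gradient. Second, $T_{\mathrm{KL}}$ and the clock times must lie inside $I$. I would make this explicit by requiring $T_1\ge T_{\mathrm{KL}}+C\eta^{-(2\theta-1)}$, or by interpreting the conclusion as holding up to the exit time from $I$. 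The remaining steps are bookkeeping that reuses the deep-linear proofs.
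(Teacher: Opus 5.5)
Your proposal is correct and follows the paper's proof, which chains Proposition~\ref{prop:relu-frozen-gate}, Theorem~\ref{thm:main-exp-convergence} and Theorem~\ref{thm:sharp-rate} and combines them as in Theorem~\ref{thm:temporal-mismatch}. You also make explicit several points the paper's short proof leaves implicit: routing the tail-contraction case through Proposition~\ref{thm:single-step-decay}, noting that the tail estimates use only the gap form (so the per-sample maps $A_i(t)$ are harmless), and checking dissipation in parameter space. Your window caveat is in fact necessary, not just cautious, when weight decay acts on all trainable parameters: along the regularized gradient flow the nonincreasing energy bounds the parameters and hence all training logit gaps, so neither gap growth nor geometric tail contraction can persist indefinitely, and the classifier-clock bound holds only for $\varepsilon$ above a floor set by how long (i) lasts.
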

\proofat{app:proof:cor:relu-temporal-mismatch}

\paragraph{From the clocks to the observed plateau}
The theorem suggests the following reading of the grokking curve. First, the easiest degrees of freedom, often the classifier head or near-head directions, fit the labels and drive down training loss. Second, the spectral penalty keeps reshaping the effective map even though the empirical loss is already small. Third, once the representation has simplified enough to reduce the effective dimension and align with the task geometry, the generalization bound and the observed test accuracy can improve. The plateau is the visible interval between the saturation of the first clock and the completion of the second.

\section{Conclusion and Discussion}
This paper reorganizes grokking around a single dynamical object: two training clocks. The classifier clock measures how quickly the effective classifier drives down the cross-entropy tail. The representation clock measures how quickly the learned map simplifies under spectral regularization. In the deep-linear core model, the first clock is logarithmic under an explicit post-margin gap-growth or tail-contraction condition, while the second clock is polynomial under a sharp late-time KL-tail condition. This convergence-time difference gives a concrete explanation of why a model can interpolate the training set long before its test accuracy improves.

Margin controls the value of the cross-entropy tail, but the softmax Hessian degenerates as margins grow, so the fast clock is stated under a gap-growth or tail-contraction condition. Similarly, the slow clock is stated under a late-time KL geometry assumption; the KL exponent is treated as part of the hypothesis that determines the polynomial representation-clock rate.

The ReLU results are conditional but precise. If activation patterns stabilize on the training set, the network reduces to an active affine subsystem on those samples. If the classifier head is in a head-dominated regime, it can fit before the embedding block and the active representation have finished moving. These statements do not prove global ReLU grokking from first principles, but they explain when the deep-linear two-clock analysis becomes a faithful local model of a nonlinear training run.

Several limitations remain. Arbitrary ReLU networks may fail to enter a stabilized gate regime, and low rank is one simplicity bias among several possible routes behind grokking. We also separate the clock comparison from the empirical question of which structural metric best predicts test accuracy in every task. The paper's narrower claim is that, whenever fitting and simplification are governed by different convergence laws, delayed generalization is not paradoxical: the classifier has finished its job before the representation has finished its own.

Future work should try to derive the post-margin gap-growth condition and the late-time KL exponent from primitive assumptions on finite-width nonlinear networks. It would also be useful to compare the spectral representation clock studied here with other slow clocks, such as sparsity, invariance formation, neural-collapse geometry, or circuit formation in transformers. Such comparisons may reveal whether grokking is a single mechanism or a family of delayed-simplification phenomena.

\section*{Acknowledgments}
\begingroup\small
This work was supported by the CAS Project for Young Scientists in Basic Research (No.~YSBR-034 to S.Z.), the Strategic Priority Research Program of the Chinese Academy of Sciences (No.~XDB0680101 to S.Z.), the National Natural Science Foundation of China (Nos.~32341013 and 12326614), and the Robotic AI-Scientist Platform of the Chinese Academy of Sciences. The authors declare no competing interests.
\endgroup

\bibliographystyle{plainnat}
\bibliography{sample}
\clearpage
\appendix
\setcounter{table}{0}
\renewcommand{\thetable}{S\arabic{table}}
\renewcommand{\theHtable}{S\arabic{table}}

\section{Supplementary Summary Tables}
\label{app:supplementary-summary-tables}

\begin{table}[htbp]
\centering
\small
\setlength{\belowcaptionskip}{6pt}
\caption{Representative grokking settings. Each row records the task family, model class, data or setup, explicit regularization axis used in the experiment, and source.}
\label{tab:grokking}
\setlength{\tabcolsep}{3.8pt}
\renewcommand{\arraystretch}{1.08}
\resizebox{\linewidth}{!}{%
\begin{tabular}{@{}lllll@{}}
\toprule
\rowcolor{gray!15}
\textbf{Task} & \textbf{Model} & \textbf{Data / setup} & \textbf{Reg. axis} & \textbf{Ref.} \\
\midrule
Modular div.        & Transformer      & mod 97, 50\% train      & $\WD=1$        & \citealp{power2022grokking} \\
Modular add.        & Transformer      & mod 97 / 113            & $\WD$ / init.  & \citealp{power2022grokking} \\
Modular arith.      & 2-layer MLP       & mod 97, MSE, full GD    & none explicit  & \citealp{gromov2023grokking} \\
Group operations    & Transformer      & finite groups, e.g. $S_5$ & $\WD=1$      & \citealp{power2022grokking} \\
Local-rule models   & Perceptron / TN   & solvable rules, rule-30 & $\ell_1/\ell_2$ & \citealp{JMLR:v25:22-1228} \\
Modular polynomials & MLP / analytic net & mult. / multi-add.     & task dependent & \citealp{doshi2024grokkingpolynomials} \\
Finite alg. data    & Transformer      & 5-digit add., repeats   & $\WD=1$        & \citealp{nanda2023progress} \\
MNIST               & 3-layer ReLU MLP & 1k images, large init.  & $\WD$ sweep    & \citealp{liu2022omnigrok} \\
IMDb                & 2-layer LSTM     & 1k reviews, large init. & $\WD=1$        & \citealp{liu2022omnigrok} \\
QM9 molecules       & GCNN             & 200 samples, large init. & none explicit & \citealp{liu2022omnigrok} \\
Ungrokking          & Transformer      & mod add., reduced data  & $\WD$ sweep    & \citealp{varma2023explaining} \\
Information probes  & 2-layer FCN      & mod add., mod 97        & $\WD$ / init.  & \citealp{clauw2024information} \\
\bottomrule
\end{tabular}%
}
\end{table}

\begin{table}[htbp]
\centering
\small
\setlength{\belowcaptionskip}{6pt}
\caption{Scope of the main two-clock claims. The table separates theorem-level deep-linear statements, exact conditional ReLU statements, and empirical support, and records the assumptions needed for each claim.}
\label{tab:claim-scope}
\begin{threeparttable}
\setlength{\tabcolsep}{6pt}
\renewcommand{\arraystretch}{1.16}

\begin{tabularx}{\linewidth}{@{}L{3.55cm}Y@{}}
\toprule
\textbf{Claim} & \textbf{Scope and assumptions} \\
\midrule

Fast classifier clock &
\emph{Conditional theorem for deep linear networks and frozen-gate ReLU heads.}
The result assumes post-margin gap growth or one-step tail contraction, together with bounded features.  \\

\addlinespace[2pt]

Polynomial representation clock &
\emph{Conditional theorem for deep linear networks.}
The result uses layerwise weight decay through its Schatten-$p$ surrogate, a late-time sharp KL geometry with exponent $\theta>1/2$, and only finitely many changes of the smooth spectral stratum. \\

\addlinespace[2pt]

Head-first, representation-later hierarchy &
\emph{Exact gradient identity for a two-layer ReLU embedding model.}
The identity follows from chain-rule gradient bounds. The head-dominance interpretation additionally requires bounded downstream factors and nonvanishing hidden features. \\

\addlinespace[2pt]

Conditional delayed generalization &
\emph{Empirical evidence plus conditional transfer for ReLU MLPs.}
The transfer requires stabilized activation patterns on the training set and continued low-effective-dimension drift of the active linear map. \\

\bottomrule
\end{tabularx}

\begin{tablenotes}[flushleft]
\footnotesize
\item \emph{Note.}
The table distinguishes theorem-level statements, conditional ReLU transfers, and empirical evidence. It records the scope of each claim and introduces no additional assumptions.
\end{tablenotes}

\end{threeparttable}
\end{table}

\FloatBarrier

\section{Experimental Settings}\label{app:experimental-setting}

In this appendix, we summarize the experimental setup for the modular-addition ReLU MLP used throughout the figures. The purpose of these experiments is illustrative: they show the qualitative delayed-generalization and low-rank trends that motivate the theory, but the theorems in the main text concern deep linear surrogates or conditional frozen-gate ReLU reductions.

\subsection{Dataset}
We study modular addition over a prime modulus $p$ (with $p=113$ in the main figures unless stated otherwise). Each example is an ordered pair $(a,b)\in\{0,1,\dots,p-1\}^2$ with label
\[
y=(a+b)\bmod p.
\]
This gives $p^2$ total examples spread across $p$ classes. The data set is split into training and test subsets using the same train/test partition across all weight-decay settings so that only the optimization dynamics vary.

\subsection{Model Architecture}
The empirical model is a symmetric MLP with trainable token embeddings:
\begin{itemize}
    \item \textbf{Token embeddings.} Each integer in $\{0,1,\dots,p-1\}$ is mapped to a trainable embedding vector in $\mathbb{R}^{128}$.
    \item \textbf{Symmetric combination.} Given an input pair $(a,b)$, the corresponding embeddings are combined symmetrically by elementwise addition. This enforces the commutativity of modular addition at the input-representation level.
    \item \textbf{Nonlinear classifier.} The combined vector is passed through fully connected layers with ReLU nonlinearities, followed by a final linear classifier and softmax / cross-entropy loss.
\end{itemize}
The model is intentionally small and finite-width so that it operates outside the clean NTK limit and visibly exhibits feature learning.

\subsection{Training Procedure}
We use full-batch training and run optimization long enough to capture the delayed generalization regime. The main sweeps vary the weight-decay coefficient over
\[
\lambda\in\{0.6,0.7,0.8,0.9,1.0\},
\]
while keeping the remaining hyperparameters fixed. Models are evaluated periodically during training to record training loss, test loss, accuracy, and spectral statistics of the learned maps.

\subsection{Loss Function and Evaluation}
Given logits $\ell(x)\in\mathbb{R}^{p}$ and label $y$, the training objective is the average cross-entropy
\[
\mathcal{L}= -\frac{1}{N}\sum_{i=1}^N \log \softmax(\ell(x_i))_{y_i}.
\]
For the spectral diagnostics, we compute the stable rank of the relevant learned linear map or feature matrix, depending on the experiment. Test accuracy is measured in the standard way as the fraction of correctly classified test examples.

\subsection{Hardware and Software Setup}
The experiments were implemented in PyTorch and run on a CUDA-enabled GPU. Random seeds were fixed whenever possible for reproducibility, and all curves shown in the paper were generated from the same codebase and train/test partition.

\subsection{Summary of Hyperparameters}
The main hyperparameters used in the empirical figures are summarized in Table~\ref{tab:hyperparameters}.

\begin{table}[htbp]
\centering
\small
\setlength{\belowcaptionskip}{6pt}
\caption{Main hyperparameters for the modular-addition ReLU MLP experiments.}
\label{tab:hyperparameters}
\begin{tabular}{@{}ll@{}}
\toprule
\rowcolor{gray!12}
\textbf{Hyperparameter} & \textbf{Value}\\
\midrule
Learning rate & $1\times 10^{-3}$\\
Weight decay & $\{0.6,0.7,0.8,0.9,1.0\}$\\
Betas & $(0.9,0.98)$\\
Training epochs & 15{,}000\\
Batch size & Full batch\\
Embedding dimension & 128\\
\bottomrule
\end{tabular}

\end{table}

\section{Fast-Clock Proofs}\label{app:fast-clock-proofs}
In this appendix we give the details for the fast classifier clock. The proofs distinguish margin-value estimates from optimizer convergence estimates; in particular, no PL inequality is inferred from a margin condition alone.

\subsection{Proof of Lemma~\ref{lem:exp-decay}}\label{appendix:proof:lem:exp-decay}
\begin{proof}
For one sample $a$, the gap form of the loss is
\[
\ell_a(A)=\log\left(1+\sum_{m\ne y_a}e^{-\Gamma_{a,m}(A)}\right).
\]
If $\Gamma_{a,m}(A)\ge\gamma$ for every incorrect class, then
\[
\sum_{m\ne y_a}e^{-\Gamma_{a,m}(A)}\le (K-1)e^{-\gamma}.
\]
Therefore
\[
\ell_a(A)\le\log\bigl(1+(K-1)e^{-\gamma}\bigr)\le (K-1)e^{-\gamma},
\]
where the last step uses $\log(1+u)\le u$. Averaging over $a=1,\ldots,M$ gives the same upper bound for $\mathcal L_{\mathrm{CE}}(A)$.
\end{proof}

\subsection{Proof of Lemma~\ref{lem:lipschitz}}\label{appendix:proof:lem:lipschitz}
\begin{proof}
For a single sample with feature vector $x$, write the logits as $z=Ax$ and the softmax vector as $p=\softmax(z)$. The Hessian of the sample loss with respect to the logits is
\[
H_z=\Diag(p)-pp^\top.
\]
For any vector $v\in\mathbb{R}^K$,
\[
v^\top H_zv=\sum_{k=1}^Kp_kv_k^2-\left(\sum_{k=1}^Kp_kv_k\right)^2
=\mathrm{Var}_{k\sim p}(v_k).
\]
The variance of a scalar variable supported on the finite set $\{v_k\}$ is at most one quarter of its squared range, and the squared range is at most $2\|v\|_2^2$. Hence
\[
v^\top H_zv\le \frac12\|v\|_2^2,
\qquad\text{so}\qquad
\|H_z\|_{\mathrm{op}}\le \frac12.
\]
The map $A\mapsto Ax$ has operator norm $\|x\|_2$ from Frobenius norm to Euclidean norm. Therefore the single-sample Hessian with respect to $A$ has operator norm at most $\|x\|_2^2/2\le C^2/2$. Averaging over samples preserves this upper bound, so $\mathcal L_{\mathrm{CE}}$ is $L$-smooth with $L\le C^2/2$.
\end{proof}

\subsection{Proof of Lemma~\ref{lem:gradient-lb}}\label{appendix:proof:lem:gradient-lb}
\begin{proof}
It suffices to give a one-dimensional counterexample. Consider binary logistic loss as a function of the margin $z$,
\[
\ell(z)=\log(1+e^{-z}).
\]
Then
\[
\ell'(z)=-\frac{1}{1+e^{z}}.
\]
As $z\to\infty$, we have $\ell(z)\sim e^{-z}$ and $|\ell'(z)|^2\sim e^{-2z}$. Thus
\[
\frac{|\ell'(z)|^2}{\ell(z)}\sim e^{-z}\to0.
\]
No positive constant $\beta$ can make $|\ell'(z)|^2\ge \beta\ell(z)$ hold uniformly over all sufficiently large margins. Embedding this binary example into any multiclass problem proves the stated impossibility.
\end{proof}

\subsection{Proof of Theorem~\ref{thm:main-exp-convergence}}\label{appendix:proof:thm:main-exp-convergence}
\begin{proof}
Assumption~\ref{assump:margin-persist} gives
\[
\Gamma_{a,m}(A_t)\ge \gamma_0+\nu(t-t_0)
\]
for every training sample and incorrect class. Applying Lemma~\ref{lem:exp-decay} with $\gamma=\gamma_0+\nu(t-t_0)$ yields
\[
\mathcal L_{\mathrm{CE}}(A_t)
\le
(K-1)e^{-\gamma_0-\nu(t-t_0)}.
\]
To make this upper bound at most $\varepsilon$, it is enough that
\[
(K-1)e^{-\gamma_0-\nu(t-t_0)}\le\varepsilon.
\]
Solving for $t$ gives
\[
t\ge t_0+\frac{1}{\nu}\log\!\left(\frac{(K-1)e^{-\gamma_0}}{\varepsilon}\right).
\]
This proves the displayed clock bound.
\end{proof}

\subsection{Proof of Proposition~\ref{thm:single-step-decay}}\label{app:proof:prop:tail-contraction}
\begin{proof}
Iterating the assumed one-step contraction gives
\[
\mathcal L_{\mathrm{CE}}(A_t)
\le
(1-\mu)^{t-t_0}\mathcal L_{\mathrm{CE}}(A_{t_0}).
\]
The condition $\mathcal L_{\mathrm{CE}}(A_t)\le\varepsilon$ is satisfied whenever
\[
(t-t_0)\log\!\left(\frac{1}{1-\mu}\right)
\ge
\log\!\left(\frac{\mathcal L_{\mathrm{CE}}(A_{t_0})}{\varepsilon}\right).
\]
Since $\mu$ is fixed, this is $O(\log(1/\varepsilon))$.
\end{proof}

\section{Spectral and KL Tools for the Slow Clock}\label{app:slow-clock-proofs}
This appendix records the spectral and KL calculations behind the representation clock.

\subsection{Proof of Proposition~\ref{thm:gradient-projection}}\label{app:proof:prop:gradient-projection}
\begin{proof}
On a smooth spectral stratum with $s_i(t)>0$ simple, the singular vectors can be chosen differentiably except on a set of measure zero. The standard perturbation identity for a simple singular value gives
\[
\dot s_i(t)=\langle u_i(t)v_i(t)^\top,\dot A(t)\rangle.
\]
On the same stratum, the spectral penalty has gradient
\[
\nabla\Psi_p(A(t))=\lambda p\,U(t)\mathrm{diag}(s_1(t)^{p-1},\ldots,s_r(t)^{p-1})V(t)^\top.
\]
The gradient flow for the regularized energy is
\[
\dot A(t)=-\nabla\mathcal L_{\mathrm{CE}}(A(t))-\nabla\Psi_p(A(t)).
\]
Projecting this identity onto $u_i(t)v_i(t)^\top$ yields
\[
\dot s_i(t)
=
-\langle\nabla\mathcal L_{\mathrm{CE}}(A(t)),u_i(t)v_i(t)^\top\rangle
-\lambda p\,s_i(t)^{p-1}.
\]
This is the claimed formula after defining the data force $g_i(t)$.
\end{proof}

\subsection{Proof of Theorem~\ref{thm:stable-rank}}\label{app:proof:thm:stable-rank}
\begin{proof}
Let the nonzero singular values of $A_\star$ be $s_1\ge\cdots\ge s_{r_+}>0$. Then
\[
\SRank(A_\star)=\frac{\displaystyle\sum_{i=1}^{r_+}s_i^2}{s_1^2}.
\]
Since each $s_i^2\le s_1^2$, the numerator is at most $r_+s_1^2$, and therefore $\SRank(A_\star)\le r_+$. The lower bound $\SRank(A_\star)\ge1$ holds whenever $A_\star\ne0$ because the numerator contains $s_1^2$.
\end{proof}

\subsection{Stratified Spectral Calculus and Subgradient Flow}\label{app:moreau-yosida}
The Schatten-$p$ penalty with $0<p<1$ is smooth only away from zero singular values. The paper therefore uses a stratum-wise calculation and a limiting-subgradient interpretation at late times.

\begin{theorem}[Local $C^1$ spectral gradient]\label{thm:spectral-C1}
Fix $A_0$ whose nonzero singular values stay positive in a neighborhood $\mathcal U$ and whose rank is constant on that neighborhood. On each smooth stratum inside $\mathcal U$, the map $A\mapsto\Psi_p(A)=\lambda\|A\|_{S_p}^p$ is $C^1$ and
\[
\nabla\Psi_p(A)
=
\lambda p\,U\mathrm{diag}(s_1^{p-1},\ldots,s_r^{p-1})V^\top,
\]
where $A=U\mathrm{diag}(s_1,\ldots,s_r)V^\top$ on the stratum.
\end{theorem}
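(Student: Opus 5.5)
The plan is to reduce the claim to the classical theory of spectral (unitarily invariant) functions, in the form due to Lewis: if $F(A)=f(s(A))$ with $f$ absolutely symmetric and differentiable at $s(A)$, then $F$ is differentiable at $A$ and $\nabla F(A)=U\,\mathrm{diag}(\nabla f(s(A)))\,V^\top$ for any SVD $A=U\mathrm{diag}(s)V^\top$. Here $f(s)=\lambda\sum_i |s_i|^p$. On the neighborhood $\mathcal U$ the rank is constantly $r$ and $s_1,\ldots,s_r$ stay bounded away from zero, while the remaining singular values vanish identically. So on the stratum we may replace $f$ by $f_r(s)=\lambda\sum_{i\le r}|s_i|^p$. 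This function is smooth wherever its first $r$ coordinates are nonzero, which removes the singularity of $|\cdot|^p$ at zero.

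The key steps, in order, are as follows.

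\begin{enumerate}[label=(\alph*)]
\item Describe the stratum $\mathcal M_r=\{A:\Rank A=r\}\cap\mathcal U$ as a smooth embedded submanifold. Its tangent space at $A$ is $\{UMV^\top+U_\perp N V^\top+UP V_\perp^\top\}$, where $U,V$ are the top-$r$ singular factors.
\item Show that on $\mathcal M_r$ the map $A\mapsto \mathrm{tr}\bigl((A^\top A)^{p/2}\bigr)$, restricted to the positive part of the spectrum, is $C^1$. I would write $\Psi_p(A)=\lambda\,\mathrm{tr}\,\phi(A^\top A)$ with $\phi(x)=x^{p/2}$ on $(0,\infty)$. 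The positive eigenvalues of $A^\top A$ lie in a compact interval $[\underline s^2,\overline s^2]\subset(0,\infty)$ on a slightly shrunk neighborhood. Replace $\phi$ by a $C^\infty$ function $\tilde\phi$ that agrees with $x^{p/2}$ on that interval and satisfies $\tilde\phi(0)=0$. Then $\mathrm{tr}\,\tilde\phi(A^\top A)$ is a smooth function of the matrix entries, and it coincides with $\Psi_p/\lambda$ on the stratum.
\item Compute the derivative using the standard trace-function identity $d\,\mathrm{tr}\,\tilde\phi(X)=\mathrm{tr}\bigl(\tilde\phi'(X)\,dX\bigr)$ with $X=A^\top A$ and $dX=dA^\top A+A^\top dA$. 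This gives $\nabla=2A\tilde\phi'(A^\top A)$. Substituting the SVD, this equals $2U\mathrm{diag}(s_i)\mathrm{diag}(\tfrac p2 s_i^{p-2})V^\top=p\,U\mathrm{diag}(s_i^{p-1})V^\top$ for the nonzero block. The zero block contributes nothing because $A$ annihilates it.
\item Conclude that the Riemannian gradient on the stratum, and indeed the full ambient gradient of the smooth extension, is the stated matrix. This matrix is independent of the choice of SVD, since repeated singular values give equal diagonal entries.
\end{enumerate}

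The main obstacle is the extension step (b)–(c). Near $A_0$, ambient perturbations can create new small singular values; off the stratum $\Psi_p$ is not even Lipschitz. So the claim must genuinely be read as $C^1$ \emph{along} the stratum, and the argument has to show that the smooth surrogate $\mathrm{tr}\,\tilde\phi(A^\top A)$ agrees with $\Psi_p$ exactly on $\mathcal M_r$. This needs care because $\tilde\phi$ must vanish at $0$ and match $x^{p/2}$ only on the spectral band of the stratum. The fix is to choose $\tilde\phi$ by a smooth bump that interpolates between $0$ near the origin and $x^{p/2}$ on $[\underline s^2/2,\infty)$. Continuity of singular values (Weyl) then guarantees the band is preserved on a neighborhood within the stratum. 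Differentiability of $\mathrm{tr}\,\tilde\phi(X)$ for smooth $\tilde\phi$ is standard, for instance via a polynomial or Cauchy-integral approximation, so no eigenvector perturbation theory is needed. This also sidesteps the nondifferentiability of individual singular vectors at collisions.
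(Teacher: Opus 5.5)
Your proposal is correct and follows essentially the paper's route: write $\Psi_p$ as $\lambda\,\mathrm{tr}\,\phi(A^\top A)$, differentiate with the trace functional calculus using $dX=A^\top H+H^\top A$, substitute the SVD, and use invariance under rotations within repeated singular values. Your smooth cutoff $\tilde\phi$ with $\tilde\phi(0)=0$ makes rigorous the paper's informal ``restrict to the positive spectral subspace'' step, where $(A^\top A)^{p/2-1}$ is undefined on the kernel; note only that the Lewis framing should be applied to the absolutely symmetric surrogate $\sum_i\tilde\phi(s_i^2)$, not to $f_r$, which is not permutation-invariant.
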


\begin{proof}
Work on one fixed-rank smooth stratum on which the positive singular values remain separated from zero. Let $A=U\Sigma V^\top$ be a compact singular-value decomposition with $\Sigma=\diag(s_1,\ldots,s_r)$ and $s_i>0$. On this stratum, the Schatten term can be written as
\[
\Psi_p(A)=\lambda\operatorname{tr}\bigl((A^\top A)^{p/2}\bigr)
\]
after restricting to the positive spectral subspace. For a tangent perturbation $H$, the differential of $A^\top A$ is $A^\top H+H^\top A$. The trace functional calculus for the positive matrix $A^\top A$ gives
\[
d\Psi_p(A)[H]
=
\frac{\lambda p}{2}\operatorname{tr}\bigl((A^\top A)^{p/2-1}(A^\top H+H^\top A)\bigr).
\]
Substituting $A=U\Sigma V^\top$ and using cyclicity of trace yields
\[
d\Psi_p(A)[H]
=
\lambda p\operatorname{tr}\bigl(V\Sigma^{p-1}U^\top H\bigr)
=
\bigl\langle \lambda p\,U\diag(s_1^{p-1},\ldots,s_r^{p-1})V^\top,H\bigr\rangle.
\]
Therefore the displayed matrix is the Frobenius gradient on the stratum. If some positive singular values have equal multiplicity, the same expression is invariant under rotations inside the equal-singular-value subspace, so the gradient formula is well defined on each smooth stratum.
\end{proof}

\subsection{Proof of Lemma~\ref{lem:dissipation}}\label{app:proof:lem:dissipation}
\begin{proof}
On any smooth spectral interval, the trajectory satisfies $\dot A(t)=-\nabla\mathcal E(A(t))$. Therefore
\[
\frac{d}{dt}\mathcal E(A(t))
=
\langle\nabla\mathcal E(A(t)),\dot A(t)\rangle
=
-\|\nabla\mathcal E(A(t))\|_{\mathrm F}^2\le0.
\]
If only finitely many stratum changes occur after $T_{\mathrm{KL}}$, the same identity holds on each smooth interval and the energy is continuous across the stratum changes. Concatenating the intervals gives monotone dissipation. At nonsmooth instants, the limiting-subgradient formulation replaces $\nabla\mathcal E$ by the minimum-norm limiting subgradient $\partial^0\mathcal E$.
\end{proof}

\subsection{Kurdyka--Lojasiewicz Tail and Polynomial Decay}\label{app:KL-theory}
The main text states the KL geometry as Assumption~\ref{ass:HOF}. Here we show how it yields the polynomial clock.

\subsection{Proof of Theorem~\ref{thm:KL-global}}\label{app:proof:thm:KL-global}
\begin{proof}
Let $\Delta(t)=\mathcal E(A(t))-\mathcal E(A_\star)$. By Lemma~\ref{lem:dissipation} and Assumption~\ref{ass:HOF}, for almost every late time,
\[
\dot\Delta(t)=-\|\partial^0\mathcal E(A(t))\|_{\mathrm F}^2.
\]
The two-sided KL relation~\eqref{eq:two-sided-kl} gives
\[
c_-^2\Delta(t)^{2\theta}
\le
\|\partial^0\mathcal E(A(t))\|_{\mathrm F}^2
\le
c_+^2\Delta(t)^{2\theta}.
\]
Multiplying by $-1$ reverses the inequalities and yields the displayed differential inequality in Theorem~\ref{thm:KL-global}.
\end{proof}

\subsection{Proof of Theorem~\ref{thm:sharp-rate}: Regularizer-Dominated Dynamics and Polynomial Decay}\label{app:regularizer-dynamics}\label{app:proof:thm:sharp-rate}
\begin{proof}
Theorem~\ref{thm:KL-global} gives constants $a,b>0$ such that
\[
-b\Delta(t)^{2\theta}\le \dot\Delta(t)\le -a\Delta(t)^{2\theta}
\]
for all sufficiently large $t$, with $2\theta>1$. Separating variables for the upper inequality gives
\[
\Delta(t)^{1-2\theta}
\ge
\Delta(T_{\mathrm{KL}})^{1-2\theta}+a(2\theta-1)(t-T_{\mathrm{KL}}),
\]
which implies $\Delta(t)=O((t-T_{\mathrm{KL}})^{-1/(2\theta-1)})$. Separating variables for the lower inequality gives the matching lower bound. Hence
\[
\Delta(t)=\Theta\left((t-T_{\mathrm{KL}})^{-1/(2\theta-1)}\right).
\]
If $\mathcal S(A(t))$ is comparable to $\Delta(t)$ on the same neighborhood, then $\mathcal S(A(t))\le\eta$ occurs at time $\Theta(\eta^{-(2\theta-1)})$ after $T_{\mathrm{KL}}$.
\end{proof}

\section{Generalization and ReLU Transfer Proofs}\label{app:gen-relu-proofs}

\subsection{Proof of Theorem~\ref{thm:robust-and-generalization}}\label{app:proof:robust-generalization}
\begin{proof}
Let $C_1,\ldots,C_K$ be the robustness partition associated with the training set $s$. Write $\mu_i=\mathbb P\{z\in C_i\}$ and let $N_i$ be the number of training samples falling in $C_i$. Denote the empirical frequency by $\widehat\mu_i=N_i/n$.

For cells with $N_i>0$, let $\widehat l_i$ be the empirical average of $l(\mathcal A_s,z)$ over training points in $C_i$, and let $\bar l_i$ be the population conditional average over $C_i$. Robustness implies $|\bar l_i-\widehat l_i|\le \epsilon(s)$, because any point in the cell has loss within $\epsilon(s)$ of every training point in the same cell. For empty cells, the contribution of the population cell is bounded by $M\mu_i$, which is already controlled by $M|\mu_i-\widehat\mu_i|$ because $\widehat\mu_i=0$.

Thus, on every realization of the sample,
\[
\bigl|\mathcal L(\mathcal A_s)-l_{\mathrm{emp}}(\mathcal A_s)\bigr|
\le
\epsilon(s)+M\sum_{i=1}^{K}|\mu_i-\widehat\mu_i|.
\]
The remaining step is a multinomial concentration bound. The Bretagnolle--Huber--Carol inequality gives, with probability at least $1-\delta$,
\[
\sum_{i=1}^{K}|\mu_i-\widehat\mu_i|
\le
\sqrt{\frac{2K\ln2+2\ln(1/\delta)}{n}}.
\]
Substituting this estimate into the previous deterministic inequality proves the stated bound.
\end{proof}

\subsection{Proof of Proposition~\ref{prop:volume}}\label{app:proof:prop:volume}
\begin{proof}
For the lower bound, if $N$ radius-$\alpha$ balls cover $B_2^d(R)$, their total volume must be at least the volume of $B_2^d(R)$. Thus
\[
N|\alpha B_2^d(1)|\ge |B_2^d(R)|,
\]
which gives $N\ge (R/\alpha)^d$ by scaling of Euclidean volume.

For the upper bound, take a maximal $\alpha$-separated set in $B_2^d(R)$. The balls of radius $\alpha/2$ around these points are disjoint and lie inside $B_2^d(R+\alpha/2)$. A volume comparison gives a packing number bounded by a universal constant times $(2R/\alpha)^d$ when $\alpha\le R$. Maximality of the packing implies that radius-$\alpha$ balls around the same centers cover $B_2^d(R)$. Taking logarithms gives the stated logarithmic form.
\end{proof}

\subsection{Proof of Theorem~\ref{thm:stable-rank-embedding}}\label{app:thm:stable-rank-embedding}
\begin{proof}
Let the singular values of $A'$ be $s_1\ge s_2\ge\cdots$. By Eckart--Young,
\[
\|A'-A'_{d_\eta}\|_2=s_{d_\eta+1}
\]
when $d_\eta$ is smaller than the rank, and the left-hand side is zero otherwise. Since the singular values are nonincreasing,
\[
(d_\eta+1)s_{d_\eta+1}^2\le \sum_{i\ge 1} s_i^2=\SRank(A')s_1^2.
\]
With $d_\eta=\lceil\SRank(A')/\eta^2\rceil$, this gives $s_{d_\eta+1}\le\eta s_1=\eta\|A'\|_2$. Therefore, for any embedding $z$,
\[
\|(A'-A'_{d_\eta})z\|_2\le\|A'-A'_{d_\eta}\|_2\|z\|_2\le\eta\|A'\|_2\|z\|_2.
\]
The action of $A'_{d_\eta}$ is determined by the top $d_\eta$ right singular directions, which proves the output-effective subspace statement.
\end{proof}

\subsection{Proof of Proposition~\ref{prop:embedding-covering}}\label{app:proof:prop:embedding-covering}
\begin{proof}
The projected set $P_{V_{d_\eta}}\mathcal M_{\mathrm{embed}}$ lies in a $d_\eta$-dimensional Euclidean ball of radius $R$. Applying Proposition~\ref{prop:volume} in dimension $d_\eta$ gives the bound.
\end{proof}

\subsection{Proof of Theorem~\ref{thm:gen-bound-stablerank}}\label{app:proof:thm:gen-bound-stablerank}
\begin{proof}
By Proposition~\ref{prop:embedding-covering}, the projected embedding set has a cover with
\[
K\le\left(\frac{2R}{\alpha}\right)^{d_\eta}
\]
cells. Projected robustness contributes the term $\epsilon$. Theorem~\ref{thm:stable-rank-embedding} gives a logit perturbation bounded by $\eta\|A'\|_2R_z$ for each endpoint of a within-cell comparison; since the loss is $G$-Lipschitz in logits, this contributes at most $2G\eta\|A'\|_2R_z$. Applying Theorem~\ref{thm:robust-and-generalization} with the above $K$ gives
\[
\bigl|\mathcal{L}(f)-\widehat{\mathcal L}_n(f)\bigr|
\le
\epsilon
+2G\eta\|A'\|_2R_z
+M\sqrt{\frac{2K\ln2+2\ln(1/\delta)}{n}}.
\]
Substituting the covering bound for $K$ proves the theorem.
\end{proof}

\subsection{Proofs for the ReLU Reduction and Temporal-Mismatch Statements}\label{app:relu-bridge-proofs}

\subsection{Proof of Proposition~\ref{prop:relu-frozen-gate}}\label{app:proof:prop:relu-frozen-gate}
\begin{proof}
Fix a training sample $x_i$. On the interval $I$, every ReLU gate is represented by a fixed diagonal matrix $D_{\ell,i}^\star$. Replacing each ReLU by this fixed diagonal map and repeatedly substituting through the layers gives
\[
f_\Theta(x_i)=W_{L+1}(t)D_{L,i}^{\star}W_L(t)\cdots D_{1,i}^{\star}W_1(t)x_i+c_i(t),
\]
where $c_i(t)$ is the affine contribution from the biases propagated through the same gates. If all layers below the classifier head are fixed, the active feature vector is fixed and only the final linear map is trained. The objective is then exactly multiclass logistic regression on these fixed features.
\end{proof}

\subsection{Proof of Proposition~\ref{prop:head-dominant-gradient}}\label{app:proof:prop:head-dominant-gradient}
\begin{proof}
For one sample, write
\[
a_x=Wz_x+b,
\qquad h_x=\phi(a_x)=D_xa_x,
\qquad f_\Theta(x)=Uh_x.
\]
The derivative of cross-entropy with respect to the logits is $\delta_x=p_x-e_y$. The chain rule gives
\[
\nabla_U\ell_x=\delta_xh_x^\top.
\]
For the hidden weights,
\[
\nabla_W\ell_x=D_xU^\top\delta_xz_x^\top,
\]
and for the embedding table,
\[
\nabla_E\ell_x=B_x^\top W^\top D_xU^\top\delta_x.
\]
Using $\|D_x\|_2\le1$ and submultiplicativity yields the two upper bounds on $\|\nabla_W\ell_x\|_{\mathrm F}$ and $\|\nabla_E\ell_x\|_{\mathrm F}$. The identity $\|\delta_xh_x^\top\|_{\mathrm F}=\|\delta_x\|_2\|h_x\|_2$ gives the head-gradient formula.
\end{proof}

\subsection{Proof of Theorem~\ref{thm:temporal-mismatch}}\label{app:proof:thm:temporal-mismatch}
\begin{proof}
The classifier-clock estimate is Theorem~\ref{thm:main-exp-convergence}. It gives
\[
T_{\mathrm{cls}}(\varepsilon)=O\left(\log(1/\varepsilon)\right).
\]
The representation-clock estimate is Theorem~\ref{thm:sharp-rate}. If $\mathcal S(A(t))$ is comparable to the KL energy gap, then
\[
T_{\mathrm{rep}}(\eta)-T_{\mathrm{KL}}
=
\Theta\left(\eta^{-(2\theta-1)}\right).
\]
Substituting $\eta\asymp\varepsilon^q$ gives
\[
T_{\mathrm{rep}}(\varepsilon^q)-T_{\mathrm{KL}}
=
\Theta\left(\varepsilon^{-q(2\theta-1)}\right).
\]
Since $q(2\theta-1)>0$, this grows polynomially in $1/\varepsilon$, while the classifier clock grows only logarithmically.
\end{proof}

\subsection{Proof of Corollary~\ref{cor:relu-temporal-mismatch}}\label{app:proof:cor:relu-temporal-mismatch}
\begin{proof}
Assumption (ii) allows Proposition~\ref{prop:relu-frozen-gate} to replace the ReLU network on the training set by an active linear subsystem. Assumption (i) gives the fast classifier clock on this subsystem through Theorem~\ref{thm:main-exp-convergence}. Assumption (iii) gives the polynomial representation clock through Theorem~\ref{thm:sharp-rate}. Combining the two estimates gives the same temporal mismatch as in Theorem~\ref{thm:temporal-mismatch}.
\end{proof}

\section{Supplementary Remarks on the Two-Clock Mechanism}\label{app:technical-clarifications}
This appendix collects secondary calculations and scope checks that support the two-clock mechanism while keeping the main narrative focused. Its purpose is deliberately limited. The main text proves a fast classifier clock under a post-margin tail condition, a slow representation clock under a late-time structural tail condition, and a conditional transfer statement for stable ReLU regions. The material below records auxiliary estimates, empirical estimators, and boundary cases in a form that can be checked independently of the exposition in the main text.

The appendix is organized around the two stopping times and the auxiliary estimates used to interpret them. Subsection~\ref{app:classifier-clock-calculations} records classifier-clock calculations. Subsection~\ref{app:representation-clock-calculations} records representation-clock calculations. Subsection~\ref{app:conditional-relu-more} states the frozen-gate ReLU transfer more explicitly. Subsection~\ref{app:empirical-clock-estimation} describes how the clocks are estimated in finite training runs. Subsection~\ref{app:scope-temporal-mismatch} states the scope of the temporal-mismatch mechanism.

\subsection{Classifier-Clock Calculations}\label{app:classifier-clock-calculations}
The classifier clock is controlled by the softmax tail after the incorrect-class logits have separated from the correct logit. The estimates in this subsection distinguish three facts that are sometimes conflated: a margin gives a small loss value, a PL inequality requires more than a margin, and discrete-time contraction is a separate optimizer property.

\paragraph{Softmax tail estimates}\label{app:softmax-tail-details}
For one sample with correct class $y$, write the incorrect-class gaps as $\Gamma_m=(w_y-w_m)^\top x$ for $m\ne y$. The cross-entropy loss can be written in gap variables as
\[
\ell(A)=\log\left(1+\sum_{m\ne y}e^{-\Gamma_m}\right).
\]
This identity is the source of the classifier-clock estimate: once the incorrect-class gaps grow, the loss tail is governed by the exponentials of these gaps.

The elementary comparison needed in Lemma~\ref{lem:exp-decay} is
\[
\max_{m\ne y}e^{-\Gamma_m}
\le
\sum_{m\ne y}e^{-\Gamma_m}
\le
(K-1)\max_{m\ne y}e^{-\Gamma_m}.
\]
Consequently, a uniform lower bound $\Gamma_m\ge \gamma$ gives
\[
\ell(A)
\le
\log\bigl(1+(K-1)e^{-\gamma}\bigr)
\le
(K-1)e^{-\gamma}.
\]
This is a value estimate. It says that a sufficiently large gap makes the empirical softmax tail small. The optimizer's rate of increasing the gap is a separate question.

\paragraph{No margin-implied PL inequality}
A positive margin is not a uniform Polyak--Lojasiewicz condition for the cross-entropy tail. The obstruction already appears in the binary one-sample reduction. Let $z=(w_1-w_2)^\top x$ and let
\[
\ell(z)=\log(1+e^{-z}).
\]
Then
\[
\ell'(z)=-\frac{1}{1+e^z},
\qquad
\ell''(z)=\frac{e^z}{(1+e^z)^2}.
\]
As $z\to\infty$, the loss, gradient, and curvature satisfy
\[
\ell(z)\asymp e^{-z},
\qquad
|\ell'(z)|^2\asymp e^{-2z},
\qquad
\ell''(z)\asymp e^{-z}.
\]
Thus
\[
\frac{|\ell'(z)|^2}{\ell(z)}\asymp e^{-z}\to0.
\]
No constant $\beta>0$ can make $|\ell'(z)|^2\ge \beta\ell(z)$ hold uniformly in the large-margin tail. This is why Theorem~\ref{thm:main-exp-convergence} assumes either post-margin gap growth or one-step tail contraction.

\paragraph{Discrete-time contraction}\label{app:fast-clock-discrete-continuous}
The continuous-time notation in the main text has a direct discrete analogue. Suppose that, after a time $t_0$, the empirical loss sequence satisfies
\[
\mathcal L_{t+1}\le (1-\mu)\mathcal L_t,
\qquad 0<\mu<1.
\]
Iterating the contraction gives
\[
\mathcal L_t\le (1-\mu)^{t-t_0}\mathcal L_{t_0}.
\]
Since $1-\mu\le e^{-\mu}$, it is sufficient to take
\[
 t-t_0
 \ge
 \frac{1}{\mu}\log\!\left(\frac{\mathcal L_{t_0}}{\varepsilon}\right)
\]
to reach $\mathcal L_t\le\varepsilon$. This is the discrete classifier clock. It has the same logarithmic dependence on $1/\varepsilon$ as the continuous comparison, but it depends on a genuine tail-contraction statement for the training dynamics.

A gap-growth condition gives another route to the same stopping time. If, over a post-margin interval,
\[
\Gamma_{a,m}(A_t)\ge \gamma_0+\nu(t-t_0),
\]
then Lemma~\ref{lem:exp-decay} gives
\[
\mathcal L_{\mathrm{CE}}(A_t)
\le
(K-1)e^{-\gamma_0-\nu(t-t_0)}.
\]
Solving this inequality for $t$ again gives a logarithmic classifier clock. The two formulations are complementary: one is stated directly in the loss tail, while the other is stated in the logit gaps.

\paragraph{Multiclass smoothness}\label{app:smoothness-constant-detail}
For completeness, we record the smoothness constant used in Lemma~\ref{lem:lipschitz}. For logits $z=Ax$ and softmax vector $p=\softmax(z)$, the Hessian of the single-sample loss with respect to $z$ is
\[
H_z=\diag(p)-pp^\top.
\]
For any $v\in\mathbb R^K$, the corresponding quadratic form is
\[
 v^\top H_z v
 =
 \sum_{k=1}^Kp_k v_k^2-\biggl(\sum_{k=1}^Kp_kv_k\biggr)^2.
\]
This expression is the variance of a scalar random variable taking values $v_1,\ldots,v_K$ under distribution $p$. Its maximum over unit vectors is at most $1/2$, and the binary case with $p=(1/2,1/2)$ attains the bound. Pulling the Hessian back through the map $A\mapsto Ax$ therefore gives a global smoothness bound $C^2/2$ when $\|x\|_2\le C$. The constant is used only for descent estimates; it does not alter the non-PL conclusion above.

\subsection{Representation-Clock Calculations}\label{app:representation-clock-calculations}
The representation clock is driven by a structural quantity after the cross-entropy tail has become small. In the deep-linear surrogate, layerwise weight decay induces a spectral pressure on the end-to-end map. The calculations below state the algebraic surrogate, the local spectral drift, and the comparison estimate that produces the polynomial time scale.

\paragraph{Layerwise decay and the Schatten surrogate}\label{app:layerwise-schatten-details}
Consider an $L$-layer linear factorization
\[
A=W_LW_{L-1}\cdots W_1.
\]
For a fixed end-to-end map $A$, the minimum layerwise squared Frobenius cost over exact factorizations is
\[
\inf_{W_L\cdots W_1=A}\sum_{\ell=1}^L\|W_\ell\|_{\mathrm F}^2
=
L\|A\|_{S_{2/L}}^{2/L}.
\]
This identity explains why layerwise weight decay becomes a nonconvex spectral penalty in the end-to-end model when $L>2$.

To verify the identity, write a compact singular-value decomposition
\[
A=U\diag(s_1,\ldots,s_r)V^\top .
\]
The balanced factorization assigns singular values $s_i^{1/L}$ to each layer along the same singular directions. Its cost is
\[
\sum_{\ell=1}^L\|W_\ell\|_{\mathrm F}^2
=
L\sum_{i=1}^r s_i^{2/L}
=
L\|A\|_{S_{2/L}}^{2/L}.
\]
The matching lower bound follows by applying the arithmetic-geometric mean inequality to the layerwise singular values associated with each active direction. Thus the balanced factorization attains the infimum.

The paper uses this identity as an end-to-end surrogate; a fully factorized training path would require additional control of balance over time. A fully factorized analysis would also need to control imbalance between layers, singular-vector rotation, and rank bottlenecks. These effects are important for a complete dynamics of deep linear networks, but they are separate from the clock comparison isolated here.

\paragraph{Local spectral drift}\label{app:spectral-drift-local}
The singular-value drift formula is a local statement on a smooth spectral stratum. If $s_i(t)>0$ is simple and the corresponding singular vectors vary smoothly, differentiating the regularized flow yields
\[
\dot s_i(t)=g_i(t)-\lambda p s_i(t)^{p-1}.
\]
Here $g_i(t)$ denotes the component of the data force in the $i$th singular direction. The term $-\lambda p s_i(t)^{p-1}$ is stronger for smaller positive singular values when $0<p<1$, which is the basic spectral-selection mechanism.

The formula is local and should be handled with care near nonsmooth spectral events. When singular values collide, individual singular vectors may lose uniqueness. When $s_i=0$ and $0<p<1$, the derivative of $s_i^p$ is singular. Moreover, $g_i(t)$ depends on the full matrix $A(t)$, not just on $s_i(t)$. For these reasons the main text uses the drift calculation to identify the mechanism and then states the actual representation clock through a late-time KL-tail condition.

\paragraph{Polynomial comparison estimate}\label{app:polynomial-clock-comparison}
Let $\Delta(t)$ be the late-time energy gap used to define the representation clock. The polynomial time scale appears when, for all sufficiently large $t$,
\[
-c_2\Delta(t)^{2\theta}
\le
\dot\Delta(t)
\le
-c_1\Delta(t)^{2\theta},
\qquad
0<c_1\le c_2,
\qquad
\theta\in(1/2,1).
\]
Put $r=2\theta-1>0$. Since
\[
\frac{d}{dt}\Delta(t)^{-r}
=
-r\Delta(t)^{-r-1}\dot\Delta(t),
\]
and $r+1=2\theta$, the differential inequality becomes
\[
r c_1
\le
\frac{d}{dt}\Delta(t)^{-r}
\le
r c_2.
\]
Integrating from $T$ to $t$ gives
\[
\Delta(T)^{-r}+r c_1(t-T)
\le
\Delta(t)^{-r}
\le
\Delta(T)^{-r}+r c_2(t-T).
\]
After inversion, for large $t$ there exist constants $C_1,C_2>0$ such that
\[
C_1(t-T)^{-1/r}
\le
\Delta(t)
\le
C_2(t-T)^{-1/r}.
\]
Hence
\[
\Delta(t)=\Theta\bigl((t-T)^{-1/(2\theta-1)}\bigr).
\]
If the structural gap $\mathcal S(A(t))$ is comparable to $\Delta(t)$ on the same late-time region, reaching $\mathcal S(A(t))\le\eta$ requires
\[
 t-T=\Theta\bigl(\eta^{-(2\theta-1)}\bigr).
\]
This calculation is the source of the polynomial representation clock. A one-sided KL inequality gives a one-sided rate, while the two-sided sharp tail in Assumption~\ref{ass:HOF} supports the stopping-time comparison in Theorem~\ref{thm:temporal-mismatch}.

\subsection{Conditional ReLU Transfer}\label{app:conditional-relu-more}
The ReLU component of the paper is a local transfer principle: on intervals where the activation pattern on the training set is fixed, the network coincides with an active linear subsystem on those samples.

For a ReLU preactivation $h_i(x;\Theta)$, define the training-set gate by
\[
\sigma_{i,a}(\Theta)=\mathbf 1\{h_i(x_a;\Theta)>0\}.
\]
If all gates $\sigma_{i,a}$ remain fixed on a time interval, each ReLU nonlinearity is replaced on the training set by a fixed diagonal mask. On that interval the network output on the training samples is an affine function of the remaining linear weights. This is the exact algebra used in Proposition~\ref{prop:relu-frozen-gate}.

A deterministic sufficient condition for this stability is a margin to each gate hyperplane. Suppose that at time $T$,
\[
\min_{i,a}|h_i(x_a;\Theta_T)|\ge \rho>0.
\]
If the parameter path satisfies
\[
\sup_{t\in[T,T+\tau]}
\max_{i,a}|h_i(x_a;\Theta_t)-h_i(x_a;\Theta_T)|<\rho,
\]
then no gate changes on $[T,T+\tau]$. In practice, this condition requires a priori control of parameter movement. The paper therefore uses it as a conditional bridge from nonlinear dynamics to the active linear model.

The head-gradient comparison has the same status. If the output is written as
\[
f_{W,E}(a,b)=W\phi_E(a,b),
\]
then the chain rule gives
\[
\nabla_W\mathcal L = G\phi_E^\top,
\qquad
\nabla_E\mathcal L = (D_E\phi_E)^\ast W^\top G,
\]
where $G$ is the derivative of the loss with respect to the logits. The embedding gradient carries the additional downstream factor $W^\top$ and the feature Jacobian $D_E\phi_E$. When these factors are controlled, the identity explains why the classifier head can move on a faster effective clock than the embedding block. The identity alone is algebraic; the time-scale interpretation requires the norm bounds stated in Proposition~\ref{prop:head-dominant-gradient}.

\subsection{Empirical Estimation of the Clocks}\label{app:empirical-clock-estimation}
The mathematical clocks are stopping times. In experiments they are estimated from finite, noisy trajectories, so the empirical version is best interpreted as a windowed diagnostic. The figures in the paper are used in this sense: they show that classifier-side quantities stabilize early, while representation-side quantities continue to evolve over a longer interval.

\paragraph{Classifier-side estimator}\label{app:measuring-clocks}
Given a recorded sequence of training losses $\mathcal L_0,\mathcal L_1,\ldots$, the empirical classifier clock at threshold $\varepsilon$ is
\[
\widehat T_{\mathrm{cls}}(\varepsilon)
=
\min\{t:\mathcal L_t\le \varepsilon\}.
\]
For noisy runs one may require the inequality to persist over a short window. A second classifier-side diagnostic is the empirical minimum gap
\[
\widehat\Gamma_t=
\min_a\min_{m\ne y_a}\bigl(f_{y_a}(x_a;t)-f_m(x_a;t)\bigr).
\]
The gap complements the loss-based stopping time. It is used to check whether the trajectory has entered a post-margin regime in which the loss tail can be compared to exponentials of the gaps.

\paragraph{Representation-side estimator}
The representation clock is attached to a structural threshold. In the theory this threshold may be an energy gap, a stable-rank tail, or another quantity comparable to the late-time KL gap. In finite experiments, a practical spectral statistic is
\[
\widehat S_t
=
\frac{\|A_t\|_{\mathrm F}^2}{\|A_t\|_2^2}.
\]
A more localized spectral-tail statistic at rank $r$ is
\[
\widehat R_t(r)
=
\frac{\displaystyle\sum_{i>r}s_i(A_t)^2}
       {\displaystyle\sum_{i\ge 1}s_i(A_t)^2}.
\]
Either statistic can define an empirical representation clock by the first time it falls below a chosen threshold. The appropriate statistic depends on what structural simplification the task is expected to use.

\paragraph{Windowed comparison}\label{app:figure-reading}
A finite training run rarely displays a clean power law over the whole trajectory. The relevant empirical question is therefore comparative. One asks whether the training loss and classifier-side diagnostics stabilize over an early window while the representation-side diagnostics continue to change over a substantially longer window. If the test curve improves during the latter interval, the run is consistent with the temporal-mismatch mechanism.

This interpretation keeps the experimental claim at the same level as the theory. The deep-linear results prove a conditional separation between a logarithmic classifier clock and a polynomial representation clock. The nonlinear modular-addition figures show trajectories in which fitting is early and structural simplification is late. They are not used as proof that every grokking run satisfies the deep-linear assumptions.

\paragraph{Empirical protocol}\label{app:empirical-protocol}
A clock-based experiment should report aligned time series for the training loss, a classifier-side gap or head-norm statistic, a representation-side structural statistic, and test accuracy. The alignment prevents the test curve from being interpreted in isolation. In a typical run, one first chooses a loss threshold $\varepsilon$ below the visible interpolation plateau, then records the first sustained time at which the training loss stays below that threshold. One then chooses a structural threshold $\eta$ and records the first sustained time at which the representation statistic stabilizes or falls below the threshold.

Three outcomes are especially informative. If the structural clock and the test curve move together while the classifier clock has already stabilized, the run supports the two-clock interpretation. If the test curve improves before the chosen structural statistic changes, the statistic is probably not the right representation variable for that task. If the statistic changes but test accuracy does not improve, the simplification may not be aligned with the task geometry. These cases are not contradictions; they indicate whether the selected structural gap is relevant to the observed generalization.

\paragraph{Persistence windows}
Thresholds in finite training curves should be interpreted with a persistence convention. For a window length $q\ge1$, define the sustained classifier time by
\[
\widehat T_{\mathrm{cls}}^{(q)}(\varepsilon)
=
\min\{t:\mathcal L_{t+j}\le\varepsilon\text{ for }j=0,\ldots,q-1\}.
\]
The corresponding representation-side estimator is
\[
\widehat T_{\mathrm{rep}}^{(q)}(\eta)
=
\min\{t:\widehat S_{t+j}\le\eta\text{ for }j=0,\ldots,q-1\}.
\]
The windowed definitions are practical empirical diagnostics: they avoid declaring a clock complete because of a single noisy iterate. They also make the empirical comparison closer to the theorem statements, which concern behavior over an interval instead of a one-step fluctuation.

\paragraph{Time parametrization}
The theoretical statements are written in a continuous time variable $t$, while the experiments are indexed by optimization steps. When the learning rate is constant, the two parametrizations differ only by a scalar factor. With a schedule $\alpha_0,\alpha_1,\ldots$, a closer comparison is obtained by using the accumulated optimizer time
\[
\tau_n=\sum_{k=0}^{n-1}\alpha_k.
\]
The empirical curves may then be plotted against $\tau_n$ instead of the raw iteration index $n$. This convention is not essential for the qualitative figures in the paper, but it is useful when comparing runs with different learning-rate schedules.

The same point applies to the interpretation of logarithmic and polynomial clocks. A logarithmic classifier clock in optimizer time concerns optimizer time; wall-clock time and epoch count may scale differently under different schedules. It means that, once the post-margin condition is expressed in the same time variable as the dynamics, the target loss level is reached after an interval proportional to $\log(1/\varepsilon)$. The representation clock should be read in the same parametrization.

\paragraph{Accuracy versus loss-tail clocks}
Training accuracy is a useful visual diagnostic, but it is too coarse to define the classifier clock. The zero-one fitting time
\[
\widehat T_{\mathrm{acc}}
=
\min\{t:\text{all training samples are correctly classified at time }t\}
\]
can occur before the cross-entropy tail is small. In a binary problem, the sign of the margin determines accuracy, whereas the value of $\log(1+e^{-z})$ continues to depend on the magnitude of $z$. The paper therefore uses $\widehat T_{\mathrm{cls}}(\varepsilon)$ as the classifier-side stopping time in place of $\widehat T_{\mathrm{acc}}$.

This distinction is relevant for grokking experiments because the visually flat training-accuracy curve often appears earlier than the flattening of the training loss. The two-clock mechanism concerns the interval after the classifier tail has become small, not merely the interval after zero-one training accuracy has reached one. Using the loss tail makes the clock definition compatible with the softmax estimates in Section~3 and avoids mixing a discontinuous accuracy event with a smooth gradient-flow analysis.

A useful diagnostic is the separation ratio
\[
\widehat R_{\mathrm{time}}(\varepsilon,\eta)
=
\frac{\widehat T_{\mathrm{rep}}^{(q)}(\eta)+1}
     {\widehat T_{\mathrm{cls}}^{(q)}(\varepsilon)+1}.
\]
The ratio depends on the units of training time, the learning-rate schedule, and the chosen thresholds. Its role is descriptive: a large ratio indicates that the structural statistic remains active long after the loss tail has crossed its threshold. For comparisons across runs, the more stable object is the ordering and separation of the aligned curves, not the numerical value of the ratio.

\paragraph{Scale and normalization}
When the representation object is an end-to-end matrix, the statistics above are insensitive to some rescalings but not to all parameterizations. Stable rank is scale invariant, whereas a raw Frobenius tail is not. If $A_t$ is multiplied by a scalar, then
\[
\frac{\|A_t\|_{\mathrm F}^2}{\|A_t\|_2^2}
\]
remains unchanged, but an unnormalized tail $\sum_{i>r}s_i(A_t)^2$ changes by the square of that scalar. This is why the paper uses normalized spectral quantities in the empirical discussion whenever the magnitude of the logits is still changing. A normalized statistic better isolates shape change from mere growth in the head or logit scale.

For embedding plots, the same principle applies. A geometric visualization should be interpreted together with a numerical statistic, because two-dimensional projections can make late movement appear more or less dramatic depending on the plotting scale. The clock comparison is therefore based on aligned scalar time series, with visualizations used as supporting evidence for the nature of the structural change.

\paragraph{Alternative structural gaps}\label{app:alternative-structural-gaps}
Stable rank is used in the main text because it is a simple proxy for effective dimension and is directly connected to the spectral surrogate. Other representation clocks are also possible. The same proof template can use another structural gap if that gap is linked to a generalization or robustness statement and is comparable to the late-time energy gap on the region where the KL analysis is applied.

One alternative is a class-separation gap in representation space. Let $z_a(t)$ be the learned representation of sample $a$, and let $\mu_c(t)$ be the empirical center of class $c$. A normalized separation statistic is
\[
\mathcal S_{\mathrm{sep}}(t)
=
\frac{\displaystyle\sum_{a=1}^{M}\|z_a(t)-\mu_{y_a}(t)\|_2^2}
     {\displaystyle\sum_{\substack{c,c'=1\\ c\ne c'}}^{K}
      \|\mu_c(t)-\mu_{c'}(t)\|_2^2+\delta}.
\]
This quantity measures within-class spread relative to between-class separation. It is useful when the learned rule is expressed through class clustering instead of spectral collapse of a single end-to-end matrix.

For modular arithmetic, a more task-specific quantity can be defined using a projection $P_{\mathrm{rule}}$ onto a task-aligned Fourier or cyclic subspace:
\[
\mathcal S_{\mathrm{rule}}(t)
=
\frac{\|(I-P_{\mathrm{rule}})A_t\|_{\mathrm F}^2}
     {\|A_t\|_{\mathrm F}^2+\delta}.
\]
Parseval's identity implies that this ratio measures the fraction of Frobenius energy outside the task-aligned Fourier or cyclic subspace. This statistic can be sharper for modular-addition grokking, but it is less architecture-agnostic than stable rank. A third possibility is a neural-collapse-type simplex error when late training aligns class means with an equiangular geometry. These alternatives emphasize that the representation clock is a role in the argument, not a commitment to one universal metric.

\subsection{Scope of the Temporal-Mismatch Mechanism}\label{app:scope-temporal-mismatch}
The final claim of the paper is a comparison of convergence times, with low rank serving as one mathematically transparent route to generalization. The assumptions below are stated so that each part of the argument can be checked separately.

\paragraph{Rationale for the formulation}\label{app:rationale-formulation}
The fast clock is formulated through gap growth or tail contraction because a margin value bound is weaker than an optimizer convergence statement. The representation clock is formulated through a sharp late-time KL tail because the singular-value drift is local and does not by itself determine a global stopping time. The ReLU statement is conditional because fixed activation masks give an exact active linear subsystem, while entrance into such a region is a separate dynamical question. These choices keep the theorem statements narrower, but they also make the mechanism mathematically testable.

\paragraph{Dependency structure}\label{app:dependency-structure}
Table~\ref{tab:dependency-structure} lists the inputs used by each component of the argument. The table is included to separate mathematical dependencies from empirical interpretations.

\begin{table}[ht]
\centering
\small
\setlength{\belowcaptionskip}{6pt}
\caption{Dependency structure of the two-clock argument. Each row states the mathematical input used by one part of the paper and the conclusion derived from it.}
\label{tab:dependency-structure}
\setlength{\tabcolsep}{4pt}
\begin{tabular}{@{}>{\raggedright\arraybackslash}p{3.2cm}>{\raggedright\arraybackslash}p{5.3cm}>{\raggedright\arraybackslash}p{5.1cm}@{}}
\toprule
\rowcolor{gray!12}
\textbf{Component} & \textbf{Mathematical input} & \textbf{Conclusion}\\
\midrule
Classifier tail & Uniform lower bound on incorrect-class logit gaps & Cross-entropy is bounded by an exponential softmax tail.\\
Fast classifier clock & Post-margin gap growth or one-step tail contraction & The loss reaches level $\varepsilon$ in $O(\log(1/\varepsilon))$ time.\\
Spectral drift & Smooth spectral stratum and positive simple singular value & Small singular values receive a stronger shrinkage term $s^{p-1}$.\\
Slow representation clock & Sharp late-time KL geometry with $\theta>1/2$ & The structural gap closes on a polynomial time scale.\\
ReLU transfer & Stable activation masks on the training set & The nonlinear network is locally an active linear subsystem.\\
Delayed generalization & Low effective dimension plus robustness of logits & Structural simplification can improve the generalization bound after fitting.\\
\bottomrule
\end{tabular}
\end{table}

The table should be read as a map of the proof. A different task may use a different representation statistic, a different form of implicit regularization, or a different local model for the nonlinear network. The present paper isolates the case in which delayed generalization is explained by a temporal separation between fitting and structural simplification.

\paragraph{Compatibility with other mechanisms}
The two-clock formulation is compatible with more specific accounts of representation learning. For modular arithmetic, a Fourier-based description may identify the rule learned by the network. For classification tasks with class means, a neural-collapse description may identify the limiting geometry. For data with symmetries, a group-representation description may identify the invariant subspace. In each case, the role of the representation clock is to measure the time required for the corresponding structure to emerge after the classifier has already fit the training labels.

This distinction is useful because it separates what the paper explains from what a task-specific theory may additionally explain. The present theory explains why a delay can appear when the stopping time for fitting is much shorter than the stopping time for structural simplification. A Fourier, neural-collapse, or group-theoretic analysis would explain which structure is being approached in a particular task. These explanations are not mutually exclusive; they operate at different levels of resolution.

Formally, replacing stable rank by a task-specific structural gap changes the definition of $T_{\mathrm{rep}}$ but not the logical comparison. If $\mathcal S_{\mathrm{task}}(A(t))$ is the relevant structural gap and satisfies a late-time comparison
\[
\mathcal S_{\mathrm{task}}(A(t))\asymp \Delta(t),
\]
then the same KL-tail calculation yields a representation clock for that structure. The main text uses stable rank because it is a general spectral proxy and is naturally connected to layerwise decay, while allowing other structural quantities in task-specific theories.

\paragraph{Empirical checks against the mechanism}
A convincing empirical case for the two-clock mechanism requires more than a single grokking curve. The mechanism would be weakly supported in a run where the training loss, the representation statistic, and the test accuracy all change on the same time scale. It would also be weakly supported if the chosen structural statistic remains essentially constant while test accuracy improves. In that case the statistic may not be measuring the relevant representation variable, or the delay may be caused by a different mechanism.

Conversely, a run is more informative when it shows a stable ordering of events across thresholds: the loss tail crosses small levels first, the representation statistic continues to evolve, and test accuracy improves only after or during this structural movement. The paper's experiments are intended to show this ordering for the studied setting.

\paragraph{Notation for the stopping-time comparison}\label{app:notation-guide}
The notation in Table~\ref{tab:notation-guide} is used throughout the clock comparison. The same end-to-end object may appear as a classifier, a deep-linear map, or an active ReLU subsystem, depending on the part of the argument being invoked.

\begin{table}[ht]
\centering
\small
\setlength{\belowcaptionskip}{6pt}
\caption{Main notation for the two-clock argument.}
\label{tab:notation-guide}
\setlength{\tabcolsep}{4pt}
\begin{tabular}{@{}>{\raggedright\arraybackslash}p{3.0cm}>{\raggedright\arraybackslash}p{10.8cm}@{}}
\toprule
\rowcolor{gray!12}
\textbf{Symbol} & \textbf{Meaning}\\
\midrule
$A$ & Effective linear classifier or end-to-end active linear map.\\
$W_L\cdots W_1$ & Deep linear factorization of the end-to-end map.\\
$\Gamma_{a,m}(A)$ & Logit gap between the correct class of sample $a$ and incorrect class $m$.\\
$\mathcal L_{\mathrm{CE}}$ & Empirical cross-entropy loss.\\
$T_{\mathrm{cls}}(\varepsilon)$ & First time at which the classifier loss tail is at most $\varepsilon$.\\
$\Psi_p(A)$ & Schatten-type spectral penalty induced by layerwise decay in the surrogate model.\\
$\mathcal E(A)$ & Regularized end-to-end energy $\mathcal L_{\mathrm{CE}}(A)+\Psi_p(A)$.\\
$\Delta(t)$ & Energy gap $\mathcal E(A(t))-\mathcal E(A_\star)$.\\
$\theta$ & Late-time KL exponent for the representation clock.\\
$\mathcal S(A)$ & Structural gap used to define the representation clock.\\
$T_{\mathrm{rep}}(\eta)$ & First time at which the structural gap is at most $\eta$.\\
$\SRank(A)$ & Stable rank $\|A\|_{\mathrm F}^2/\|A\|_2^2$.\\
\bottomrule
\end{tabular}
\end{table}

The important distinction is between $T_{\mathrm{cls}}$ and $T_{\mathrm{rep}}$. The first is attached to the empirical loss tail. The second is attached to structural simplification. The paper's grokking interpretation is the separation between these two stopping times.

\paragraph{Composition of the proof blocks}\label{app:expanded-proof-architecture}
The first proof block isolates the classifier tail. For sample $a$, the loss is written as
\[
\ell_a(A)=\log\left(1+\sum_{m\ne y_a}e^{-\Gamma_{a,m}(A)}\right).
\]
This identity contains the classifier-clock mechanism. If the gaps grow after time $t_0$, the loss is bounded by an exponential function of $t-t_0$. The proof uses the inequality $\log(1+u)\le u$ and a uniform lower bound on the gaps. The optimizer-dependent part is not hidden in this identity; it is exactly the post-margin growth or tail-contraction assumption.

The second proof block concerns structural simplification. On a smooth spectral stratum, the regularized flow gives
\[
\dot s_i(t)
=
-\langle\nabla\mathcal L_{\mathrm{CE}}(A(t)),u_i(t)v_i(t)^\top\rangle
-
\lambda p s_i(t)^{p-1}.
\]
This local calculation identifies the spectral-selection force. The global time scale comes from the late-time gap relation
\[
\dot\Delta(t)\asymp -\Delta(t)^{2\theta},
\qquad \theta>1/2.
\]
Integrating the relation gives the polynomial tail for the representation clock. The local drift and the global KL comparison play different roles: one identifies the direction of structural pressure, and the other controls the time needed to reach a structural tolerance.

The third proof block connects structural simplification to delayed generalization. The deterministic argument is a covering-number bound on an effectively lower-dimensional output class, with a spectral-tail error term left explicit. This avoids treating stable rank as equivalent to generalization. The temporal theorem then composes the two rates: the classifier loss reaches a small target on a logarithmic clock, whereas the structural gap can reach its target only on a polynomial clock. The grokking interval is the interval between these two stopping times.

\paragraph{Boundary cases}
Correct classification alone is weaker than the fast classifier clock. Classification is an argmax property, whereas cross-entropy depends on the numerical size of logit gaps. A positive lower bound on a binary margin $z$ controls $\log(1+e^{-z})$ by an exponential tail, but it does not imply uniform geometric decrease under gradient descent. This boundary case is the reason the classifier clock is tied to the loss tail, with training accuracy treated as a coarser event.

Spectral shrinkage alone is weaker than low effective rank. On a smooth stratum the regularizer contributes $-\lambda p s^{p-1}$ to a positive singular value, but a persistent data force can balance this shrinkage and singular-vector rotations can redistribute energy. If the learned rule genuinely requires many comparable directions, the stable rank need not collapse. The representation-clock theorem is therefore stated through a structural gap and a late-time tail condition, with no unconditional rank-collapse claim.

A ReLU network can be locally linear without being globally reducible to a single linear map. If gate switching persists throughout the late-time interval, the frozen-gate reduction is not available on that interval. One may then need to work with a sequence of active regions or with a different nonlinear representation variable. The conditional ReLU corollary applies only after the stated gate-stability assumption is satisfied.

Low effective dimension is also only one ingredient in a generalization argument. It enters the paper through a robustness and covering argument. A low-rank map can collapse the wrong directions, and a high-dimensional representation can generalize if it captures a task-aligned invariant or group structure. The bound in the main text therefore contains a spectral-tail or structural-error term leaving stable rank as one component of the generalization argument.

\paragraph{Why the final claim is temporal}\label{app:time-not-geometry}
A purely geometric statement would say that good generalization appears after the representation reaches a certain structure. The delay itself is a dynamical comparison between two events. The paper compares
\[
T_{\mathrm{cls}}(\varepsilon)
\quad\text{and}\quad
T_{\mathrm{rep}}(\eta).
\]
Under the classifier-tail condition, the first stopping time has logarithmic dependence on $1/\varepsilon$. Under the sharp representation-tail condition, the second has polynomial dependence on $1/\eta$. The grokking interval is the regime
\[
T_{\mathrm{cls}}(\varepsilon)
\ll t \ll
T_{\mathrm{rep}}(\eta),
\]
where the training loss is already small but the structural threshold has not yet been reached.

This is the sense in which continued training after interpolation still matters: it changes the representation geometry, and the convergence law for that change can be slower than the convergence law for the loss tail. Weight decay is important in this mechanism because it can continue to act on the representation after the cross-entropy gradient has become small. In the deep-linear surrogate this action is expressed through a spectral penalty; in nonlinear networks it may act through a more complicated active subsystem.

\paragraph{Length of the intermediate interval}
The temporal-mismatch theorem can be read as a statement about the length of an intermediate training interval. Suppose the fast clock satisfies
\[
T_{\mathrm{cls}}(\varepsilon)
\le
C_0+C_1\log(1/\varepsilon),
\]
while the representation clock satisfies, for small $\eta$,
\[
T_{\mathrm{rep}}(\eta)
\ge
D_0+D_1\eta^{-r},
\qquad r=2\theta-1>0.
\]
Then the interval between fitting and structural convergence has length at least
\[
T_{\mathrm{rep}}(\eta)-T_{\mathrm{cls}}(\varepsilon)
\ge
D_1\eta^{-r}-C_1\log(1/\varepsilon)-(C_0-D_0).
\]
This inequality is a restatement of the two rates, but it makes the grokking interpretation explicit. The observed plateau is explained by the fact that the representation clock can dominate the classifier clock for small structural tolerance.

If the two tolerances are coupled, for example $\eta=\varepsilon^q$ with $q>0$, the comparison becomes
\[
T_{\mathrm{rep}}(\varepsilon^q)-T_{\mathrm{cls}}(\varepsilon)
\ge
D_1\varepsilon^{-qr}-C_1\log(1/\varepsilon)-O(1).
\]
As $\varepsilon\downarrow0$, the polynomial term dominates the logarithmic term. This is the formal version of the statement that the classifier can appear finished while the representation remains dynamically active.

\paragraph{Dependence on constants}
The clock comparison is asymptotic in the target tolerances, but constants matter in finite experiments. A small polynomial constant can make the representation clock look short over a limited training budget, and a large classifier constant can delay interpolation. The theory therefore predicts a mechanism, with numerical thresholds depending on the finite experiment. In experiments, the relevant evidence is not merely that $T_{\mathrm{rep}}>T_{\mathrm{cls}}$ for one chosen pair of thresholds, but that the same ordering persists over a meaningful range of tolerances and diagnostics.

This point also explains why the theory is compatible with runs that do not visibly grok. If the structural tolerance is not reached within the training budget, test accuracy may remain low even after the loss is small. If the structural clock is not much slower than the classifier clock, test improvement may occur without a long plateau. If the task does not require the structural simplification measured by the selected statistic, delayed test accuracy may not align with that statistic. These cases change the empirical appearance but not the mathematical distinction between the two clocks.

\paragraph{Relation to weight decay}
The role of weight decay in the paper is specific. It supplies a continued spectral pressure after the cross-entropy tail has become small. In the deep-linear surrogate, this pressure is visible through the induced Schatten-type penalty. The mechanism does not require the cross-entropy gradient to vanish exactly; it requires the classifier-side objective to become small enough that continued structural motion is better described by the representation clock than by further correction of training labels.

This also avoids a common overstatement. Weight decay is one mathematically tractable source of a slow structural force behind the phenomenon. Other forms of implicit or explicit regularization may produce a similar clock if they continue to simplify the representation after fitting. Such alternatives would change the structural gap and the proof of the slow clock, but not the logical form of the temporal comparison.

The conclusion should therefore be read as a conditional mechanism. Grokking is explained here when the observed delay matches the mathematical separation between a fast classifier clock and a slow representation clock.

\end{document}